\documentclass{article}



\usepackage[preprint]{neurips_2025}



\usepackage[utf8]{inputenc} 
\usepackage[T1]{fontenc}    
\usepackage{hyperref}       
\usepackage{url}            
\usepackage{booktabs}       
\usepackage{amsfonts}       
\usepackage{nicefrac}       
\usepackage{microtype}      
\usepackage{xcolor}         
\usepackage{amsmath,amssymb,amsthm,bm}
\usepackage{enumitem}
\usepackage{thm-restate}

\usepackage[most]{tcolorbox}
\usepackage{graphicx}
\usepackage{wrapfig}
\usepackage{subcaption}
\usepackage{soul}
\usepackage{sidecap}

\newtheorem{theorem}{Theorem}
\newtheorem{corollary}{Corollary}

\DeclareMathOperator{\tr}{tr}
\DeclareMathOperator{\Var}{Var}
\newcommand{\expnum}[2]{#1\mathrm{e}^{-#2}}

\title{Self-Supervised Contrastive Learning \\ is Approximately Supervised Contrastive Learning}

%

\author{%
  Achleshwar Luthra
  \And
  Tianbao Yang
  \And
  Tomer Galanti
  \AND
  \vspace{-1.5em} \\
  \texttt{\{luthra,tianbao-yang,galanti\}@tamu.edu} \\
  Department of Computer Science and Engineering \\
  Texas A\&M University
}

\begin{document}

\maketitle

\begin{abstract}
Despite its empirical success, the theoretical foundations of self-supervised contrastive learning (CL) are not yet fully established. In this work, we address this gap by showing that standard CL objectives implicitly approximate a supervised variant we call the negatives-only supervised contrastive loss (NSCL), which excludes same-class contrasts. We prove that the gap between the CL and NSCL losses vanishes as the number of semantic classes increases, under a bound that is both label-agnostic and architecture-independent.

We characterize the geometric structure of the global minimizers of the NSCL loss: the learned representations exhibit augmentation collapse, within-class collapse, and class centers that form a simplex equiangular tight frame. We further introduce a new bound on the few-shot error of linear-probing. This bound depends on two measures of feature variability—within-class dispersion and variation along the line between class centers. We show that directional variation dominates the bound and that the within-class dispersion's effect diminishes as the number of labeled samples increases. These properties enable CL and NSCL-trained representations to support accurate few-shot label recovery using simple linear probes.

Finally, we empirically validate our theoretical findings: the gap between CL and NSCL losses decays at a rate of $\mathcal{O}(\frac{1}{\#\text{classes}})$; the two losses are highly correlated; minimizing the CL loss implicitly brings the NSCL loss close to the value achieved by direct minimization; and the proposed few-shot error bound provides a tight estimate of probing performance in practice. The code and project page of the paper are available at [\href{https://github.com/DLFundamentals/understanding-ssl}{code}, \href{https://dlfundamentals.github.io/ssl-is-approximately-sl/}{project page}].
\end{abstract}

\section{Introduction}

Unsupervised representation learning refers to a class of algorithmic approaches designed to discover meaningful representations of complex data without relying on explicit supervision signals. The goal is to learn representations that preserve and expose semantic information, allowing them to be effectively leveraged in downstream supervised tasks. In recent years, these methods have proven to be effective in pre-training models on unlabeled data, enabling strong generalization on downstream computer vision tasks~\citep{hjelm2018learning,oord2019representationlearningcontrastivepredictive,NEURIPS2019_ddf35421,NEURIPS2020_70feb62b,pmlr-v119-chen20j,chen2020improvedbaselinesmomentumcontrastive,Chen_2021_ICCV,hénaff2020dataefficientimagerecognitioncontrastive,li2021prototypical,misra2019selfsupervisedlearningpretextinvariantrepresentations,10.1007/978-3-030-58621-8_45,NEURIPS2020_4c2e5eaa,zbontar2021barlow,bardes2022vicreg,he2022masked,oquab2024dinov2learningrobustvisual} and in natural language processing~\citep{fang2020certcontrastiveselfsupervisedlearning,NEURIPS2020_1457c0d6,gao-etal-2021-simcse,su-etal-2022-tacl,reimers-gurevych-2019-sentence,zhang-etal-2021-pairwise,giorgi-etal-2021-declutr,yan-etal-2021-consert}. 

\begin{figure}[t]
    \centering
\begin{tabular}{c@{}c@{}c@{}c@{}c@{}c}
    \includegraphics[width=0.15\linewidth]{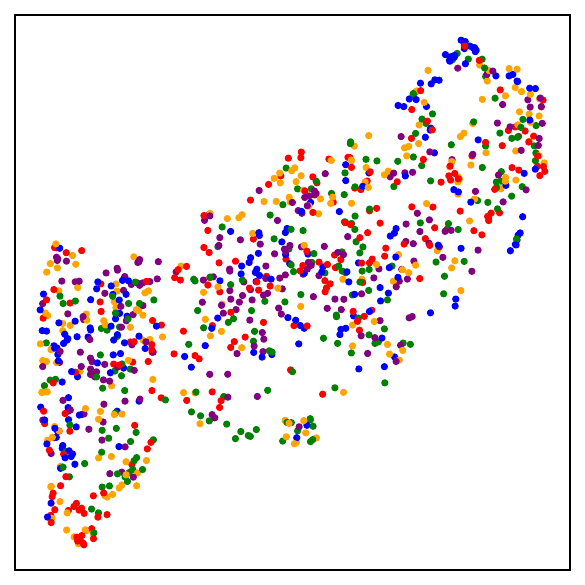} &
    \includegraphics[width=0.15\linewidth]{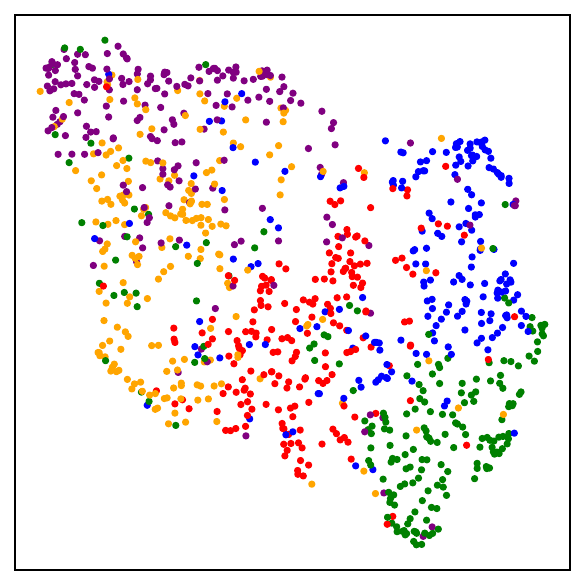} &
    \includegraphics[width=0.15\linewidth]{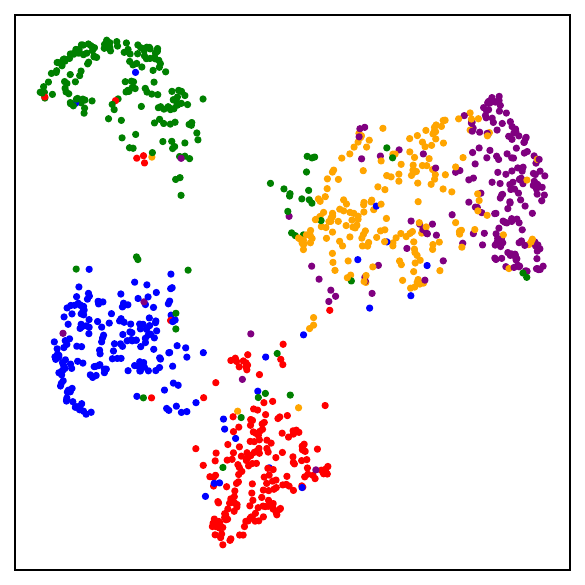} &
    \includegraphics[width=0.15\linewidth]{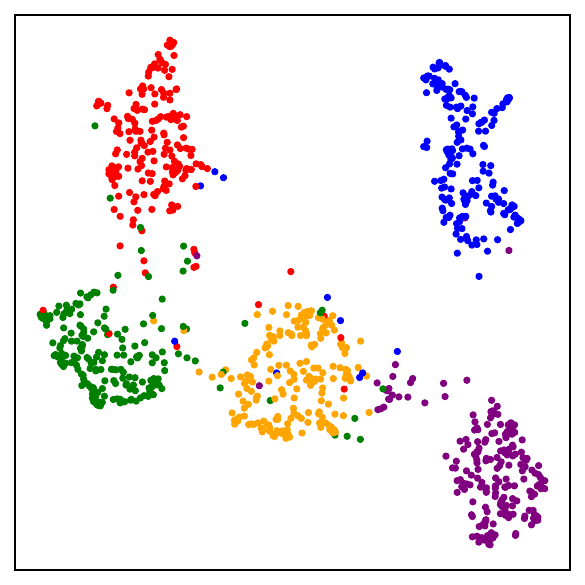} &
    \includegraphics[width=0.15\linewidth]{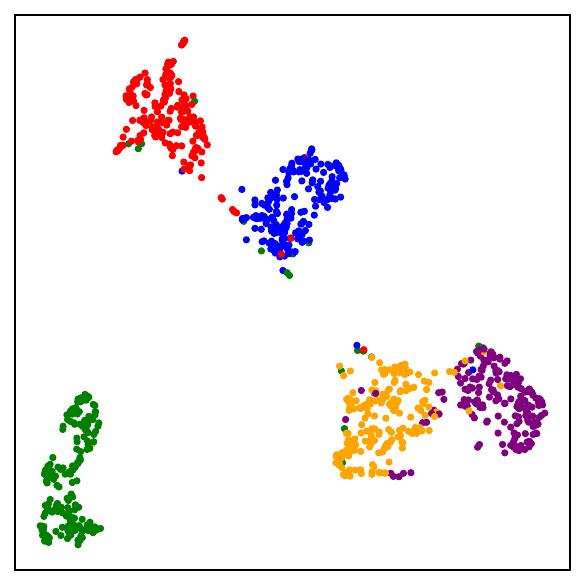} &
    \includegraphics[width=0.15\linewidth]{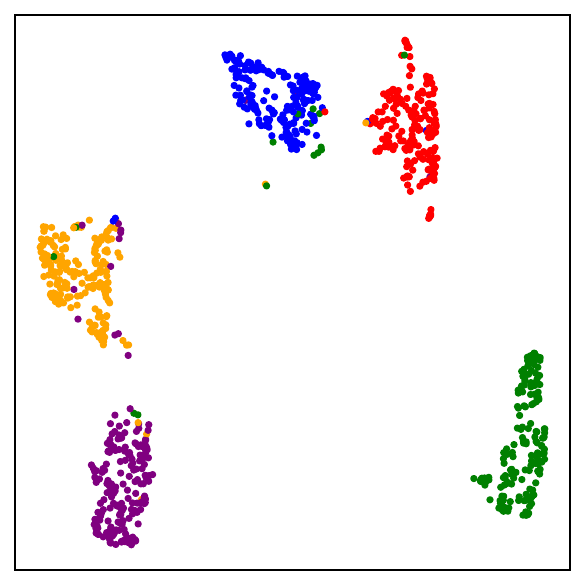} \\
    \includegraphics[width=0.15\linewidth]{figures/umap_with_epochs/umap_imagenet_random.pdf} &
    \includegraphics[width=0.15\linewidth]{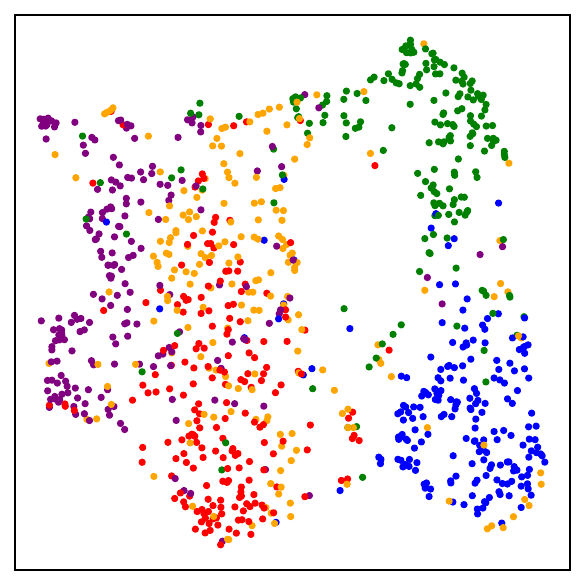} &
    \includegraphics[width=0.15\linewidth]{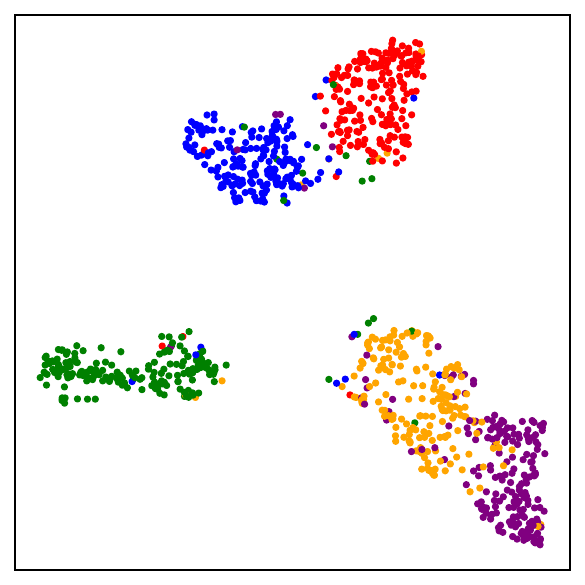} &
    \includegraphics[width=0.15\linewidth]{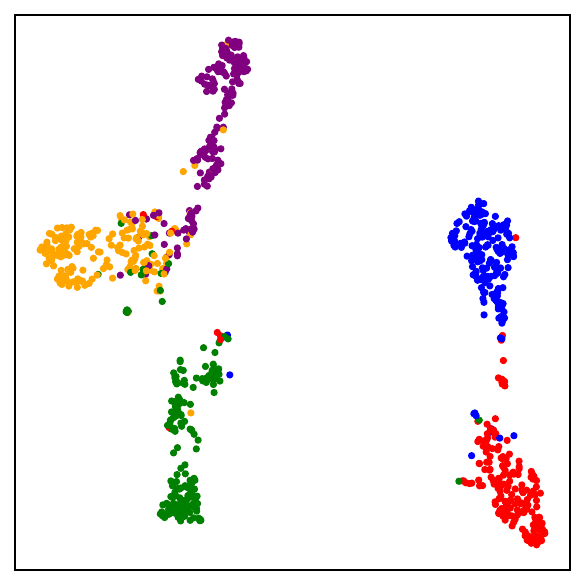} &
    \includegraphics[width=0.15\linewidth]{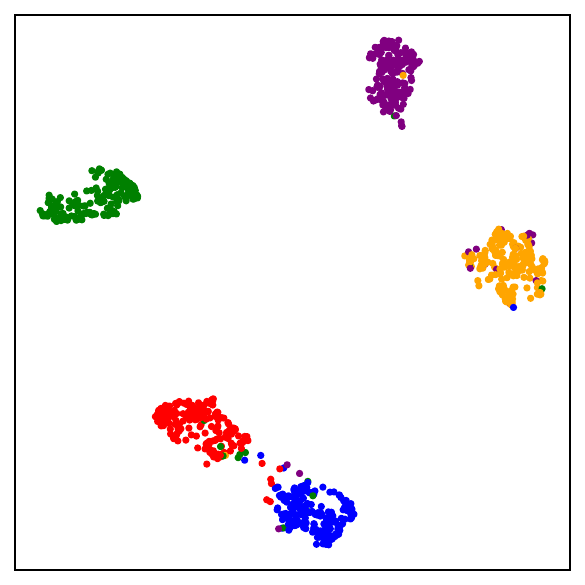} &
    \includegraphics[width=0.15\linewidth]{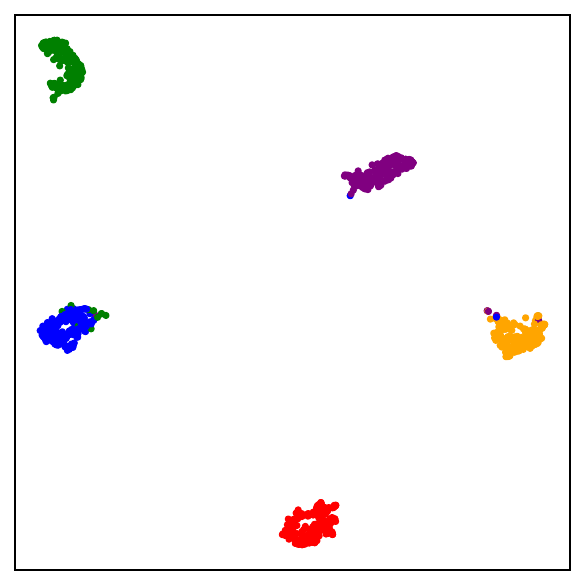} \\
    {\small Initialization} & {\small Epoch 10} & {\small Epoch 100} &{\small Epoch 500} & {\small Epoch 1000} & {\small Epoch 2000}
\end{tabular}
    \caption{{\em DCL forms semantic clusters without label supervision, while NSCL yields tighter, more separable clusters, despite not explicitly pulling same-class samples together.} We plot UMAP visualizations for {\bf (top)} decoupled contrastive learning (DCL)~\citep{10.1007/978-3-031-19809-0_38} and {\bf (bottom)} negatives-only supervised contrastive learning (NSCL) training on mini-ImageNet. See Appendix~\ref{app:experiments} for details.}
    \label{fig:umap-simclr-epochs}
\end{figure}

One of the most successful paradigms for unsupervised learning is self-supervised contrastive learning (CL), where models are trained to distinguish different augmented views of the same image (positives) from views of other images (negatives), typically by minimizing the InfoNCE loss~\citep{pmlr-v9-gutmann10a,oord2019representationlearningcontrastivepredictive}. Notable methods in this category include SimCLR~\citep{pmlr-v119-chen20j}, MoCo~\citep{He_2020_CVPR,chen2020improvedbaselinesmomentumcontrastive,Chen_2021_ICCV}, and CPC~\citep{oord2019representationlearningcontrastivepredictive}. These approaches have led to representations that generalize remarkably well to downstream tasks, often competing with or even surpassing supervised learning. For example, an ImageNet-1M pre-trained MoCo achieves 81.5 \(\mathrm{AP}_{50}\) on PASCAL VOC~\citep{journals/ijcv/EveringhamGWWZ10}, while its supervised counterpart achieves 81.3 \(\mathrm{AP}_{50}\). 


Although the representations learned in supervised classification problems are relatively well understood--thanks to characterizations such as neural collapse~\cite{doi:10.1073/pnas.2015509117,han2022neural}, which occurs under a variety of loss functions including cross-entropy~\citep{LU2022224,10.5555/3692070.3692468,pmlr-v139-graf21a}, mean squared error (MSE)~\citep{han2022neural,tirer2022extendedunconstrainedfeaturesmodel,zhou2022optimizationlandscapeneuralcollapse}, and supervised contrastive loss~\citep{pmlr-v139-graf21a,pmlr-v162-awasthi22b,gill2023engineeringneuralcollapsegeometry,kini2024symmetric}--the theoretical understanding of SSL remains limited. Surprisingly, despite the absence of labels, SSL-trained models often produce representations that closely resemble those learned via supervised training: they tend to cluster nicely with respect to semantic classes ~\citep{amir2021deep,shaul2023reverse,weng2025clusteringpropertiesselfsupervisedlearning} (see also Fig.~\ref{fig:umap-simclr-epochs}) and support downstream tasks such as linear classification and clustering~\citep{pmlr-v119-chen20j,He_2020_CVPR,chen2020improvedbaselinesmomentumcontrastive,Chen_2021_ICCV,oord2019representationlearningcontrastivepredictive}. This raises the following question:
\begin{tcolorbox}[colback=blue!5!white, colframe=blue!10!black, arc=1pt]
\centering
\textbf{\em How does self-supervised contrastive learning learn representations similar to those learned through supervised learning, despite the lack of explicit supervision?}
\end{tcolorbox}

{\bf Contributions.\enspace} We provide a theoretical framework that connects CL to it supervised counterpart. Our key contributions are:

\begin{itemize}[leftmargin=10pt]
\item {\bf Duality between self-supervised and supervised CL:} In Thm.~\ref{thm:ssl_scl} we formally establish a connection between decoupled contrastive learning (DCL)~\citep{10.1007/978-3-031-19809-0_38} and a supervised variant we call negatives-only supervised contrastive loss (NSCL), in which negative pairs are drawn from different classes. Our main insight is that, although DCL does not explicitly exclude same-class samples from the loss denominator, the probability of treating them as negatives vanishes as the number of classes increases. We prove that the gap between the two losses shrinks with more classes. Unlike prior analyses~\citep{arora2019theoreticalanalysiscontrastiveunsupervised,pmlr-v162-saunshi22a,tosh2021contrastive,pmlr-v162-awasthi22b,pmlr-v195-parulekar23a}, our result holds without assumptions on model architecture, data distribution, or augmentations. Figs.~\ref{fig:losses_during_training_simclr}–\ref{fig:loss_with_C} show that the two losses are highly correlated, their gap decreases as the number of classes increases, and minimizing the DCL loss implicitly minimizes the NSCL loss.

\item {\bf Estimating the few-shot error on downstream tasks:} Prop.~\ref{prop:error} introduces a bound on the $m$-shot classification error based on geometric properties of the learned representation. Specifically, the bound implies that lower class-distance-normalized-variance (CDNV) (see~\citep{galanti2022on} and Sec.~\ref{sec:good_reps}) and directional CDNV (see Sec.~\ref{sec:good_reps}) lead to lower $m$-shot error, with the impact of the regular CDNV diminishing as the number $m$ of per-class labeled examples increases. Empirically, we verify this behavior for DCL-trained models (Fig.~\ref{fig:cdnv}) and find the bound to be tight in practice (Fig.~\ref{fig:few_shot}).
\item {\bf Characterizing the global minima of the NSCL loss:} In Thm.~\ref{thm:inv}, we show that any global minimizer of the NSCL loss satisfies: {\bf (i)} {\em Augmentation collapse}—all augmented views of a sample map to the same point; {\bf (ii)} {\em Within-class collapse}—all samples from the same class share a representation; and {\bf (iii)} {\em Simplex ETF structure}—class centers form a maximally separated, symmetric configuration. These solutions coincide with the optimal solutions of certain supervised classification settings, including those using cross-entropy loss~\citep{LU2022224,10.5555/3692070.3692468,pmlr-v139-graf21a}, mean squared error (MSE)~\citep{han2022neural,tirer2022extendedunconstrainedfeaturesmodel,zhou2022optimizationlandscapeneuralcollapse}, and supervised contrastive loss~\citep{pmlr-v139-graf21a}. Combined with Prop.~\ref{prop:error}, these properties imply that label recovery via linear probing is guaranteed with few labels.
\end{itemize}


\section{Related Work}\label{sec:related}

{\bf Theoretical analyses of CL.\enspace} A growing body of work aims to explain the effectiveness of CL. Early studies attributed CL’s success to maximizing mutual information between augmentations of the same image~\citep{NEURIPS2019_ddf35421}, but later work showed that enforcing strict bounds on mutual information can hurt performance~\citep{pmlr-v108-mcallester20a,Tschannen2020On}. Another influential line of work focuses on the alignment and uniformity of learned representations~\citep{wang2020understanding,wang2021understanding,chen2021intriguing}. For example,~\citep{wang2020understanding} showed that same-sample augmentations are aligned, while negatives are uniformly distributed on a sphere. \cite{chen2021intriguing} extended this to a wider family of losses, showing how alignment and uniformity can be modulated by a hyperparameter.

While these are important properties, they do not characterize how CL algorithms organize samples from different classes in the embedding space. As a result, several papers~\citep{arora2019theoreticalanalysiscontrastiveunsupervised,tosh2021contrastive,zimmermann2021contrastive,pmlr-v151-ash22a,NEURIPS2021_2dace78f,NEURIPS2021_27debb43,10.5555/3600270.3602219,pmlr-v162-shen22d,wang2022chaos,pmlr-v162-awasthi22b,pmlr-v162-bao22e} studied CL's ability to recover meaningful clusters and latent variables in the data. However, these results often rely on strong assumptions about the data and augmentations, such as assuming augmentations of the same sample are conditionally independent given their cluster identity (see, e.g.,~\citep{arora2019theoreticalanalysiscontrastiveunsupervised,pmlr-v162-saunshi22a,tosh2021contrastive,pmlr-v162-awasthi22b}). In order to avoid such assumptions,~\cite{haochen2023a} studies function classes that induce a similar bias towards preserving clusters without any assumption on the connectedness of augmentation sets. However, their analysis is focused on minimizing a spectral contrastive loss which serves as a surrogate for the more practically used InfoNCE loss. \cite{pmlr-v195-parulekar23a} instead proves that InfoNCE learns cluster‑preserving embeddings by capping the representation class’s capacity so that any function that tries to carve a semantic cluster must also split an image from its own augmentation set, making cluster integrity the loss‑minimizing choice.

Another way to understand SSL is by examining its connection to supervised learning. For example, \cite{balestriero2024the} showed that, in linear models, certain SSL objectives resembling VicReg are equivalent to supervised quadratic losses. Building on this perspective, we establish a bound between the contrastive loss and a supervised variant of it. Unlike previous work, our bound is both architecture-independent and label-agnostic.

Other theoretical studies have analyzed SSL from different angles, such as feature learning dynamics in linear models and two-layer networks~\citep{NEURIPS2022_7b5c9cc0,10.5555/3648699.3649029,pmlr-v139-wen21c,tian2023understanding}, the role of augmentations~\citep{NEURIPS2020_4c2e5eaa,feigin2025theoreticalcharacterizationoptimaldata}, the projection head~\citep{gupta2022understandingimprovingroleprojection,gui2023unravelingprojectionheadscontrastive,xue2024investigating,ouyang2025projection}, CL's sample complexity~\citep{alon2024optimal} and how to relax the dependence on large mini-batches~\cite{DBLP:conf/icml/YuanWQDZZY22}. Other work explores connections between self-supervised contrastive and non-contrastive learning~\citep{wei2021theoretical,balestriero2022contrastive,NEURIPS2021_02e656ad,garrido2023on,shwartzziv2023an}.

{\bf Neural collapse.\enspace} Neural collapse (NC)~\citep{doi:10.1073/pnas.2015509117,han2022neural} is a phenomenon that occurs in the final stages in training of overparameterized networks for classification. In this regime, we observe: (i) {\em within-class collapse}, where the embeddings of samples within each class converge to a single vector; (ii) {\em class separation}, where these class means spread out and often form a Simplex Equiangular Tight Frame (ETF); and a form of (iii) {\em alignment} between the feature space and the classifier’s weights. 


To understand the emergence of neural collapse, several papers studied the emergence of these properties in different learning settings. In~\citep{mixon2020neuralcollapseunconstrainedfeatures,doi:10.1073/pnas.2103091118} they introduced the ``unconstrained features model'' or ``layer peeled model''—where each sample’s embedding is treated as a free parameter in \(\mathbb{R}^d\). Under such relaxations, research has demonstrated that neural collapse is a characteristic of global minimizers for many commonly used supervised loss functions~\citep{NEURIPS2022_cdce17de}, including cross-entropy~\citep{LU2022224,10.5555/3692070.3692468,pmlr-v139-graf21a}, mean squared error (MSE)~\citep{han2022neural,tirer2022extendedunconstrainedfeaturesmodel,zhou2022optimizationlandscapeneuralcollapse}, and supervised contrastive loss~\citep{pmlr-v139-graf21a,pmlr-v162-awasthi22b,gill2023engineeringneuralcollapsegeometry,kini2024symmetric}. In Thm.~\ref{thm:inv} we use a similar approach in order to characterize the global minima of the CL and NSCL losses, connecting them to downstream performance using tools from~\citep{galanti2022on,galanti2022improved,galanti2023generalizationboundsfewshottransfer}.

\section{Theoretical Analysis}\label{sec:theory}

In this section, we explore {\em why} CL—despite being agnostic to class labels—tends to organize data by class and support few-shot transfer. We first relate the self-supervised loss to a label-aware loss function, then describe the geometric conditions under which a representation supports few-shot transfer, and finally show how this loss function induces these properties.

{\bf Setup.\enspace} We consider training an embedding function $f \in \mathcal{F} \subset \{ f \mid f: \mathcal{X} \to \mathbb{R}^d\}$ via self-supervised contrastive learning, using a dataset $S = \{(x_{i}, y_i)\}^{N}_{i=1} \subset \mathcal{X} \times [C]$, where the algorithm only sees inputs $x_{i} \in \mathcal{X}$ (e.g., images), but not the corresponding class labels $y_i \in [C]$. We assume that each class contains at most $n_{\max}$ samples. We wish to learn a ``meaningful'' (see Sec.~\ref{sec:good_reps}) function $f: \mathcal{X} \to  \mathbb{R}^d$ that maps samples to $d$-dimensional embedding vectors. 

Concretely, for each sample $x_{i}$, we also construct $K$ \emph{augmented} versions $x^l_i=\alpha_l(x_{i})$ (via data augmentations $\alpha_1,\dots,\alpha_K$, e.g.\ identity, random cropping, jitter, etc.). Then, we define $z^{l}_{i} = f(\alpha_l(x_{i}))$ as the embeddings of the augmented samples.

We focus on the global decoupled contrastive loss (DCL)~\cite{10.1007/978-3-031-19809-0_38}, which is given by
\begin{small}
\begin{equation}\label{eq:L^DCL}
\mathcal{L}^{\mathrm{DCL}}(f) ~=~ -\frac{1}{K^2N}\sum^{K}_{l_1,l_2=1}\sum_{i=1}^N
\log \Bigg(
\frac{\exp(\mathrm{sim}(z^{l_1}_i, z^{l_2}_i))}
     {\sum^{K}_{l_3=1}\sum_{j\in [N]\setminus \{i\}} \exp (\mathrm{sim}(z^{l_1}_i, z^{l_3}_j))}
\Bigg),
\end{equation}
\end{small}
where $\mathrm{sim}(\cdot,\cdot):\mathbb{R}^d \times \mathbb{R}^{d} \to \mathbb{R}$ is the cosine similarity function, which is defined as follows: $\mathrm{sim}(x_1,x_2) = \tfrac{\langle x_1,x_2\rangle}{\|x_1\|_2 \cdot \|x_2\|_2}$. We focus on the DCL loss since its a slight improvement to the original global CL loss~\cite{oord2019representationlearningcontrastivepredictive,pmlr-v119-chen20j,He_2020_CVPR,DBLP:conf/icml/YuanWQDZZY22,pmlr-v119-chen20j}, where $\sum^{N}_{j=1}$ is replaced with $\sum^{N}_{j\neq i}$ in the denominator. While here we focus on the global ~\cite{DBLP:conf/icml/YuanWQDZZY22,wang2020understanding}, for completeness, in Appendix~\ref{app:batches} we extend Thm.~\ref{thm:ssl_scl} to mini-batch CL losses as done in practice. 

Despite the lack of label supervision, contrastive models often learn class-aware features. This is surprising: {\bf Why should a label-agnostic loss promote semantic structure?}




\subsection{Self-Supervised vs. Supervised Contrastive Learning}\label{sec:ssl_vs_scl}

To answer this, we compare DCL with a supervised contrastive loss that removes same-class contrasts. Namely, we consider the {\em negatives-only supervised contrastive loss} (NSCL), $\mathcal{L}^{\mathrm{NSCL}}(f)$, which is defined the same as DCL in \eqref{eq:L^DCL} but with $\sum_{j\in [N]\setminus \{i\}}$ in the denominator replaced by $\sum_{\substack{j: y_j \neq y_i}}$.





When comparing the two losses, the unsupervised denominator includes at most extra  \(K(n_{\max} - 1)\) terms corresponding to data from the same class. Therefore, if \(n_{\max}\) is small relative to the number of classes \(C\) (i.e., when \(C\) is large), the extra \(K(n_{\max} - 1)\) terms are expected to be negligible compared to the total number of negatives $\geq K(N-n_{\max})$, and the two objectives, $\mathcal{L}^{\mathrm{DCL}}(f)$ and $\mathcal{L}^{\mathrm{NSCL}}(f)$, become nearly equivalent. For instance, assume for simplicity that $\exp\left(\mathrm{sim}(z_i, z_j)\right) \approx \gamma$ for all \(i\neq j\) (i.e., the similarity values are approximately constant). Then the unsupervised denominator is approximately $K(N-1) \gamma$ while the supervised denominator (excluding same-class data) becomes approximately $K(N-n_{\max})\gamma$.
Overall, we obtain that $\vert \mathcal{L}^{\mathrm{DCL}}(f) - \mathcal{L}^{\mathrm{NSCL}}(f) \vert \approx \log\left( 1+ \tfrac{n_{\max}}{N-n_{\max}} \right) \leq \tfrac{n_{\max}}{N-n_{\max}}$. Hence, the gap between the two losses shrinks as \(C\) grows. 

In essence, this example shows that if $\exp(\mathrm{sim}(z^{l_1}_i,z^{l_2}_j))$ is nearly constant, removing same-class negatives has little effect on the loss. The following theorem shows that one can achieve a slightly worse bound on $\vert \mathcal{L}^{\mathrm{DCL}}(f) - \mathcal{L}^{\mathrm{NSCL}}(f) \vert$ without assuming that \(\exp(\mathrm{sim}(z^{l_1}_i,z^{l_2}_j))\) is constant.
\begin{restatable}{theorem}{approx}
\label{thm:ssl_scl}
Let \(S = \{(x_{i}, y_i)\}_{i=1}^{N} \subset \mathcal{X} \times [C]\) be a labeled dataset with \(C\) classes, each containing at most \(n_{\max}\) distinct samples. Let \(f:\mathcal{X} \to \mathbb{R}^d\) be any function. Then, we have
\begin{small}
\[
\mathcal{L}^{\mathrm{NSCL}}(f) ~\leq~ \mathcal{L}^{\mathrm{DCL}}(f)
 ~\leq~ \mathcal{L}^{\mathrm{NSCL}}(f) + \log\left(1+\tfrac{n_{\max} \mathrm{e}^2}{N-n_{\max}}\right)  ~\leq~ \mathcal{L}^{\mathrm{NSCL}}(f) + \tfrac{n_{\max} \mathrm{e}^2}{N-n_{\max}},
\]
\end{small}
where $\mathrm{e}$ denotes Euler's constant. For a balanced classification problem, $\tfrac{n_{\max}}{N-n_{\max}}= \tfrac{1}{C-1}$. 
\end{restatable}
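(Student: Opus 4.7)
The plan is to fix a triple $(l_1,l_2,i)$ and analyze the per-term contribution, since the numerators of $\mathcal{L}^{\mathrm{DCL}}$ and $\mathcal{L}^{\mathrm{NSCL}}$ agree and only the denominators differ. Writing $D^{\mathrm{DCL}}_{l_1,i} = \sum_{l_3}\sum_{j\neq i}\exp(\mathrm{sim}(z^{l_1}_i,z^{l_3}_j))$ and $D^{\mathrm{NSCL}}_{l_1,i} = \sum_{l_3}\sum_{j:y_j\neq y_i}\exp(\mathrm{sim}(z^{l_1}_i,z^{l_3}_j))$, I would note that these denominators depend on $(l_1,i)$ but not on $l_2$. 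Since $D^{\mathrm{DCL}}_{l_1,i}$ simply adds the extra ``same-class'' terms $(l_3,j)$ with $y_j=y_i$ and $j\neq i$ to $D^{\mathrm{NSCL}}_{l_1,i}$, we obtain $\mathcal{L}^{\mathrm{DCL}}(f)-\mathcal{L}^{\mathrm{NSCL}}(f) = \tfrac{1}{K^2N}\sum_{l_1,l_2,i}\log\!\bigl(D^{\mathrm{DCL}}_{l_1,i}/D^{\mathrm{NSCL}}_{l_1,i}\bigr)$. As each summand is nonnegative, the lower bound $\mathcal{L}^{\mathrm{NSCL}}(f)\le \mathcal{L}^{\mathrm{DCL}}(f)$ is immediate.

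For the upper bound I would exploit the boundedness of cosine similarity: $\mathrm{sim}(\cdot,\cdot)\in[-1,1]$, so every exponential term lies in $[\mathrm{e}^{-1},\mathrm{e}]$. The ``extra'' same-class contribution to $D^{\mathrm{DCL}}_{l_1,i}$ consists of at most $K(n_{\max}-1)$ positive terms, each $\le \mathrm{e}$, so its total is at most $K(n_{\max}-1)\mathrm{e}$. Meanwhile $D^{\mathrm{NSCL}}_{l_1,i}$ contains at least $K(N-n_{\max})$ terms, each $\ge \mathrm{e}^{-1}$, giving $D^{\mathrm{NSCL}}_{l_1,i}\ge K(N-n_{\max})\mathrm{e}^{-1}$. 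Dividing yields $D^{\mathrm{DCL}}_{l_1,i}/D^{\mathrm{NSCL}}_{l_1,i} \le 1 + \tfrac{(n_{\max}-1)\mathrm{e}^2}{N-n_{\max}} \le 1 + \tfrac{n_{\max}\mathrm{e}^2}{N-n_{\max}}$.

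Taking logarithms, averaging this uniform per-term bound over the $K^2N$ indices, and then applying the elementary estimate $\log(1+x)\le x$ for $x\ge 0$ yields the stated chain of inequalities. The bound is label- and architecture-agnostic because the only properties used are (i) the $[-1,1]$ range of cosine similarity and (ii) the class-size assumption $n_i\le n_{\max}$. The main technical care lies in the bookkeeping---verifying that denominators depend only on $(l_1,i)$ and that the ``extra'' and ``kept'' index sets are counted correctly---rather than in any sharp estimate; the factor $\mathrm{e}^2$ is precisely the slack introduced by using worst-case per-term bounds of $\mathrm{e}^{\pm 1}$ to avoid any assumption that $\exp(\mathrm{sim}(z^{l_1}_i,z^{l_3}_j))$ is nearly constant.
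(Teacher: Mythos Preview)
Your proposal is correct and follows essentially the same approach as the paper: both compute the per-anchor difference as $\log\bigl(1+\tfrac{\text{same-class terms}}{\text{negative terms}}\bigr)$, use $\mathrm{sim}\in[-1,1]$ to bound each exponential in $[\mathrm{e}^{-1},\mathrm{e}]$, and then count terms. The only cosmetic difference is that the paper routes the upper bound through the (non-decoupled) CL loss, picking up $n_{\max}$ same-class terms including the self term, whereas you work directly with DCL and obtain the slightly sharper $(n_{\max}-1)$ before relaxing to $n_{\max}$; both yield the stated inequality.
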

The above theorem gives a justification for why contrastive SSL yields class-aware features without ever seeing the labels. Interestingly, since the right-hand side depends only on the number of samples $N$ and the maximal number of samples per class $n_{\max}$—not on the specific assignment \(Y\)—the inequality is label-agnostic. Namely, for any labeling $Y$ with $n_{\max}$ samples per class, we have, 
\begin{small}
\begin{equation}\label{eq:max_Y}
0 ~\le~ \mathcal{L}^{\mathrm{DCL}}(f) -\mathcal{L}^{\mathrm{NSCL}}_{(Y)}(f)
~\le~
\log(1+\tfrac{n_{\max}\mathrm{e}^2}{N-n_{\max} }),
\end{equation}
\end{small}
where \(\mathcal{L}^{\mathrm{NSCL}}_{(Y)}(f)\) is the NSCL loss under that labeling. In particular, \(C\to\infty\) the term \(\log(1+\tfrac{n_{\max} \mathrm{e}^2}{N-n_{\max}})\) vanishes, the probability of drawing a same-class contrasting drops to zero, and DCL effectively collapses onto the NSCL for all labelings simultaneously. 

\subsection{Characterizing Good Representations for Few-Shot Learning}\label{sec:good_reps}

The primary goal of SSL is to learn representations that are easily adaptable to downstream tasks. We now describe the conditions under which an SSL-trained representation $f$ can be adapted to a specific downstream task. For this purpose, we consider the {\em recoverability} of labels from a learned representation $f\colon \mathcal{X}\to\mathbb{R}^d$. We call $f$ {\em ``good''} if it supports accurate downstream classification with only a few labeled samples per class.
\begin{wrapfigure}[17]{r}{0.55\linewidth}
  \centering
  \begin{tabular}{c@{\hspace{6pt}}c}
    \includegraphics[width=0.26\textwidth]{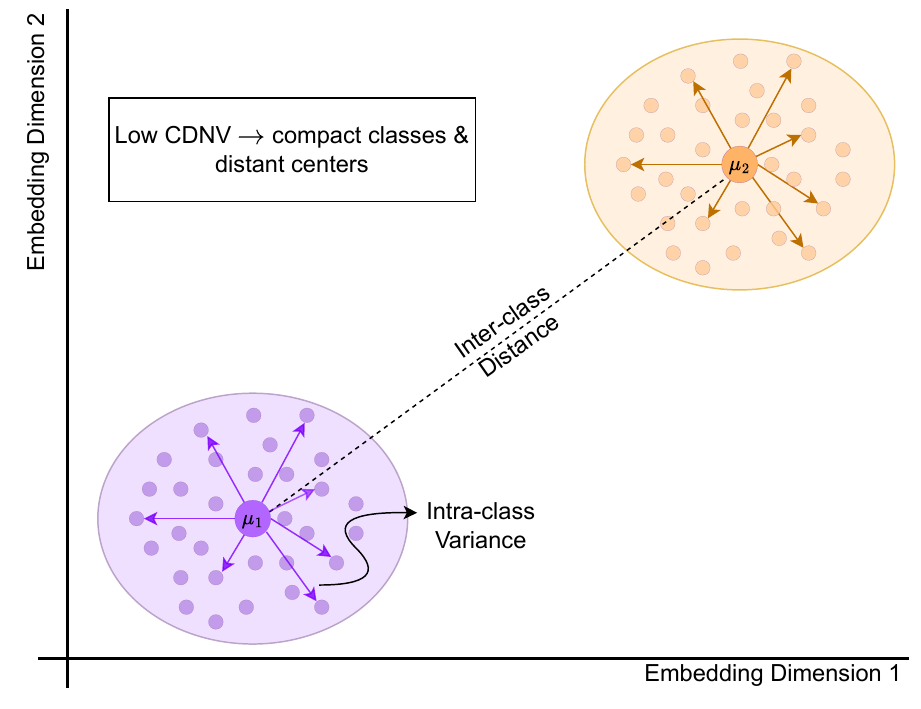} &
    \includegraphics[width=0.26\textwidth]{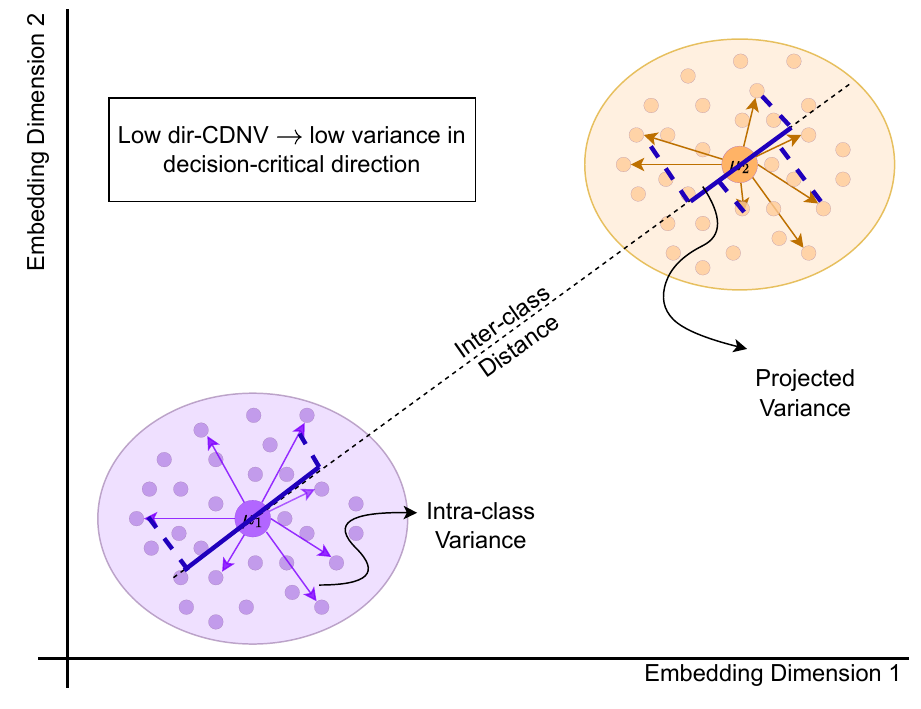} \\
    {\small \textbf{(a)} CDNV} & {\small \textbf{(b)} dir-CDNV}
  \end{tabular}
  \caption{{\bf Illustration of CDNV and directional CDNV.} CDNV compares how tightly samples within a class cluster to how far apart class centers are; lower values indicate tighter clusters and larger gaps between classes. The latter measures variability only along the line connecting two class centers, highlighting the component most relevant for distinguishing those classes.}
  \label{fig:cdnv_illustration}
\end{wrapfigure}

We formalize this via the expected $m$-shot classification error. Suppose we have a set of class-conditional distributions \(D_1, \dots, D_{C'}\) and define \(D = \frac{1}{C'}\sum^{C'}_{c=1}D_c\) as their uniform mixture. For example, $D_1,\dots,D_{C'}$ can represent a subset of $C'\leq C$ classes from the pre-training task, where all $D_i$ are either training datasets or data distributions corresponding to their respective classes. Given an $m$-shot training set $\widehat{S}_i \sim D_i^m$ for each class $i$, we define: $\text{err}_{m,D}(f) := \mathbb{E}_{\widehat{S}_1,\dots \widehat{S}_{C'}}[\text{err}_D(h_{f,\widehat{S}})]$,
where \(h_{f,\widehat{S}}\) is a classifier trained using only the small labeled support $\widehat{S}=\cup^{C'}_{c=1} \hat{S}_c$ and \(\text{err}_D(h)\) is the probability that $h$ misclassifies a random test point drawn from the task distribution $D$. Specifically, we denote linear probing (LP) and nearest-class-center classification (NCCC) errors by $\textnormal{err}^{\textnormal{LP}}_{m,D}(f)$ and $\textnormal{err}^{\textnormal{NCC}}_{m,D}(f)$ which refer to $\text{err}_{m,D}(f)$ with $h_{f,\widehat{S}}$ being a linear classifier that minimizes $\textnormal{err}_{\widehat{S}}(h)$ and the nearest-class-center classifier $\arg\min_{c \in [C']}\|f(x) - \mu_f(\widehat{S}_c)\|_2$.

Prior work~\citep{galanti2022on,galanti2022improved,galanti2023generalizationboundsfewshottransfer} shows that this error can be bounded in terms of how {\em clustered} the representations are. Specifically, for each class $i$, let \(\mu_i = \mathbb{E}_{x \sim D_i}[f(x)]\) and $\sigma^2_{i} = \mathbb{E}_{x\sim D_i}\bigl[\|f(x)-\mu_i\|^2_2\bigr]$ denote the mean and variance of the class embeddings, and let $V_f(D_i,D_j) = \sigma^2_i/\|\mu_i-\mu_j\|^2$ be the class-distance-normalized variance~\cite{galanti2022on,galanti2022improved,galanti2023generalizationboundsfewshottransfer} between classes $i$ and $j$. 

These works (see, e.g., Prop.~7 in~\cite{galanti2023generalizationboundsfewshottransfer}) showed that 
\begin{small}
\begin{equation}\label{eq:old_galanti}
\text{err}^{\mathrm{LP}}_{m,D}(f) ~\leq~ \text{err}^{\mathrm{NCC}}_{m,D}(f) ~\lesssim~ (C'-1)(1+\tfrac{1}{m})\text{Avg}_{i\neq j}[V_f(D_i,D_j)].
\end{equation}
\end{small}
In supervised learning, representations tend to become tightly clustered, causing CDNV to decrease, and thus reducing few-shot error~\citep{doi:10.1073/pnas.2015509117,galanti2022on,galanti2023comparative,zhou2022are}.

However, in the self-supervised case, labels are not available during training. As a result, there is no mechanism to explicitly encourage intra-class similarity, and thus we cannot generally expect that $V_f=\text{Avg}_{i\neq j}[V_f(D_i,D_j)]$ will be small.

To better capture settings where label information is implicit, we introduce a refinement of~\eqref{eq:old_galanti} that prioritizes {\em directional variability} (see Fig.~\ref{fig:cdnv_illustration} for visual illustration). For each pair $i,j$, let: $\sigma_{ij}^{2}=\Var_{x\sim D_i}\bigl[\langle f(x)-\mu_i,u_{ij}\rangle\bigr]$, where $u_{ij} = \frac{\mu_i-\mu_j}{\lVert\mu_i-\mu_j\rVert_2}$. Then the {\em directional} CDNV, $\tilde{V}_f=\textnormal{Avg}_{i\neq j}[\tilde{V}_f(D_i,D_j)]$, is defined as: $\tilde{V}_{f}(D_i,D_j)=\sigma_{ij}^{2}/\|\mu_i-\mu_j\|_2^{2}$, that measures the variation along the line between the class centers. This leads to the following stronger bound:
\begin{restatable}{proposition}{error}\label{prop:error}
Let $C'\geq 2$ and $m\geq 10$ be integers. Fix a feature map \(f:\mathcal X\to\mathbb R^{d}\) and class-conditional distributions \(D_{1},\dots,D_{C'}\) over \(\mathcal X\). Let $V^s_f = \textnormal{Avg}_{i\neq j}[V_f(D_i,D_j)]$. We have:
\begin{small}
\begin{equation}\label{eq:error_bound}
\mathrm{err}^{\mathrm{LP}}_{m,D}(f)
~\le~
\mathrm{err}^{\mathrm{NCC}}_{m,D}(f)
~\leq~
(C'-1)\left[8\tilde{V}_f + \tfrac{8}{\sqrt{m}}V^s_f + \tfrac{8}{\sqrt{m}}V_f+\tfrac{4}{m}V_f \right].
\end{equation}
\end{small}
\end{restatable}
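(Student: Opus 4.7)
The plan is to refine the argument behind~\eqref{eq:old_galanti} by separating the test-point fluctuation from the empirical-mean fluctuation, so that the former can be measured along the inter-class axis $u_{ij}$ (yielding $\tilde V_f$) while the latter retains the full dispersion $V_f$. The inequality $\mathrm{err}^{\mathrm{LP}}_{m,D}(f)\le \mathrm{err}^{\mathrm{NCC}}_{m,D}(f)$ follows from the observation that the NCC rule is itself a linear classifier in $f(x)$ with weights $\hat\mu_c$ and intercepts $-\tfrac{1}{2}\|\hat\mu_c\|_2^{2}$, so the linear probe, which selects the empirically best linear classifier on $\widehat S$, can only do better once a standard surrogate-loss/generalization step is applied. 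For the main bound, a union bound over the $C'-1$ competing classes gives
\[
\mathrm{err}^{\mathrm{NCC}}_{m,D}(f) ~\le~ \tfrac{1}{C'}\sum_{i=1}^{C'}\sum_{j\neq i}\Pr\bigl[\|f(x)-\hat\mu_j\|_2 \le \|f(x)-\hat\mu_i\|_2\bigr],
\]
where the probability is over $x\sim D_i$, $\widehat S_i\sim D_i^{m}$ and $\widehat S_j\sim D_j^{m}$, reducing the task to bounding each pairwise misclassification probability.

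For a fixed pair $(i,j)$, rewrite the misclassification event as $\langle f(x)-\tfrac{\hat\mu_i+\hat\mu_j}{2},\,\hat\mu_i-\hat\mu_j\rangle \le 0$ and decompose $\hat\mu_c = \mu_c + \xi_c$, with $\xi_i,\xi_j$ zero-mean and independent of each other and of $f(x)$. Expanding the inner product reduces the event to a linear inequality in the three independent noise sources $f(x)-\mu_i$, $\xi_i$ and $\xi_j$, set against the deterministic margin $\tfrac{1}{2}\|\mu_i-\mu_j\|_2^{2}$. The event is then contained in the union of three sub-events, one per noise source, each asserting that the corresponding term consumes a prescribed fraction of the margin. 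For the test-point sub-event, project onto $u_{ij}$: the only effective fluctuation is $\langle f(x)-\mu_i,u_{ij}\rangle$, a scalar of variance $\sigma_{ij}^{2}$, so Chebyshev's inequality gives a probability of order $\sigma_{ij}^{2}/\|\mu_i-\mu_j\|_2^{2}=\tilde V_f(D_i,D_j)$. For each empirical-mean sub-event, apply Cauchy--Schwarz to $\langle \xi_c,\hat\mu_i-\hat\mu_j\rangle$ followed by Markov's inequality to $\|\xi_c\|_2^{2}$, whose expectation scales like $\sigma_c^{2}/m$; this yields a contribution of order $V_f(D_i,D_j)/m$. Averaging over the $C'(C'-1)$ ordered pairs and pulling out $(C'-1)$ produces the two-term structure in~\eqref{eq:error_bound}.

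The hard part is optimizing the threshold split across the three sub-events to obtain the specific constants $14$ and $28.5$: each allocation trades off the prefactor on $\tilde V_f$ against that on $V_f/m$, and the stated constants correspond to a particular minimization. A secondary subtlety is that the NCC hyperplane is normal to $\hat u_{ij}=(\hat\mu_i-\hat\mu_j)/\|\hat\mu_i-\hat\mu_j\|_2$ rather than to $u_{ij}$, so the clean projection argument for the test-point term is exact only in the deterministic direction; the discrepancy $\hat u_{ij}-u_{ij}$ is $O(1/\sqrt{m})$ in expectation and its contribution is absorbed into the $V_f/m$ term via another Chebyshev estimate. The restriction $m\ge 10$ likely enters precisely to ensure that these absorbed lower-order corrections do not inflate the leading constants beyond $14$ and $28.5$.
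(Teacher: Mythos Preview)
Your overall skeleton matches the paper's: union bound over the $C'-1$ competitors, reduction to a pairwise confusion probability, and Chebyshev/Markov on separately thresholded noise sources. The gap is in the claim that expanding $\langle f(x)-\tfrac{\hat\mu_i+\hat\mu_j}{2},\,\hat\mu_i-\hat\mu_j\rangle$ gives a \emph{linear} inequality in the three noise sources. It does not: writing $w=f(x)-\mu_i$ and $\xi_c=\hat\mu_c-\mu_c$, the expansion contains the bilinear term $\langle w,\xi_i-\xi_j\rangle$ (and the quadratic $\|\xi_i\|^2-\|\xi_j\|^2$). This cross term involves $w$ in \emph{all} directions, not only along $u_{ij}$, so ``the only effective fluctuation is $\langle f(x)-\mu_i,u_{ij}\rangle$'' is false as stated. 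Your proposed fix---absorb the discrepancy $\hat u_{ij}-u_{ij}$ into $V_f/m$ via another Chebyshev estimate---does not close the gap: the second moment of $\langle w,\xi_i-\xi_j\rangle$ is of order $\sigma_i^2(\sigma_i^2+\sigma_j^2)/m$, so Chebyshev against a fixed fraction of the margin $\tfrac12\|\mu_i-\mu_j\|^2$ gives a contribution of order $V_f^2/m$, not $V_f/m$, and the bound in~\eqref{eq:error_bound} has no $V_f^2$ term.

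The paper avoids this by working with norms rather than the inner-product expansion, and by introducing a \emph{fourth} event you are missing: a threshold on the full test-point deviation $\|z_i-\mu_i\|\le\alpha_{ij}$. Concretely, a triangle-inequality step first replaces $\hat\mu_i,\hat\mu_j$ by $\mu_i,\mu_j$ at the cost of the events $\{\|\hat\mu_c-\mu_c\|\ge\Delta_{ij}\}$; then, on $\{\|z_i-\mu_i\|\le\alpha_{ij}\}$, squaring $\|z_i-\mu_j\|\le\|z_i-\mu_i\|+2\Delta_{ij}$ yields the clean projection inequality $\langle z_i-\mu_i,u_{ji}\rangle\ge \tfrac12 d_{ij}-\tfrac{2(\alpha_{ij}\Delta_{ij}+\Delta_{ij}^2)}{d_{ij}}$, whose only random part is the directional scalar. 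The interaction you are worried about now appears only as the deterministic product $\alpha_{ij}\Delta_{ij}$ inside the margin. The scaling $\alpha_{ij}\propto d_{ij}\sqrt{m}$ and $\Delta_{ij}\propto d_{ij}/\sqrt{m}$ makes $\alpha_{ij}\Delta_{ij}$ independent of $m$ while forcing both $\Pr[\|\hat\mu_c-\mu_c\|\ge\Delta_{ij}]$ and $\Pr[\|z_i-\mu_i\|\ge\alpha_{ij}]$ to scale like $V_f/m$; optimizing the remaining free parameter (the product $a=\gamma_1\gamma_2$, with $a=10$) produces the constants $14$ and $28.5$, and the condition $m\ge10$ keeps the shrunken margin positive.
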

Unlike \eqref{eq:old_galanti}, which depends on the full CDNV, Prop.~\ref{prop:error} depends primarily on the directional CDNV, (and on $\tfrac{1}{m}V_f$), which is always smaller than the CDNV ($\tilde V_f\le V_f$) and in isotropic distributions $f \circ D_i$ it even scales as $\tilde V_f=\tfrac{1}{d}V_f$. Thus, the new bound can explain low $m$‑shot error even when the regular CDNV is large. In essence, the bound predicts that SSL-trained models are effective for downstream tasks when their directional CDNV is very low and their CDNV remains moderate. Fig.~\ref{fig:cdnv} shows that this prediction holds in practice.

Finally, we note that the coefficients (14 and 28.5) in the bound are sub-optimal. In \eqref{eq:error_bound_coefficients} (Appendix~\ref{app:proof_prop}) we give a more general form of the coefficients, parameterized by a variable $a$. In Cor.~\ref{cor:error_bound} (Appendix~\ref{app:proof_prop}) we give an optimized version of the bound using a near-optimal choice of $a$. In Fig.~\ref{fig:few_shot}(bottom), we show that the bound in Cor.~\ref{cor:error_bound} is fairly tight in practice.

\subsection{Characterizing the representations learned with $\mathcal{L}^{\mathrm{NSCL}}$} 

Prop.~\ref{prop:error} establishes that if the directional CDNV is small and the overall CDNV remains bounded, then the linear probing error of $f$ will likewise be small. Since Thm.~\ref{thm:ssl_scl} guarantees that minimizing the DCL loss also drives down the NSCL loss, we can use NSCL as a stand-in for DCL.  Accordingly, in this section we analyze representations learned under the NSCL objective and evaluate how well they recover class labels through linear probing.

Because minimizing over all neural networks $f$ is intractable, we follow prior work~\citep{mixon2020neuralcollapseunconstrainedfeatures,doi:10.1073/pnas.2103091118} and adopt the unconstrained features model. Specifically, we treat $f$ as an arbitrary function that given an augmented sample $x^l_i$ selects an embedding $z^l_i$ as a {\em free} learnable vector in \(\mathbb{R}^d\). The following theorem characterizes the global minimizers of the NSCL loss, showing that they satisfy the NC1, NC2, and NC4 properties of NC~\citep{doi:10.1073/pnas.2015509117,han2022neural}, and thus learn representations similar to those of minimizers of other classification losses, such as cross-entropy~\citep{LU2022224,10.5555/3692070.3692468,pmlr-v139-graf21a}, MSE~\citep{han2022neural,tirer2022extendedunconstrainedfeaturesmodel,zhou2022optimizationlandscapeneuralcollapse}, and SCL~\citep{pmlr-v139-graf21a}. 
\begin{restatable}{theorem}{inv}\label{thm:inv}
Let $d\geq C-1$ and let \(S = \{(x_{i}, y_i)\}_{i=1}^{N} \subset \mathcal{X} \times [C]\) be a balanced labeled dataset with \(C\) classes. Suppose \( f \) is a global minimizer of the supervised contrastive loss \(\mathcal{L}^{\mathrm{NSCL}}(f)\) (over all functions $f:\mathcal{X}\to \mathbb{R}^{d}$). Then, the representations satisfy the following properties:
\begin{enumerate}[leftmargin=15pt, itemsep=2pt, topsep=2pt]
\item \textbf{Augmentation Collapse:} For each \( i \in [N] \) and for every pair \( l_1,l_2 \in [K] \), we have $z_i^{l_1} = z_i^{l_2}$.
\item \textbf{Within-Class Collapse:} For any two samples \( x_i \) and \( x_j \) with the same label (\( y_i = y_j \)), their representations coincide: $z_i = z_j$. Namely, each class has a unique class embedding.
\item \textbf{Simplex Equiangular Tight Frame:} Let \( \{\mu_1, \dots, \mu_C\} \) denote the set of class-center embeddings. These vectors form a simplex ETF in \( \mathbb{R}^d \); specifically, they satisfy $\sum_{c=1}^{C} \mu_c = 0$, $\|\mu_c\|_2 = \|\mu_{c'}\|_2$ and $\langle \mu_c, \mu_{c'} \rangle = -\frac{\|\mu_c\|^2_2}{C-1}$ for all $c\neq c' \in [C]$.
\end{enumerate}
\end{restatable}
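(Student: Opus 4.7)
The plan is to prove $\mathcal{L}^{\mathrm{NSCL}}(f) \ge L^{*}$ for a constant $L^{*}$ depending only on $C$, $K$ and $n := N/C$, and to show that equality characterizes exactly the three claimed properties. Because cosine similarity is scale-invariant, I may assume every $z_i^l$ lies on the unit sphere, so that $\mathrm{sim}(z_i^{l_1}, z_j^{l_3}) = \langle z_i^{l_1}, z_j^{l_3}\rangle$. Writing the loss as
\[
\mathcal{L}^{\mathrm{NSCL}}(f) \;=\; -\frac{1}{K^2 N}\sum_{l_1,l_2,i}\langle z_i^{l_1}, z_i^{l_2}\rangle \;+\; \frac{1}{K^2 N}\sum_{l_1,l_2,i}\log\!\sum_{l_3,\, j:\, y_j\neq y_i}\!\exp\bigl(\langle z_i^{l_1}, z_j^{l_3}\rangle\bigr),
\]
I would bound the two pieces separately.

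For the first piece, the identity $\sum_{l_1,l_2}\langle z_i^{l_1}, z_i^{l_2}\rangle = \bigl\|\sum_l z_i^l\bigr\|^2 \le K^2$, with equality iff all augmentations of sample $i$ coincide, yields a contribution $\ge -1$ that is tight precisely under augmentation collapse (property~1). For the second piece, the AM-GM / Jensen bound $\log\sum_k \exp(a_k) \ge \log M + \tfrac{1}{M}\sum_k a_k$ applied to the $M = K(N-n) = Kn(C-1)$ negatives yields a lower bound $\log(Kn(C-1)) + \bar s$, tight iff the inner products $\langle z_i^{l_1}, z_j^{l_3}\rangle$ are constant across $(l_3, j)$ with $y_j \neq y_i$. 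Introducing class sums $\bar z_c := \sum_{l,\, i:\, y_i=c} z_i^l$ and the global sum $\bar z := \sum_c \bar z_c$, a direct expansion gives
\[
\sum_{l_1, i, l_3, j:\, y_j\neq y_i}\langle z_i^{l_1}, z_j^{l_3}\rangle \;=\; \|\bar z\|^2 \;-\; \sum_c \|\bar z_c\|^2,
\]
so that $\bar s = \tfrac{1}{K^2 N(N-n)}(\|\bar z\|^2 - \sum_c\|\bar z_c\|^2)$. Using $\|\bar z\|^2 \ge 0$ together with $\|\bar z_c\| \le Kn$ (triangle inequality for $Kn$ unit vectors, tight iff they all coincide), this gives $\bar s \ge -\tfrac{1}{C-1}$, hence $\mathcal{L}^{\mathrm{NSCL}}(f) \ge -1 + \log(Kn(C-1)) - \tfrac{1}{C-1}$.

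Finally I would read off the equality conditions. Tightness of $\|\bar z_c\| = Kn$ forces within-class collapse at the class level (property~2, which also subsumes property~1), after which each $z_i^l$ equals a common unit vector $\mu_{y_i}$. Tightness $\|\bar z\| = 0$ gives $\sum_c \mu_c = 0$, and Jensen tightness forces $\langle \mu_c, \mu_{c'}\rangle \equiv \alpha$ for all $c \neq c'$; plugging into $0 = \bigl\|\sum_c \mu_c\bigr\|^2 = C + C(C-1)\alpha$ yields $\alpha = -\tfrac{1}{C-1}$, i.e., the simplex ETF (property~3). Since $d \ge C-1$, such a configuration exists in $\mathbb{R}^d$, so the lower bound is attained and the characterization is complete. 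The main obstacle will be the combinatorial bookkeeping in aggregating the quintuple sum over $(l_1, l_2, l_3, i, j)$ into the quadratic forms $\|\bar z\|^2 - \sum_c\|\bar z_c\|^2$, and verifying that the three separate tightness conditions (numerator, Jensen, and class-norm) are simultaneously achieved by the same ETF configuration.
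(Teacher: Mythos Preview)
Your proposal is correct and follows essentially the same route as the paper: split numerator and denominator, apply AM--GM to the log-sum-exp over negatives, rewrite the resulting cross-class inner-product sum as $\|\bar z\|^2 - \sum_c \|\bar z_c\|^2$, bound each piece, and read off the ETF from the equality conditions. The one noteworthy difference is that the paper first proves augmentation collapse by a separate averaging/contradiction argument (replacing the $K$ augmentation embeddings of a fixed sample by their mean strictly decreases the loss) and only then reduces to $K=1$, whereas you handle general $K$ in one shot by folding augmentation collapse into the single inequality $\|\bar z_c\|\le Kn$; your version is slightly more economical, at the cost of obtaining augmentation collapse only as a by-product of within-class collapse rather than as an independent property.
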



Thm.~\ref{thm:inv} implies that any global minimizer of $\mathcal{L}^{\mathrm{NSCL}}(f)$ yields perfectly clustered representations. As such, the CDNV and the directional CDNV are zero and by Prop.~\ref{prop:error}, the 1-shot errors vanish: $\mathrm{err}^{\mathrm{LP}}_{m,S}(f) \leq \mathrm{err}^{\mathrm{NCC}}_{m,S}(f) = 0$. In CL on the other hand, we cannot achieve such collapse, since it would imply that the representations encode only label information--despite the loss being label-agnostic. Still, the DCL-NSCL duality invites a more nuanced question: {\bf Does minimizing CL still promote weak forms of clustering?} While zero CDNV is tied to a specific labeling, the few-shot error \(\text{err}^{\text{NCC}}_{m,S}(f)\) and the directional CDNV can be small under many possible labelings. In Sec.~\ref{sec:experiments}, we empirically explore the extent to which minimizing the DCL loss induces such a structure.

\section{Experiments}\label{sec:experiments}



\subsection{Experimental Setup}\label{sec:exp_setup}

{\bf Datasets.\enspace} We experiment with the following datasets - CIFAR10 and CIFAR100~\citep{krizhevsky2009learning}, mini-ImageNet~\citep{NIPS2016_90e13578}, Tiny-ImageNet~\citep{deep-learning-thu-2020}, SVHN~\citep{37648} and ImageNet-1K~\citep{5206848}.

{\bf Methods, architectures and optimizers.\enspace} We trained our models with the SimCLR~\citep{pmlr-v119-chen20j} algorithm. We use a ResNet-50~\citep{He_2016_CVPR} encoder with a width-multiplier factor of 2 and ViT-Base~\citep{dosovitskiy2021imageworth16x16words}. Both are followed by a projection head with a standard two-layer MLP architecture composed of: \texttt{Linear($2048 \rightarrow 2048$) $\rightarrow$ ReLU $\rightarrow$ Linear($2048 \rightarrow 128$)}. For additional experiments and setup details about Vision Transformers~\citep{dosovitskiy2021imageworth16x16words} and MoCo v2~\citep{chen2020improvedbaselinesmomentumcontrastive}, see Appendix~\ref{app:experiments}. 

Instead of training our models with the standard InfoNCE loss that is used in SimCLR, we use the DCL loss that avoids positive-negative coupling during training~\citep{10.1007/978-3-031-19809-0_38}. In order to minimize the loss, we adopt the LARS optimizer~\citep{you2017large} which has been shown in~\citep{pmlr-v119-chen20j} to be effective for training with large batch sizes. For LARS, we set the momentum to $0.9$ and the weight decay to $\expnum{1}{6}$. All experiments are carried out with a batch size of $B = 1024$. The base learning rate is scaled with batch size as $0.3 \cdot \lfloor B / 256 \rfloor$, following standard practice~\citep{pmlr-v119-chen20j}. We employ a warm-up phase~\citep{goyal2017accurate} for the first 10 epochs, followed by a cosine learning rate schedule without restarts ~\citep{loshchilov2016sgdr} for the remaining epochs. All models were trained on a single node with two 94 GB NVIDIA H100 GPUs. 


{\bf Evaluation metrics.\enspace} In several experiments we monitor \(\mathcal{L}^{\mathrm{DCL}}\) (see~\eqref{eq:L^DCL}) and \(\mathcal{L}^{\mathrm{NSCL}}\) (see~\ref{sec:ssl_vs_scl}). To calculate each one of these loss functions, we replace the sum over samples and their $K$ augmentations in the denominator with a sum over a random batch of $B=1024$ random samples and one random augmentation for each sample as in SimCLR (see Appendix~\ref{app:batches} for details). 

To evaluate the quality of learned representations, we use two methods: the Nearest Class-Center Classification (NCCC) accuracy~\citep{galanti2023comparative}, and linear probing accuracy~\citep{kohn2015s,gupta2015distributional,belinkov2022probing}. Suppose we have a feature map $f$, a classification task with $C$ classes, and a dataset $S=\cup^{C}_{i=1} S_i$ (either training or test data). We estimate $\text{err}^{\text{LP}}_{m,S}(f)$ and $\text{err}^{\text{NCC}}_{m,S}(f)$ with $h_{f,\widehat{S}}$ being an NCC classifier or a linear classifier. To train the linear classifier, we use cross-entropy minimization for 500 epochs, with batch size $\min(|\widehat{S}|,256)$, learning rate $\expnum{3}{4}$, weight decay $\expnum{5}{4}$, and momentum 0.9. The expectation over the selection of $\widehat{S}=\cup^{C}_{i=1} \widehat{S}_i$ is estimated by averaging the error over 5 selections of $\widehat{S}$ from $S$. 

We also estimate the bound in Prop.~\ref{prop:error}. As mentioned above, the coefficients in \eqref{eq:error_bound} are sub-optimal, so we instead use the refined bound from Cor.~\ref{cor:error_bound} (see Appendix~\ref{app:proof_prop}). To assess its ability to predict downstream performance, we compute the bound on both the training and test data by treating the per-class train/test subsets $S_i$ as the distributions $D_i$. The CDNV and the directional CDNV are calculated exactly as described in Sec.~\ref{sec:good_reps}. 

\subsection{Experimental Results}\label{sec:simclr_exp}

\begin{figure}[t]
\centering
\begin{tabular}{c@{\hspace{6pt}}c@{\hspace{6pt}}c@{\hspace{6pt}}c}
\includegraphics[width=.235\linewidth]{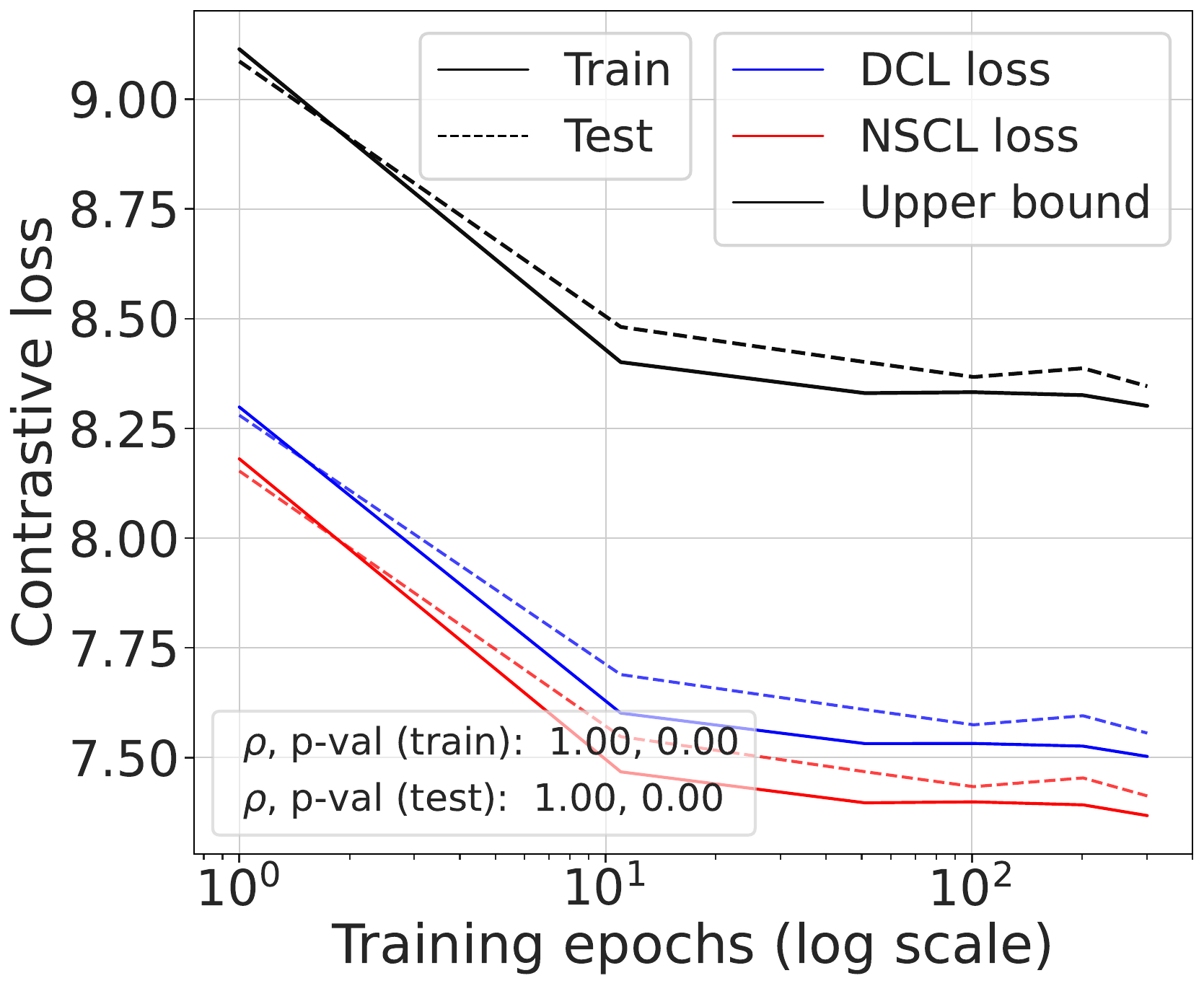} &
\includegraphics[width=.235\linewidth]{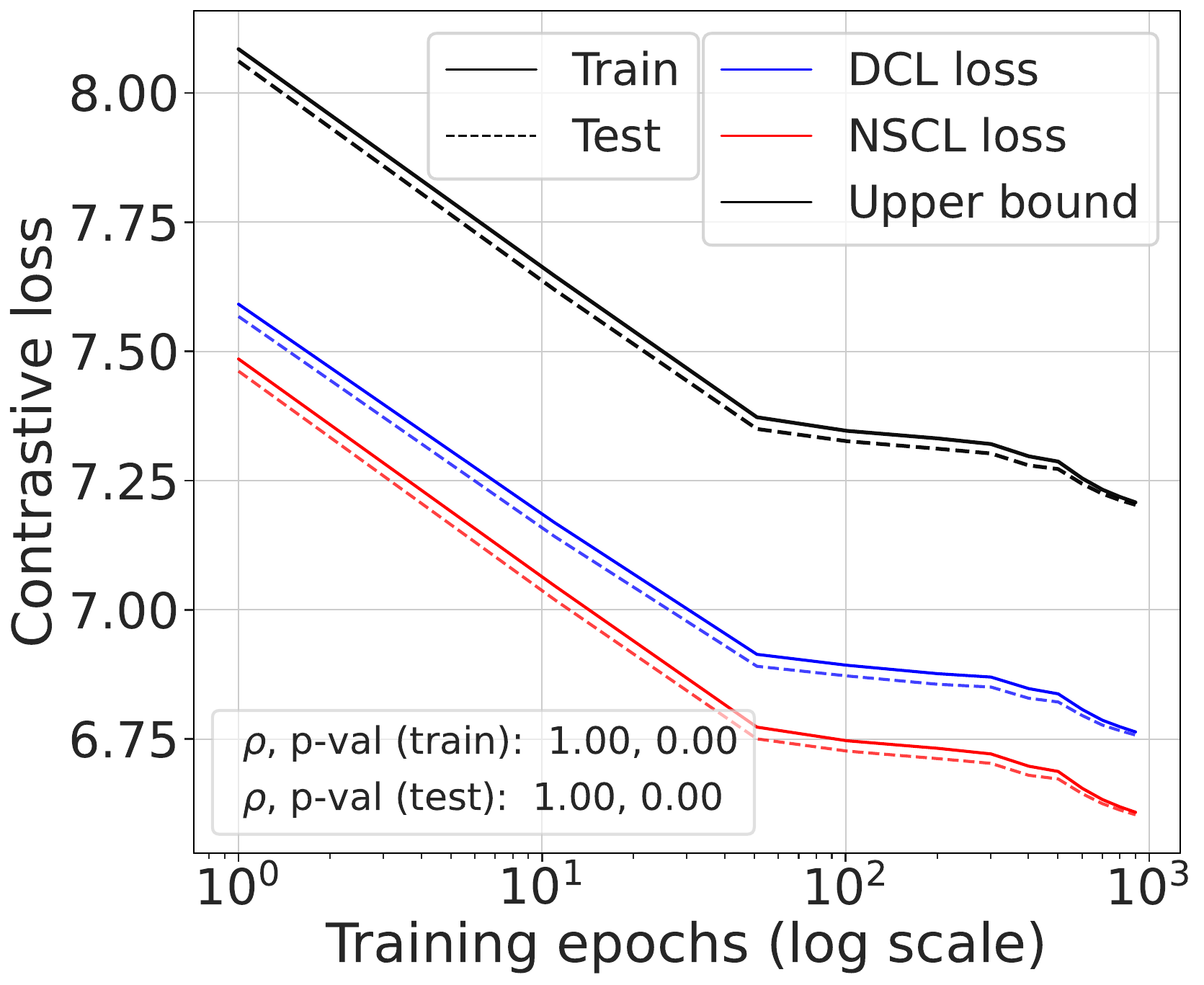} &
\includegraphics[width=.235\linewidth]{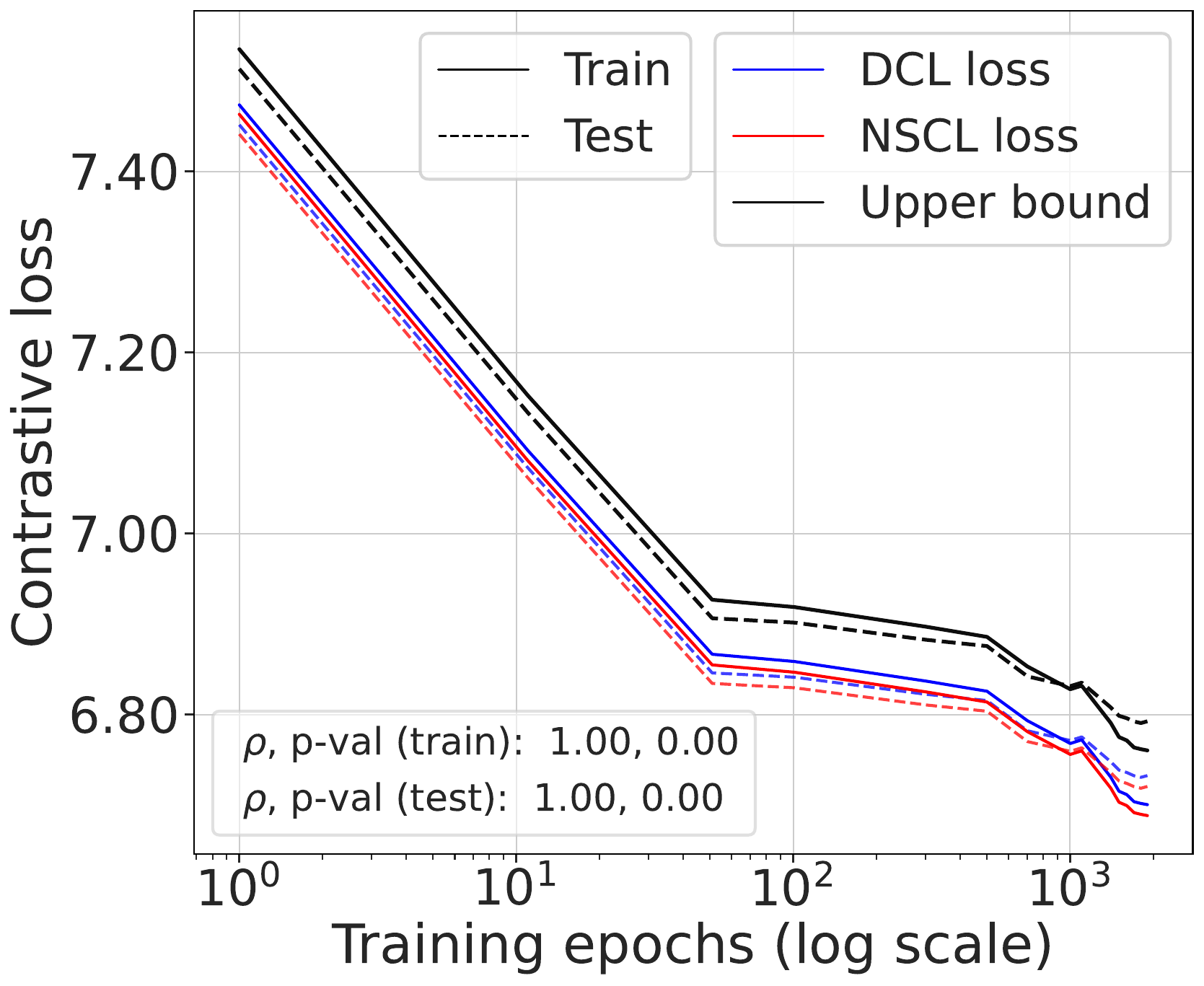} &
\includegraphics[width=.235\linewidth]{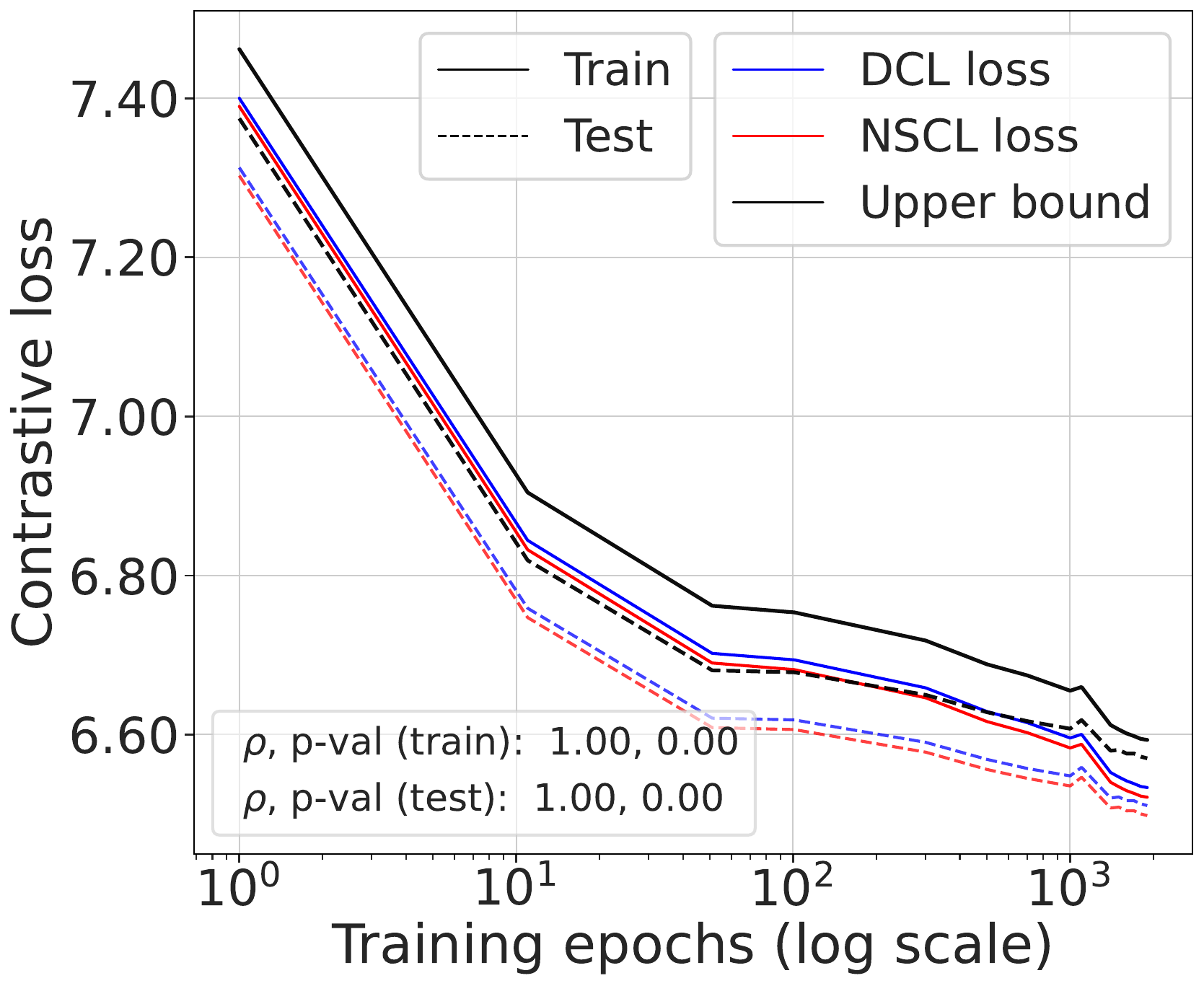}  \\
\includegraphics[width=0.235\linewidth]{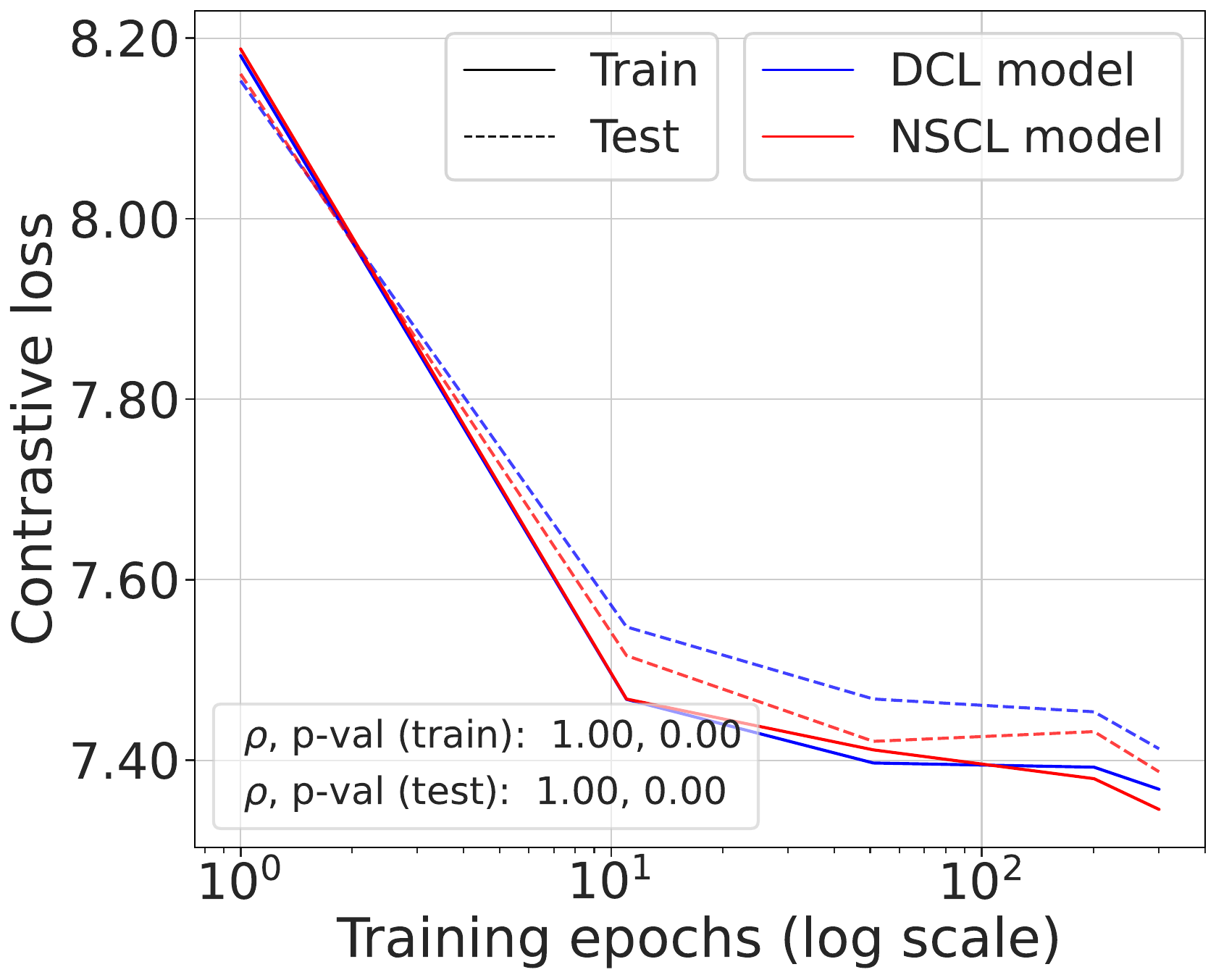} &
\includegraphics[width=0.235\linewidth]{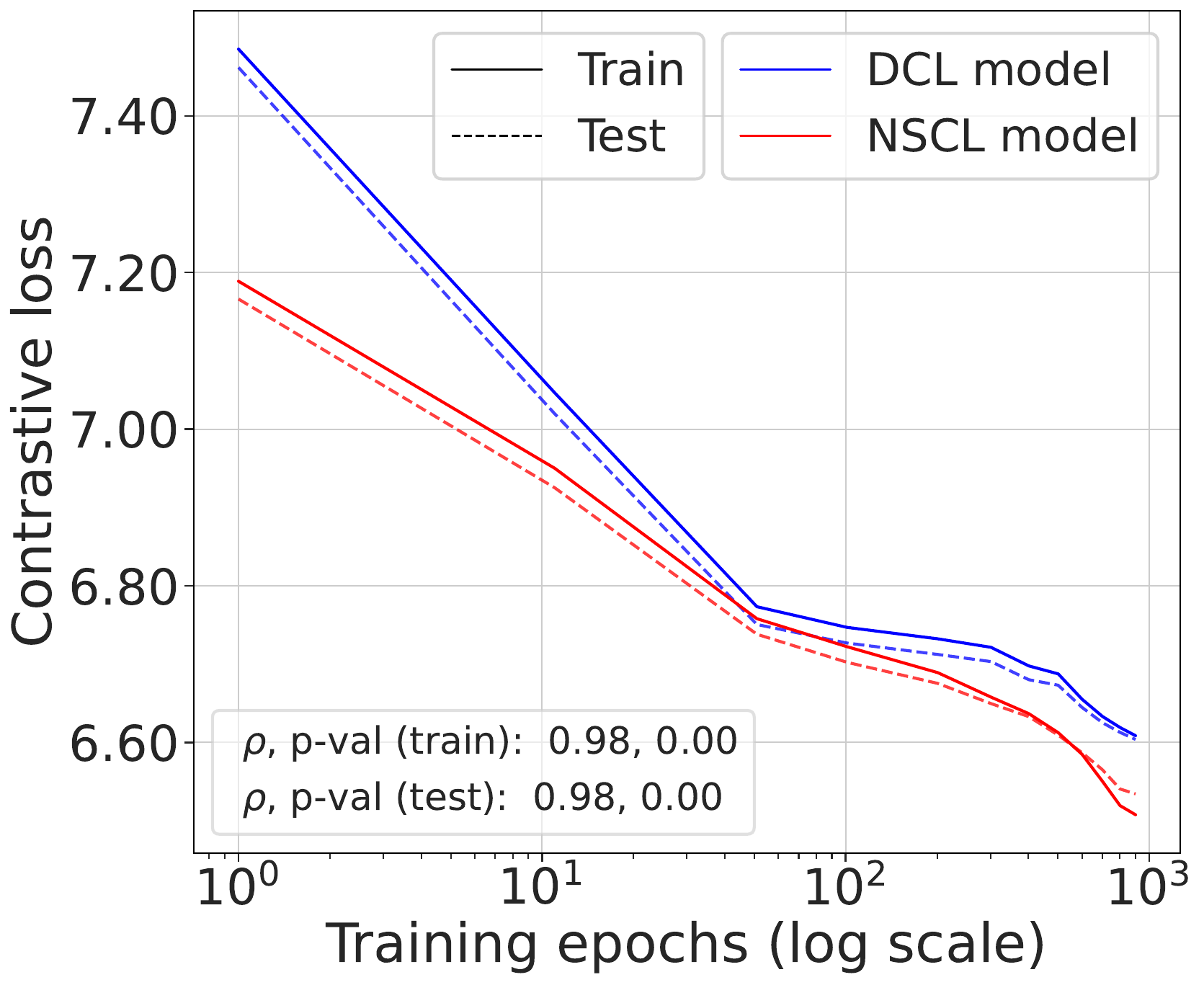} &
\includegraphics[width=.235\linewidth]{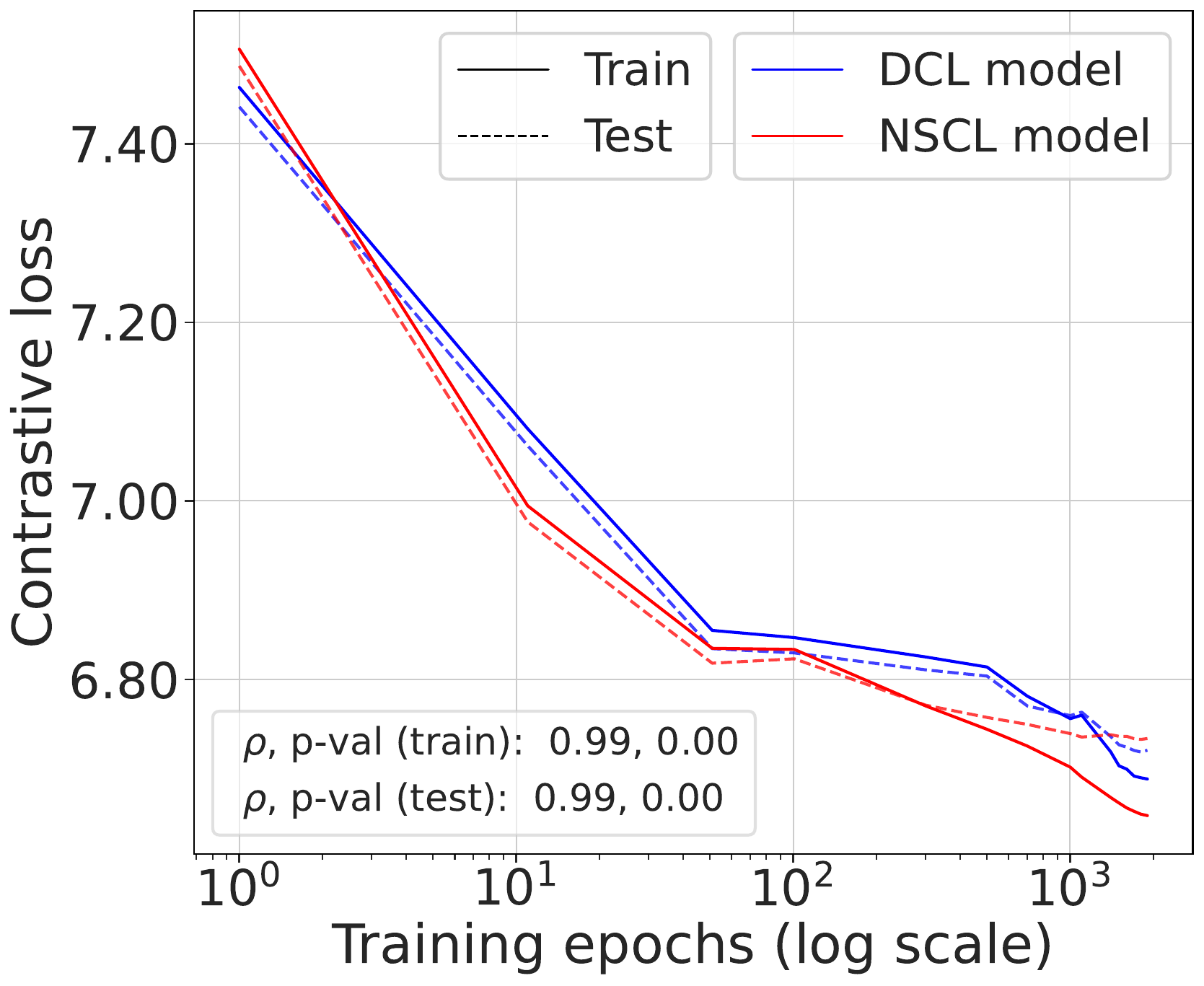} &
\includegraphics[width=0.235\linewidth]{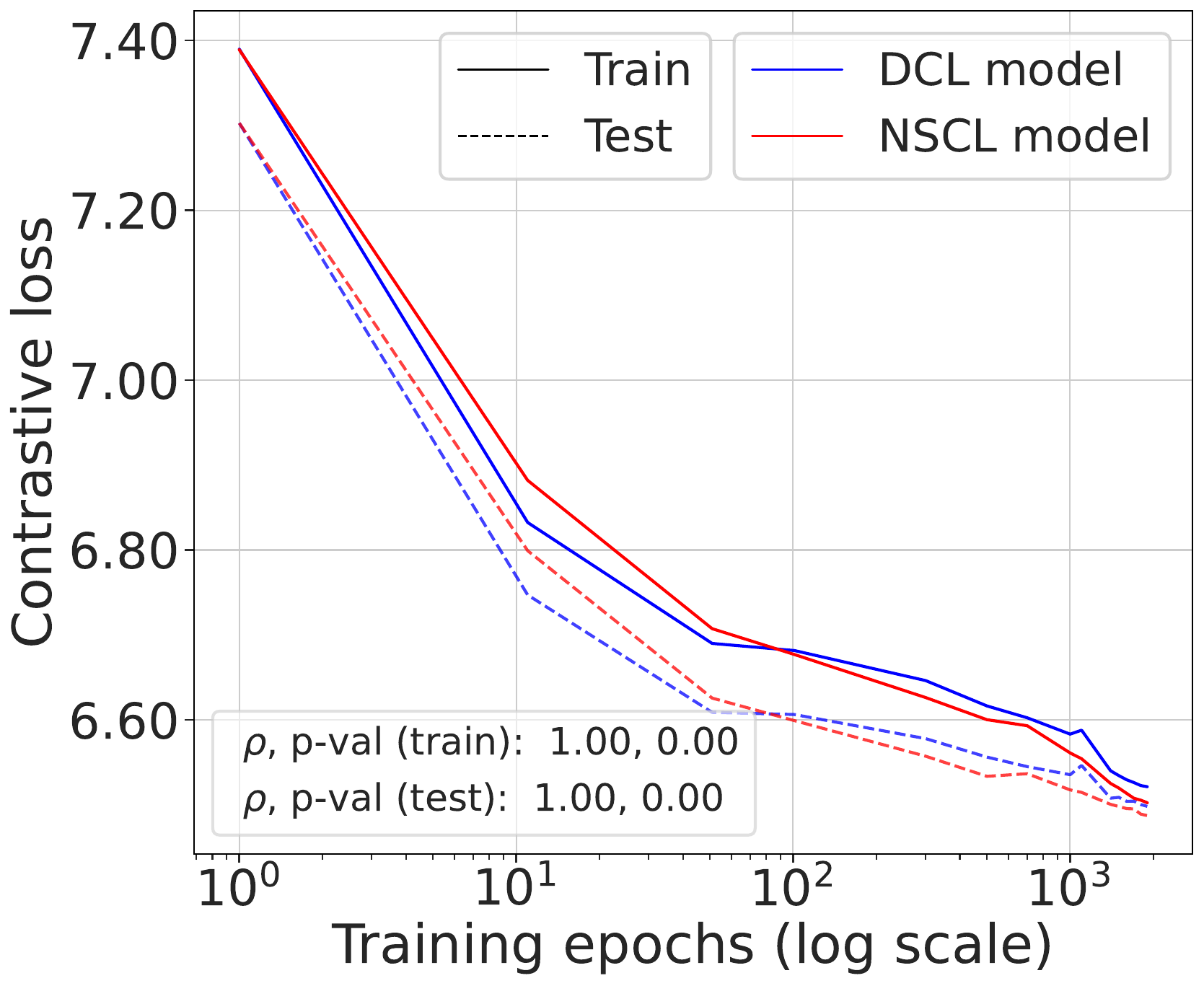}
\\
{\small {\bf (a)} SVHN} & {\small {\bf (b)} CIFAR10} & {\small {\bf (c)} CIFAR100} & {\small {\bf (d)} mini-ImageNet}
\end{tabular}
\caption{{\bf (Top)} We train the model to minimize the DCL loss, tracking the DCL loss, the NSCL loss, and the bound NSCL+$\log(1+\tfrac{n_{\max}\mathrm{e}^2}{N-n_{\max}})$ on both the training and test sets throughout training. {\em All three quantities are highly correlated.} {\bf (Bottom)} We compare the NSCL loss of two models: one trained with the DCL loss and the other with the NSCL loss. {\em The resulting NSCL losses are comparable, regardless of the training objective.} In both the top and bottom plots, correlations are computed between the DCL and NSCL losses on the train and test data.}
\label{fig:losses_during_training_simclr}
\end{figure}

\begin{figure}[t]
\centering
\begin{tabular}{c@{\hspace{6pt}}c@{\hspace{6pt}}c}
    \includegraphics[width=0.318\linewidth]{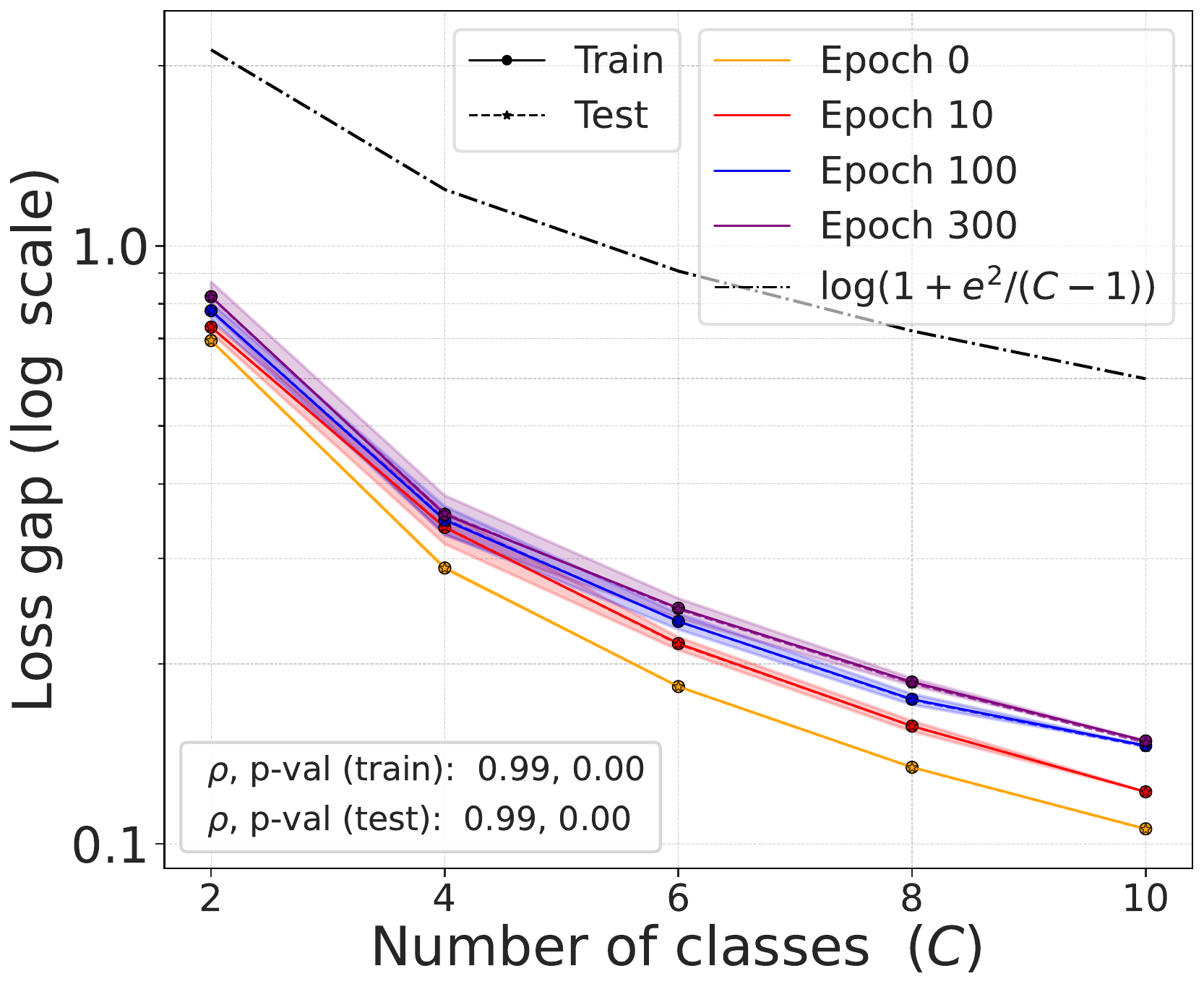} & \includegraphics[width=0.318\linewidth]{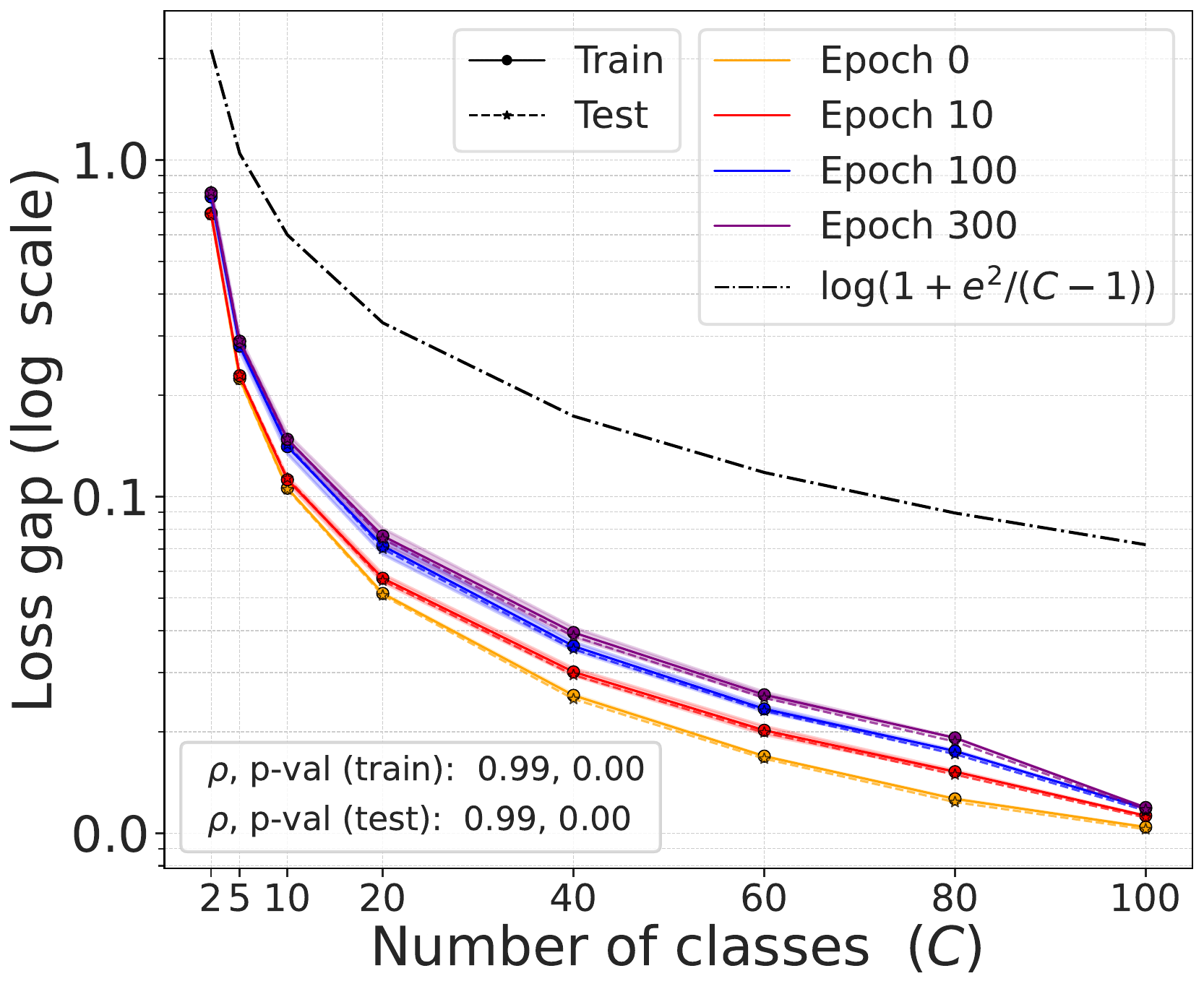} &
    \includegraphics[width=0.318\linewidth]{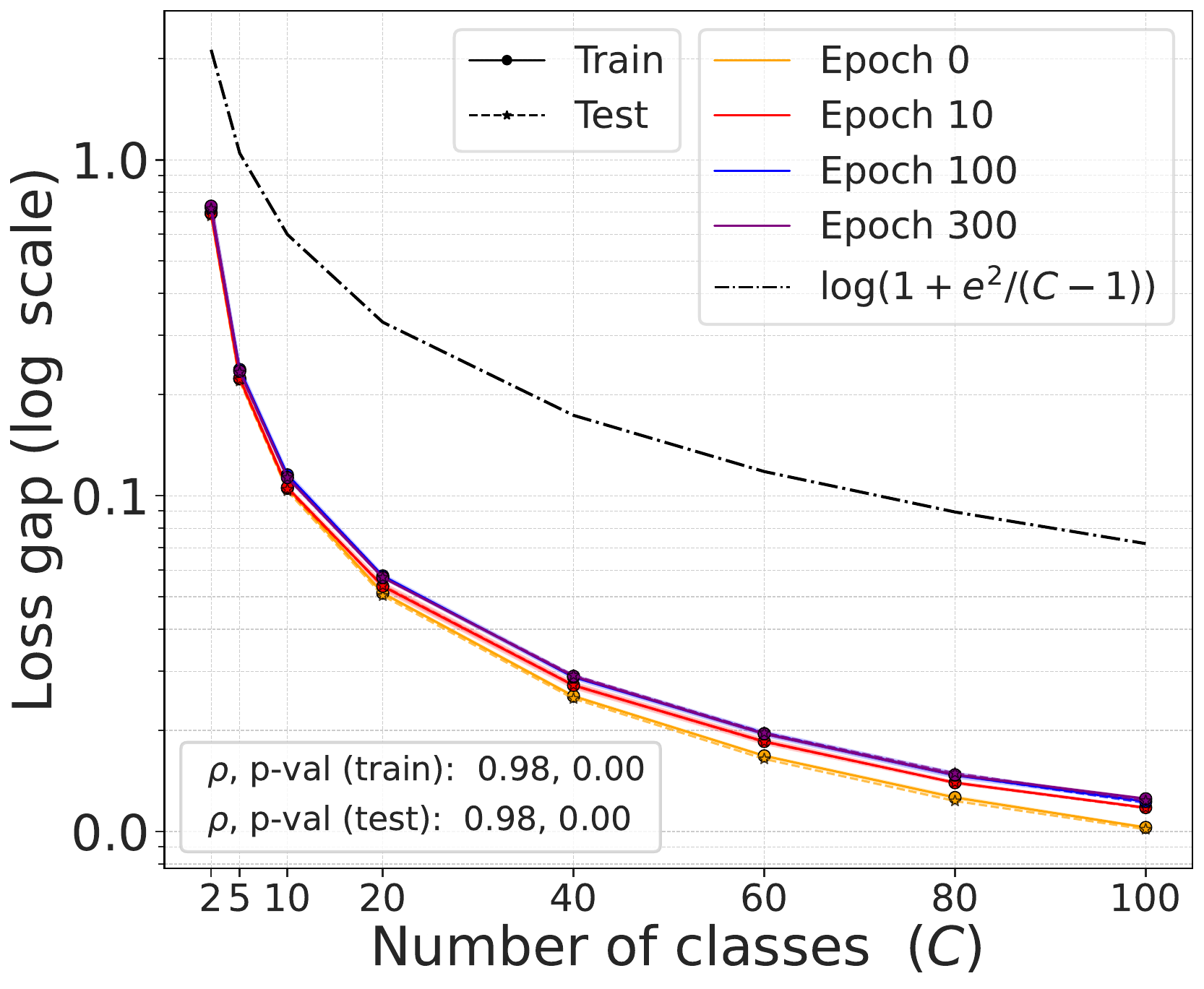} \\
{\small {\bf (a)} CIFAR10} & {\small {\bf (b)} CIFAR100} & {\small {\bf (c)} mini-ImageNet}
\end{tabular}
\caption{{\em The gap between the DCL and NSCL losses shrinks as the number of classes \(C\) grows.}  Models were trained to minimize the DCL loss, and at several training epochs we plot the empirical difference \(\mathcal{L}^{\mathrm{DCL}} - \mathcal{L}^{\mathrm{NSCL}}\) alongside the bound \(\log(1 + \tfrac{\mathrm{e}^2}{C-1})\) as a function of \(C\). We also report correlation between the loss gap at epoch $300$ and the bound.}
\label{fig:loss_with_C}
\end{figure}

{\bf Validating Thm.~\ref{thm:ssl_scl} during training.\enspace} We train models using SimCLR to minimize the DCL loss. We evaluate both the DCL and NSCL losses on both training and test sets. Additionally, we evaluate the proposed upper bound, given by \(\mathcal{L}^{\mathrm{NSCL}}(f) + \log(1+\tfrac{n_{\max}\mathrm{e}^2}{N-n_{\max}})\), where $N$ is the total number of samples and $n_{\max}$ is the maximal number of samples per class (see Thm.~\ref{thm:ssl_scl}). 

In Fig.~\ref{fig:losses_during_training_simclr}(top) we observe that the DCL loss consistently upper bounds the NSCL loss, and that the two losses become closer for tasks with a larger number of classes (e.g., CIFAR100 and mini-ImageNet compared to CIFAR10 and SVHN). In addition, when \(C\) is large (e.g., \(C=100\)), the bound becomes very tight, closely matching the losses. Although we notice a slight divergence between the self-supervised and supervised loss metrics as training progresses, the overall trend continues to align well with the inequality stated in Thm.~\ref{thm:ssl_scl}. For similar experiments with SimCLR-ViT and MoCo-v2, see  Fig.~\ref{fig:vit_losses_during_training_simclr} and Fig.~\ref{fig:losses_during_training_moco}.
\begin{wraptable}[7]{r}{0.7\linewidth}
    \centering
    \resizebox{\linewidth}{!}{%
\begin{tabular}{c|cc|cc|cc}
    \toprule
    Dataset & \multicolumn{2}{c|}{CIFAR10} & \multicolumn{2}{c|}{CIFAR100} & \multicolumn{2}{c}{mini-ImageNet} \\
            & NCCC & LP & NCCC & LP & NCCC & LP \\
    \midrule
    DCL      & 85.3 $\pm$ $\expnum{2}{1}$ & 86.3 $\pm$ $\expnum{8}{3}$ & 57.3 $\pm$ $\expnum{1}{1}$ & 61.7 $\pm$ $\expnum{5}{2}$ & 69.0 $\pm$ $\expnum{2}{1}$ & 72.9 $\pm$ $\expnum{2}{2}$ \\
    NSCL     & 95.7 $\pm$ $\expnum{4}{3}$ & 95.6 $\pm$ $\expnum{4}{3}$ & 70.8 $\pm$ $\expnum{1}{1}$ & 73.7 $\pm$ $\expnum{2}{2}$ & 79.8 $\pm$ $\expnum{1}{1}$ & 81.3 $\pm$ $\expnum{2}{2}$ \\
    \bottomrule
\end{tabular}
    }
    \caption{NCCC and LP 100-shot test-time accuracy rates (and their standard deviations) of DCL and NSCL-trained models.}\label{tab:comp}
\end{wraptable}

While this result shows that for large $C$, the NSCL loss decreases alongside the DCL loss, it does not imply that minimizing the DCL loss leads to the same solution as directly minimizing the NSCL loss. To investigate this, we conducted an experiment in which we trained two models to minimize the DCL and NSCL losses (respectively) and compared their resulting NSCL values. As shown in Fig.~\ref{fig:losses_during_training_simclr}(bottom), the gap between the NSCL losses of the two models is fairly small. This indicates that, in practice, optimizing the DCL loss leads to representations that are fairly clustered in comparison with those obtained by explicitly optimizing the NSCL loss.

{\bf Validating Thm.~\ref{thm:ssl_scl} with varying $C$.\enspace} In the next experiment, we analyze how the gap $\mathcal{L}^{\mathrm{DCL}}(f) - \mathcal{L}^{\mathrm{NSCL}}(f)$ scales with the number of classes $C$. Specifically, we empirically validate that this gap is upper-bounded by $\log(1+\tfrac{n_{\max}\mathrm{e}^2}{N-n_{\max}})=\log(1 + \tfrac{\mathrm{e}^2}{C - 1})$ (see Thm.~\ref{thm:ssl_scl}), that the gap becomes tighter for large $C$, and that it is highly correlated with the actual gap between the losses. 

For each value of $C$, we randomly sample $C$ classes from the full dataset, train a self-supervised model from scratch on data from these classes only, and compute $\mathcal{L}^{\mathrm{DCL}}(f) - \mathcal{L}^{\mathrm{NSCL}}(f)$ at various training epochs. To account for variability in class selection, we report the averaged value of this quantity over five independent random selections of $C$ classes along with error bars. Fig.~\ref{fig:loss_with_C} presents the empirical results, showing that the gap $\mathcal{L}^{\mathrm{DCL}}(f) - \mathcal{L}^{\mathrm{NSCL}}(f)$ slightly increases over the course of training. As shown in \eqref{eq:dcl_nscl}, the loss gap is given by \(\log \left( 1 + \tfrac{\sum_{j\neq i, y_j=y_i} \texttt{exp}(\texttt{sim}(z_i, z_j))}{\sum_{y_j \neq y_i} \texttt{exp}(\texttt{sim}(z_i, z_j))}\right) \). At initialization, the embeddings \(z_i\) are randomly distributed in the representation space. During training, the embeddings cluster with respect to their classes thereby increasing \(\texttt{sim}(z_i,z_j) \) for \( y_j=y_i\) while reducing it for \( y_j\neq y_i\). This leads to a gradual increase in the loss gap as observed in Fig.~\ref{fig:loss_with_C}, but it remains consistently bounded by $\log(1+\tfrac{\mathrm{e}^2}{C-1})$ at all epochs. Moreover, the magnitude of this gap decreases with $C$ and is highly correlated with our bound.


{\bf Comparing downstream performance.\enspace} To evaluate the quality of the learned representations, we measure the few-shot downstream performance of models trained with DCL and NSCL. Specifically, we report the NCCC error and linear probing error (see Sec.~\ref{sec:exp_setup}). Fig.~\ref{fig:few_shot} (top) shows train and test performance across all classes as a function of the number of shots per class ($m$). As can be seen, although the NCCC and linear probing errors of the DCL-trained models are, as expected, higher than those of the NSCL-trained model, they remain fairly low (see Table~\ref{tab:comp} for a numeric comparison). This indicates that, despite not being explicitly optimized to align with class labels, the representations learned by DCL still exhibit strong clustering behavior. 

Fig.~\ref{fig:few_shot} (bottom) compares the bound from Cor.~\ref{cor:error_bound} with the NCCC and linear‑probe errors of DCL‑trained models on 2‑way downstream tasks (averaged over 10 tasks), for both train and test data. Unlike Prop.~7 in~\cite{galanti2023generalizationboundsfewshottransfer}, and despite the constants in our bound, it is fairly tight as $m$ increases; for instance, at $m=500$ for CIFAR10, it indicates that the few-shot errors are below 0.89. At $m = 10^{6}$, the test-time bound (red horizontal line) predicts an error of at most 0.26. 

To complement our experiments, we take a publicly available SimCLR-ResNet-50 model pretrained on IM-1K\footnote{https://github.com/AndrewAtanov/simclr-pytorch}, and perform 2-way $m$-shot NCCC evaluation similar to Fig.~\ref{fig:few_shot} (bottom). In Fig.~\ref{fig:few_shot_pretrained}, we verify our error bound with pretrained weights on mini-ImageNet and IM-1K.

\begin{wrapfigure}[9]{r}{0.75\textwidth}
\vspace{-6pt}
\begin{minipage}{0.68\linewidth}
\centering
\begin{tabular}{c@{\hspace{6pt}}c}
\includegraphics[width=0.48\linewidth]{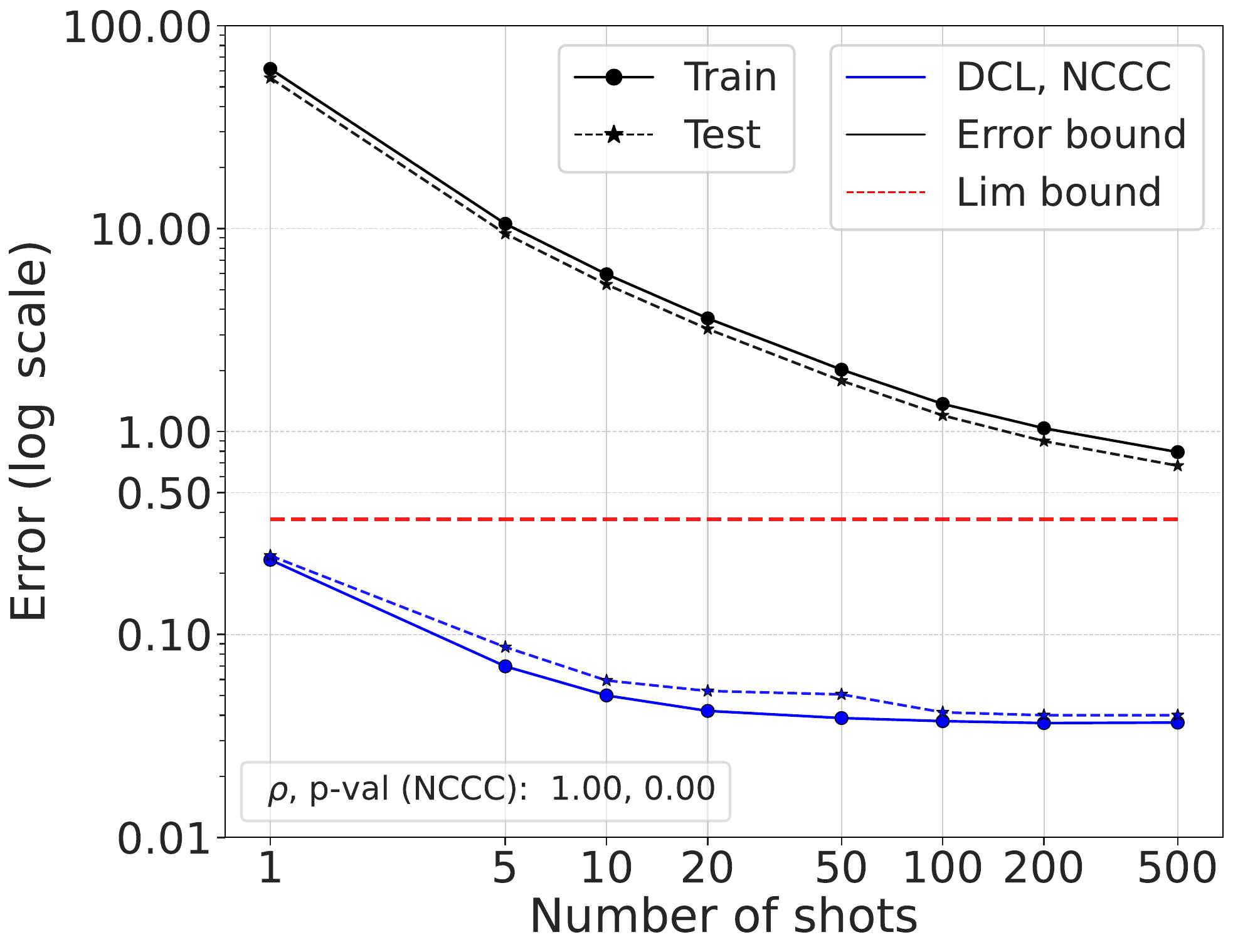} &
\includegraphics[width=0.48\linewidth]{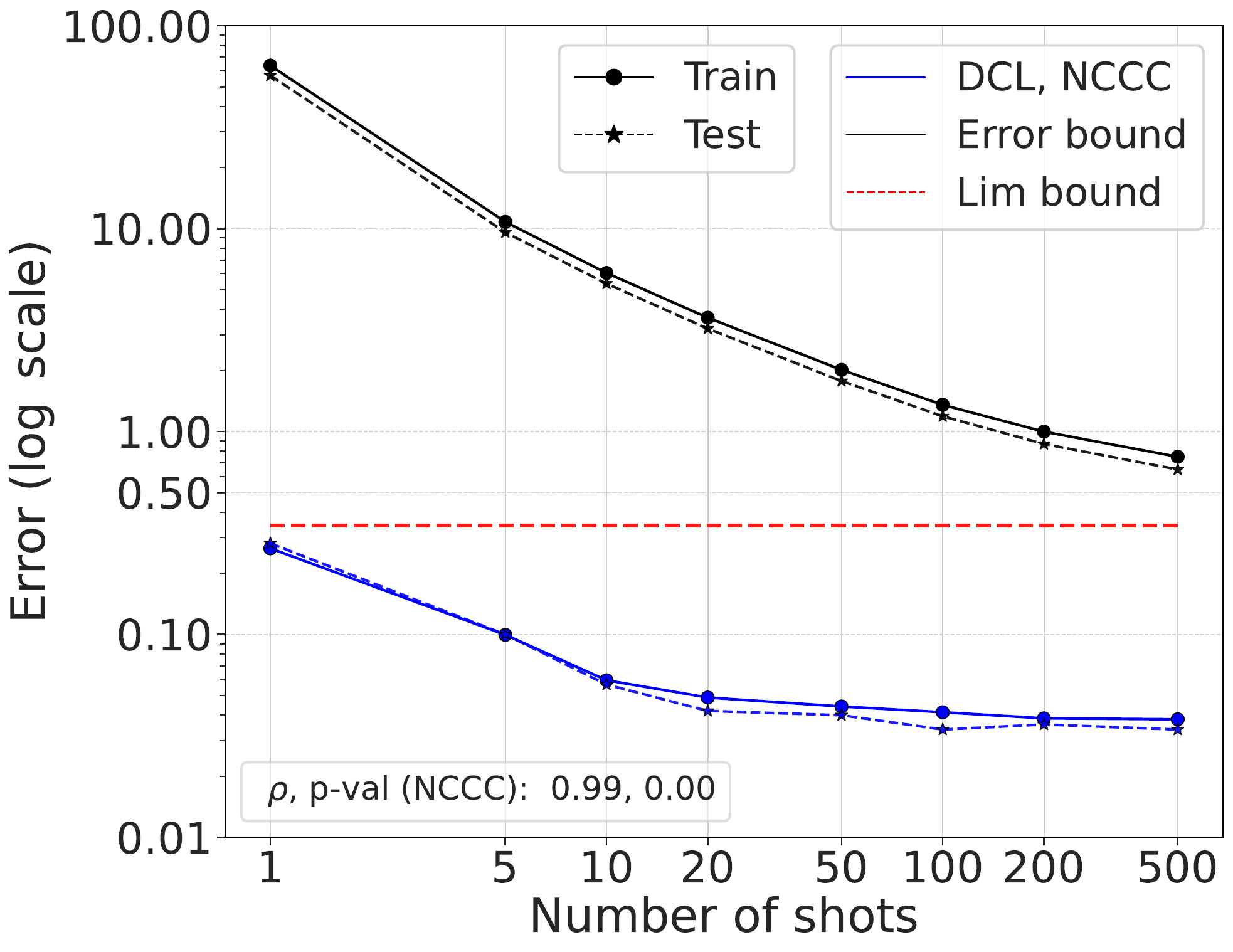} \\
{\small {\bf (a)} mini-ImageNet} & {\small {\bf (b)} IM-1K}
\end{tabular}
\end{minipage}\hfill
\begin{minipage}{0.28\linewidth}
\captionof{figure}{{\bf The bound in Cor.~\ref{cor:error_bound} is fairly tight for ImageNet pre-trained models.} We reproduced Fig.~\ref{fig:few_shot} with a ResNet-50 pretrained on IM-1K.}
\label{fig:few_shot_pretrained}
\end{minipage}
\vspace{-6pt}
\end{wrapfigure}

Fig.~\ref{fig:cdnv} shows that DCL training only modestly lowers CDNV but reduces directional CDNV by about an order of magnitude during training. Because Prop.~\ref{prop:error} links few‑shot error primarily to the directional CDNV, this sharp drop explains why DCL-trained models can recover labels (see Fig.~\ref{fig:few_shot}) with limited supervision. In contrast, NSCL reduces both variances substantially, which accounts for its lower $m$‑shot error.

{\bf Augmentation collapse.\enspace} We verified augmentation collapse by training DCL ViT models for 2k epochs and measuring cosine similarity (0 indicates no collapse; 1 indicates perfect collapse). For each dataset, we computed the mean cosine similarity between augmentations of the same sample and between augmentations of different samples, using five augmentations per pair across the full dataset.
\begin{wraptable}[10]{r}{0.65\linewidth}
    \centering
    \resizebox{\linewidth}{!}{%
\begin{tabular}{c|cc|cc|cc}
    \toprule
    Dataset & \multicolumn{2}{c|}{CIFAR10} & \multicolumn{2}{c|}{CIFAR100} & \multicolumn{2}{c}{mini-ImageNet} \\
            & same & diff & same & diff & same & diff \\
    \midrule
    DCL      & 0.92 & 0.13 & 0.91 & 0.11 & 0.89 & 0.07 \\
    \bottomrule
\end{tabular}
    }
    \caption{Cosine similarity between embeddings of two augmentations of the same sample (``positives'') versus different samples (``negatives'') for a ViT trained with DCL. Positives exhibit markedly higher similarity than negatives.}\label{tab:aug_collapse}
\end{wraptable}

As shown in Tab.~\ref{tab:aug_collapse}, same-sample augmentations achieve a mean cosine similarity of approximately \(0.90\), whereas different-sample pairs remain below \(0.15\). This consistent gap across CIFAR-10/100 and mini-ImageNet rules out trivial global collapse and indicates that DCL promotes strong augmentation invariance while preserving inter-sample separability.





\begin{figure}[t]
\centering
\begin{tabular}{c@{\hspace{6pt}}c@{\hspace{6pt}}c}
\includegraphics[width=.318\linewidth]{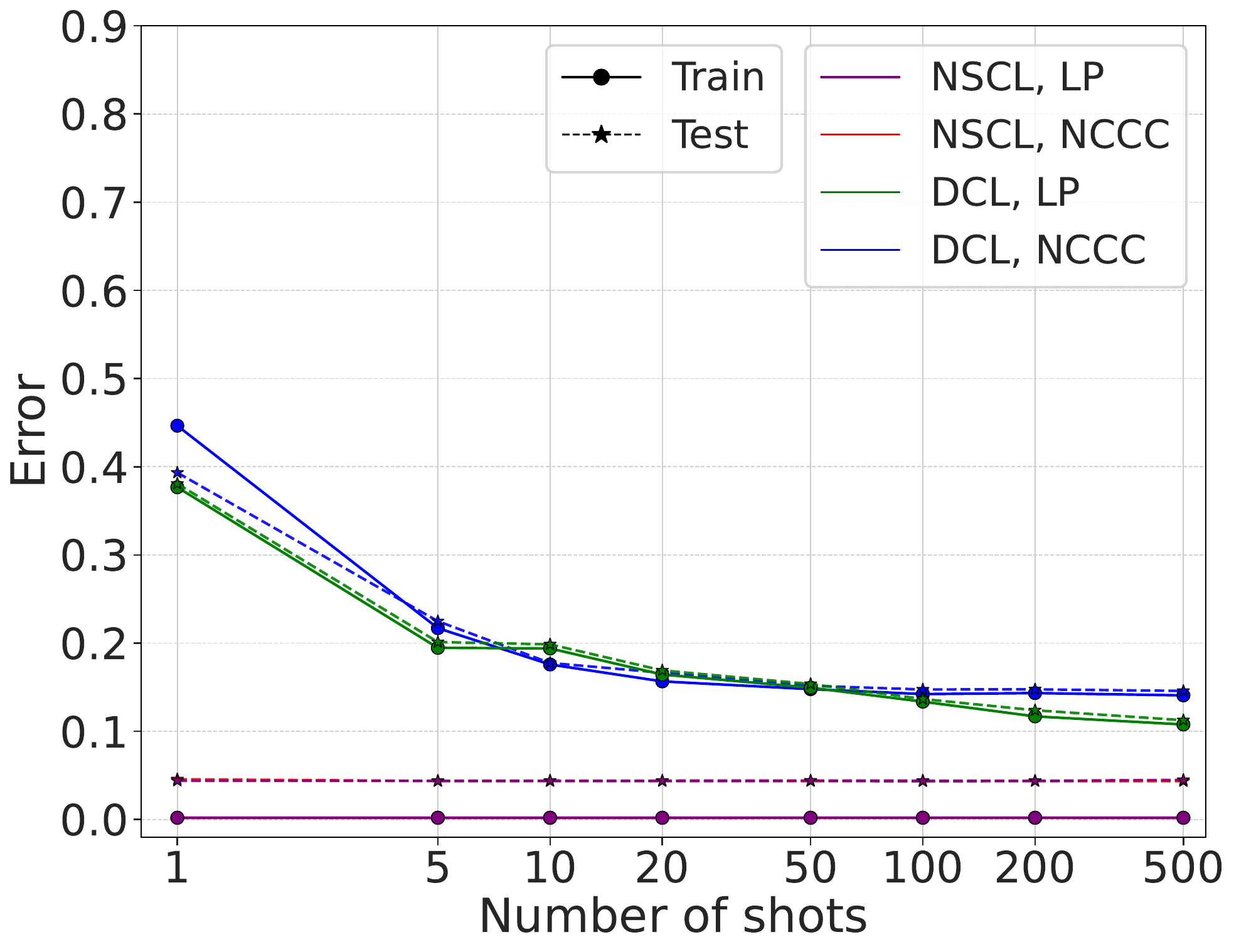} & \includegraphics[width=.318\linewidth]{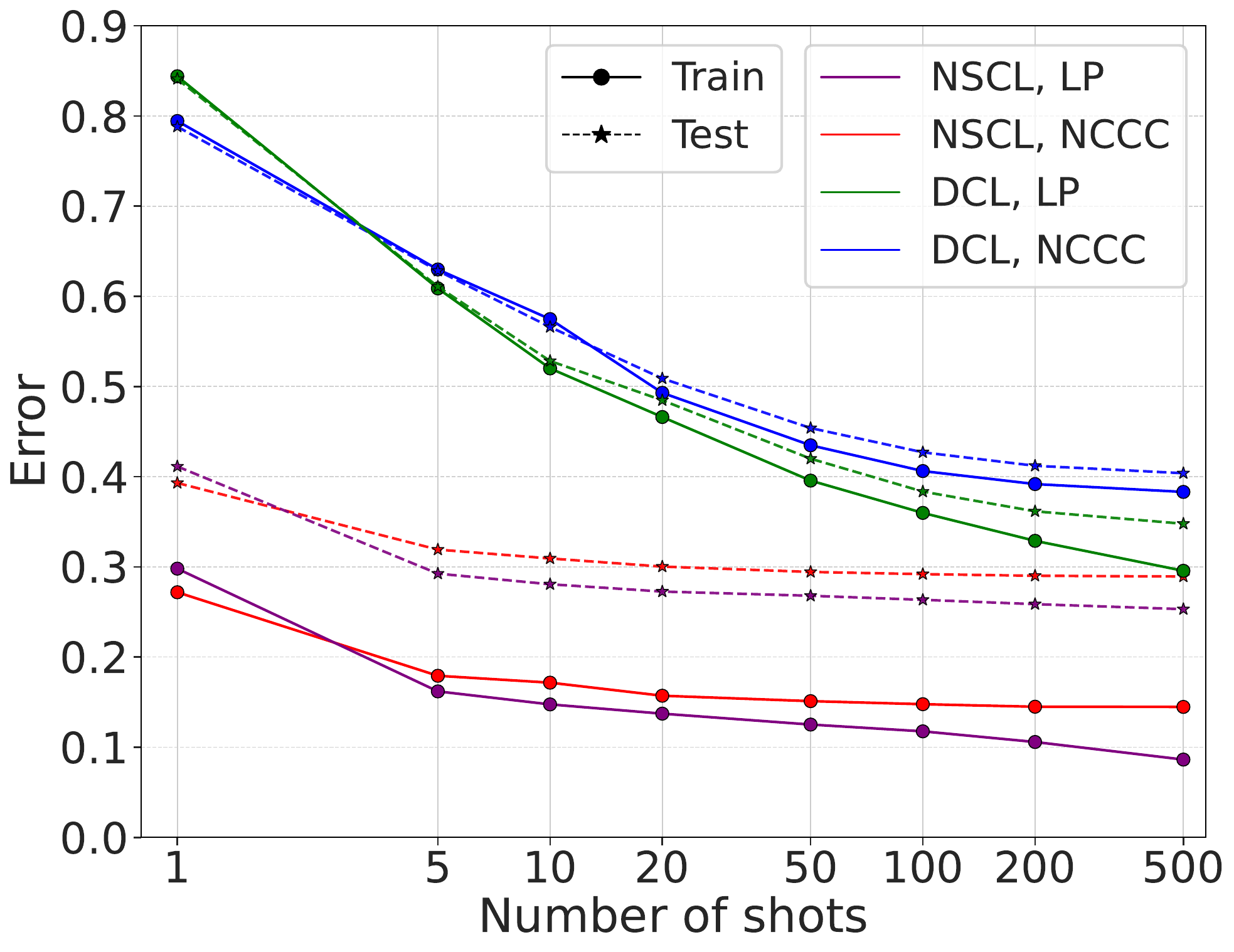} &
\includegraphics[width=.318\linewidth]{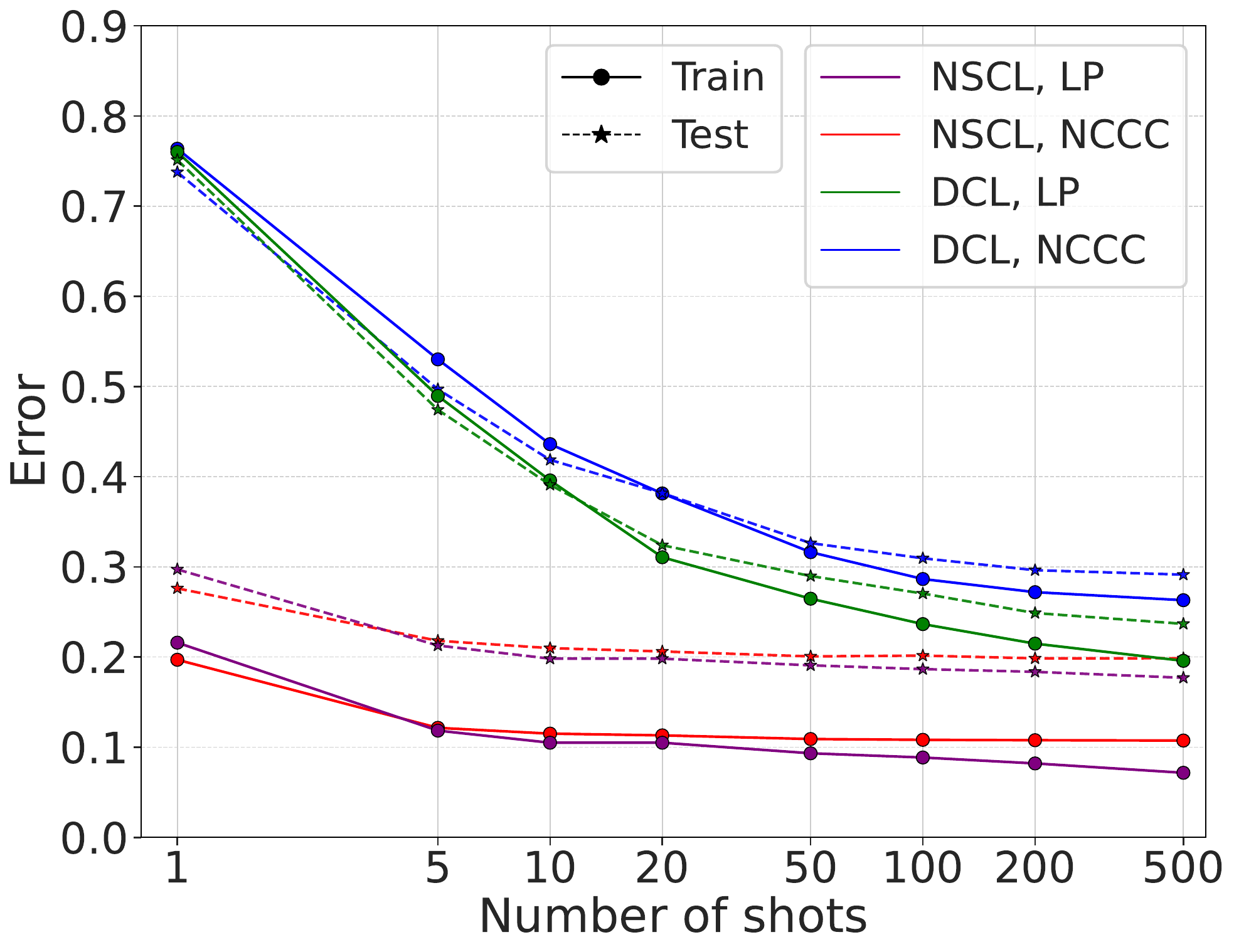} \\
\includegraphics[width=0.318\linewidth]{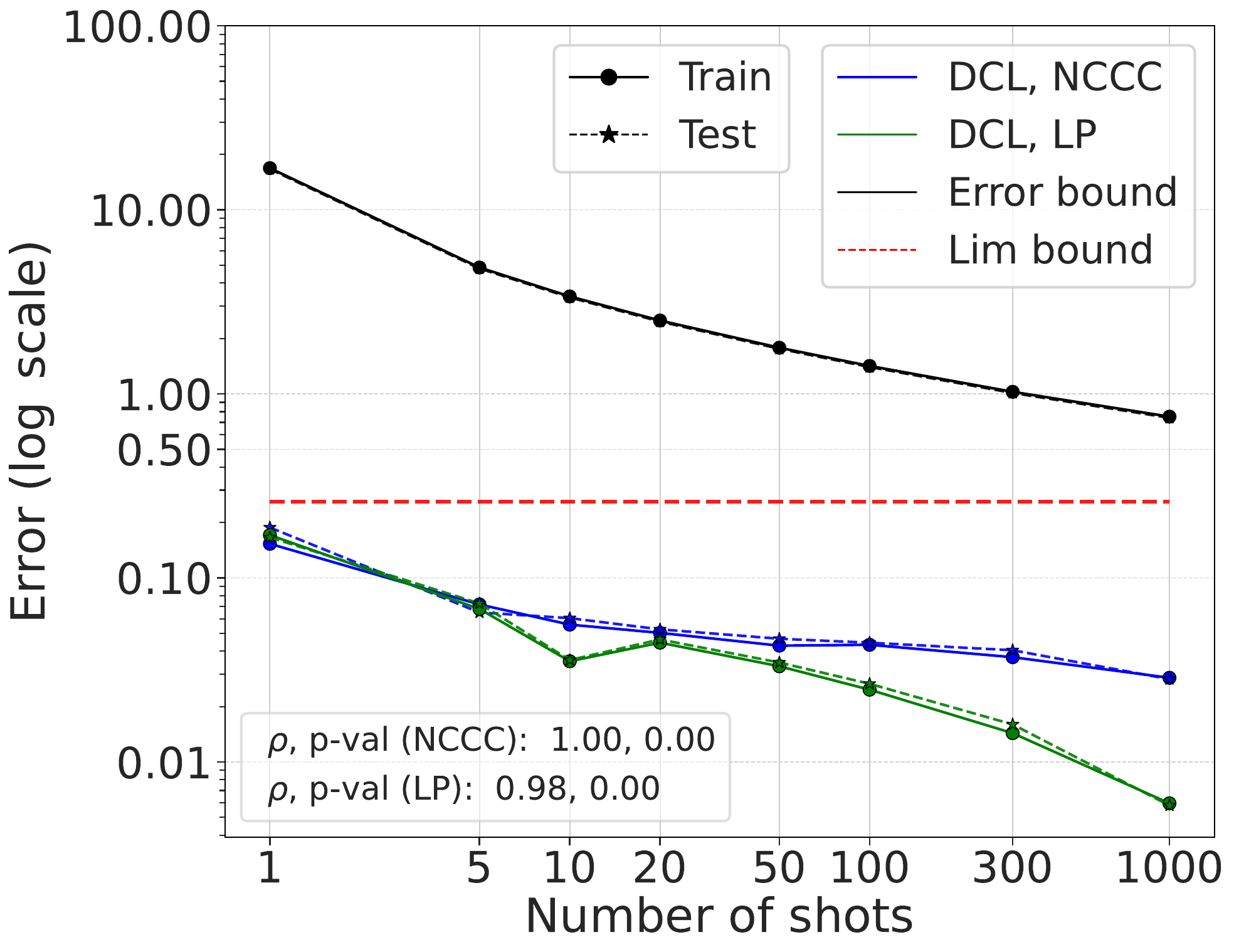} & \includegraphics[width=.318\linewidth]{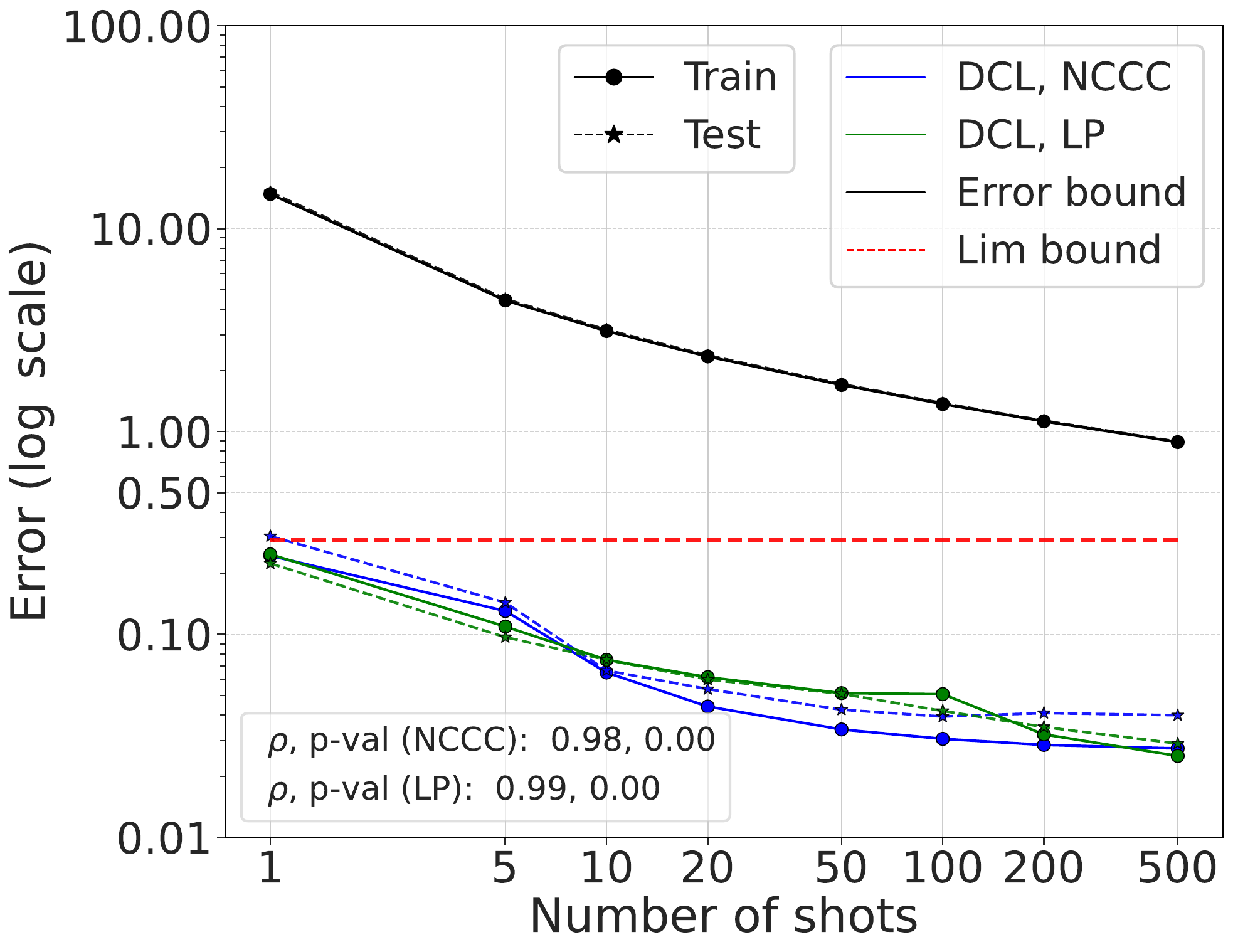} &
\includegraphics[width=0.318\linewidth]{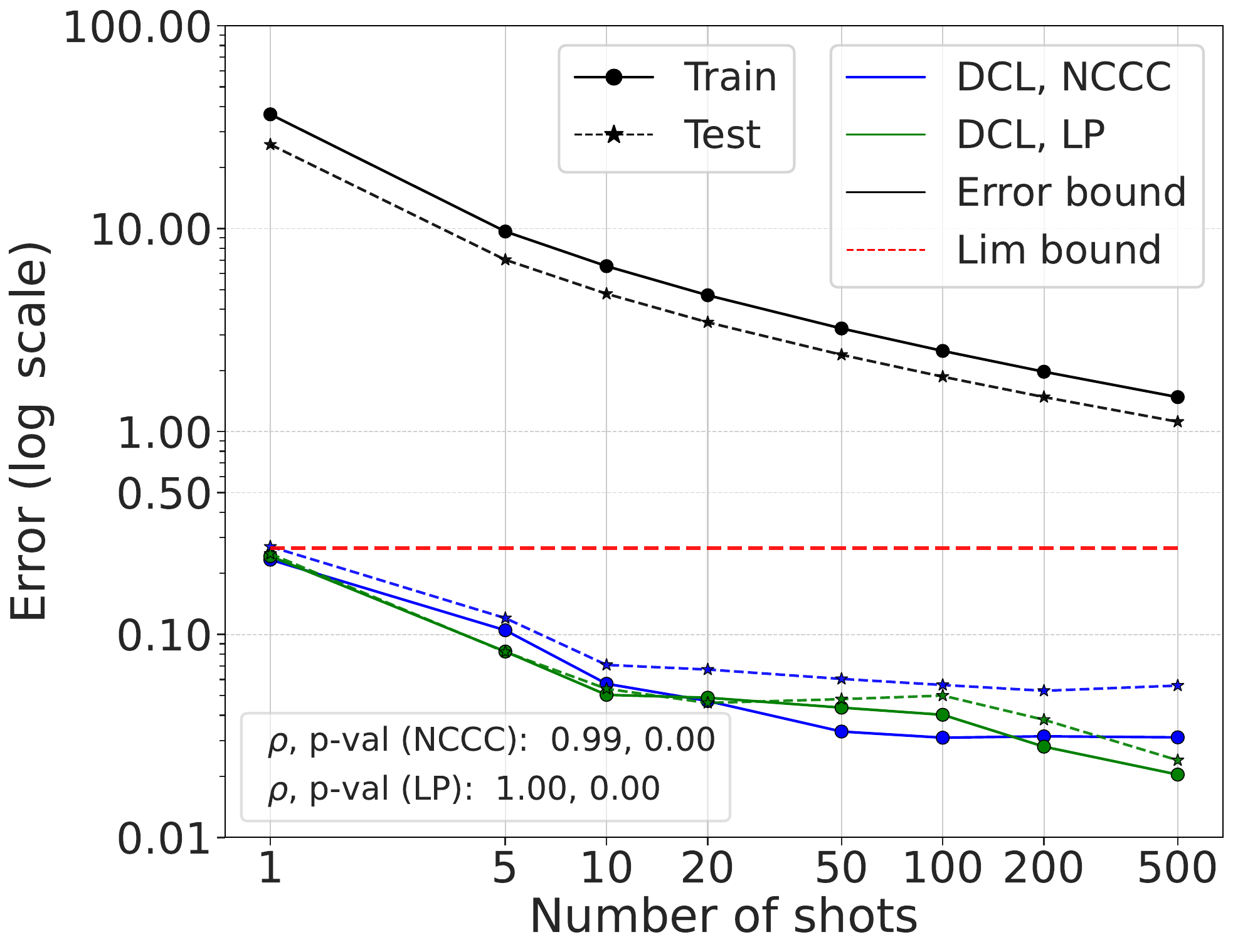} \\
{\small {\bf (a)} CIFAR10} & {\small {\bf (b)} CIFAR100} & {\small {\bf (c)} mini-ImageNet}
\end{tabular}
\caption{{\bf (Top)} We compare the $m$-shot error (Linear Probing (LP) and Nearest Class-Centered Classifier (NCCC)) for $C$-way (all-class) classification. {\em Although the NCCC and LP errors of the DCL-trained models are, as expected, higher than those of the NSCL-trained model, they remain fairly low.} {\bf (Bottom)} We compare the two-way $m$-shot NCCC and LP errors with the bound in Cor.~\ref{cor:error_bound}. The errors are bounded by our bound, which decreases with $m$. The dashed red {\em `Lim bound'} line specifies the bound at \(m = 10^6\), providing a tight test-time error estimate for large $m$. We also report correlation between the error bound and the errors $\mathrm{err}^{\mathrm{LP}}_{m,D}(f)$ and $\mathrm{err}^{\mathrm{NCCC}}_{m,D}(f)$ (on the test data).}
\label{fig:few_shot}
\end{figure}

\begin{figure}[t]
\centering
\begin{tabular}{c@{\hspace{6pt}}c@{\hspace{6pt}}c}
\includegraphics[width=.318\linewidth]{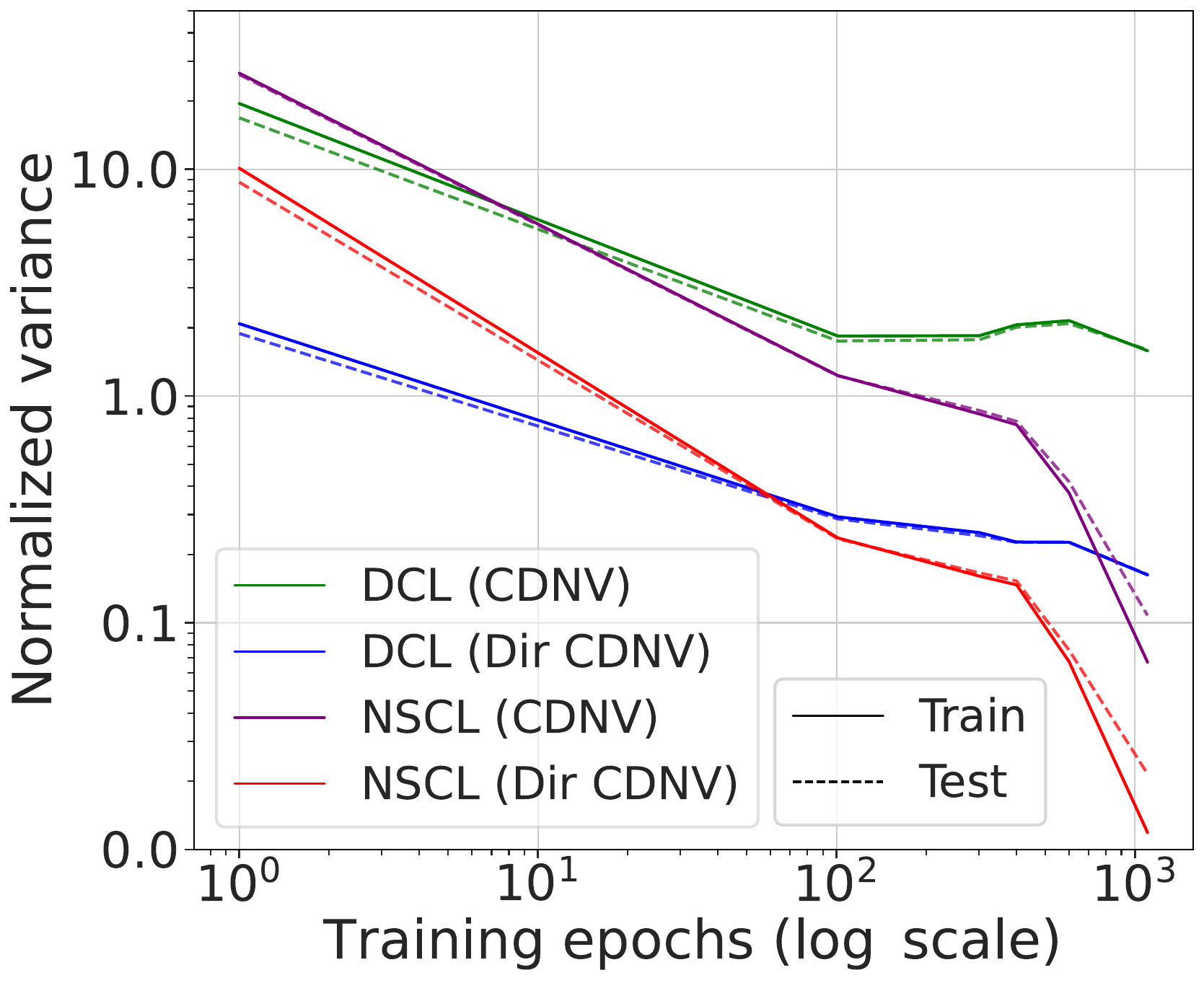} & \includegraphics[width=.318\linewidth]{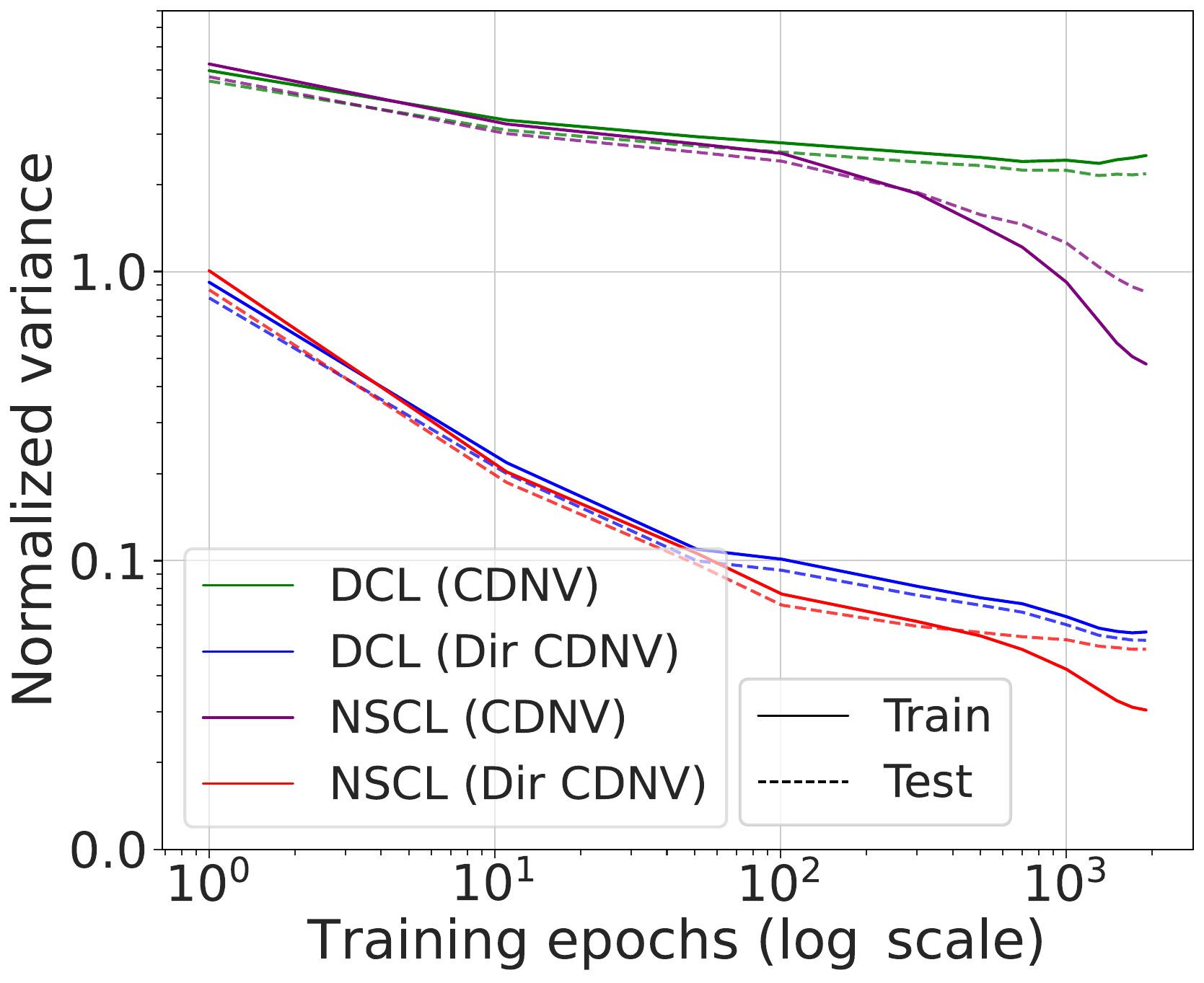} &
\includegraphics[width=.318\linewidth]{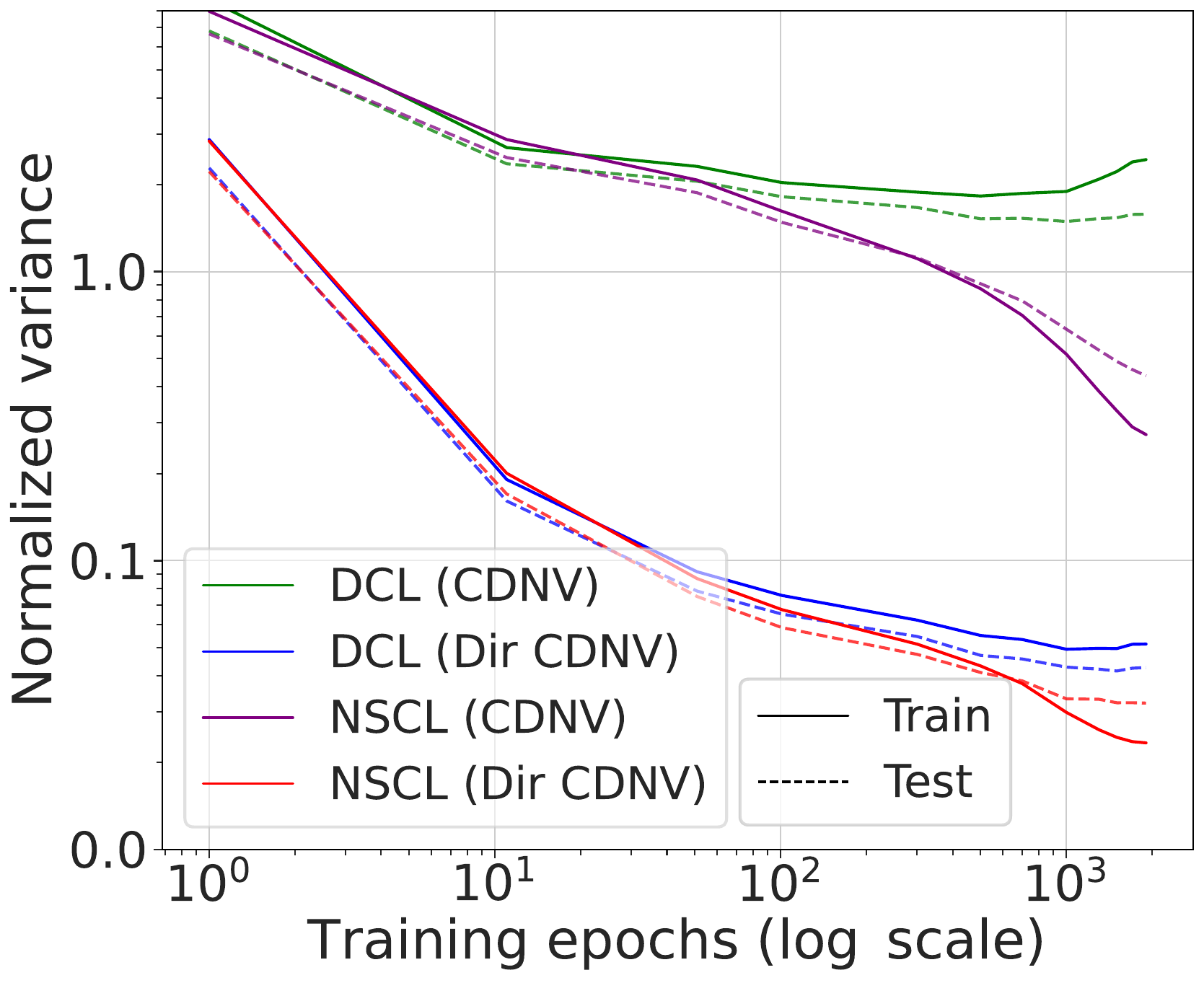} \\
{\small {\bf (a)} CIFAR10} & {\small {\bf (b)} CIFAR100} & {\small {\bf (c)} mini-ImageNet}
\end{tabular}
\caption{{\em DCL training yields a moderate reduction in CDNV and a significant reduction in directional CDNV, whereas NSCL training significantly decreases both.} We plot the CDNV and directional CDNV for both train and test data for DCL and NSCL-trained models over 2k epochs.}
\label{fig:cdnv}
\end{figure} 

{\bf Representation Alignment.\enspace} Thm.~\ref{thm:ssl_scl} shows that the DCL and NSCL objectives have similar values, but this does not guarantee that the embeddings learned by minimizing the two objectives will have identical geometry. To better understand the structure of representations learned by models trained with DCL and NSCL, we run experiments that quantify representation similarity using Centered Kernel Alignment (CKA)~\citep{pmlr-v97-kornblith19a} and Representation Similarity Analysis (RSA)~\citep{kriegeskorte2008rsa}. When embeddings are highly aligned, RSA and CKA are close to 1; with no alignment, they approach 0.
\begin{wraptable}[9]{r}{0.62\linewidth}
    \centering
    \resizebox{\linewidth}{!}{%
\begin{tabular}{c|cc|cc|cc}
    \toprule
    Encoder & \multicolumn{2}{c|}{CIFAR10} & \multicolumn{2}{c|}{CIFAR100} & \multicolumn{2}{c}{mini-ImageNet} \\
            & RSA & CKA & RSA & CKA & RSA & CKA \\
    \midrule
    ResNet-50      & 0.83 & 0.81 & 0.91 & 0.91 & 0.87 & 0.90 \\
    ViT-Base         & 0.86 & 0.82 & 0.91 & 0.91 & 0.89 & 0.89 \\
    \bottomrule
\end{tabular}
    }
    \caption{DCL and NSCL models trained for 300 epochs with matched initialization and minibatch order show very high representation similarity.}\label{tab:rsa_cka}
\end{wraptable}

We train DCL and NSCL models with the exact same initialization, same mini-batches, and same hyperparameter configurations as described in Sec.~\ref{sec:exp_setup}, for 300 epochs. As shown in Tab.~\ref{tab:rsa_cka}, the models achieve very high RSA and CKA values that confirm alignment at the representation level.

\section{Discussion, Limitations, and Future Work}

CL is at the forefront of modern pre-training techniques. Our work takes several steps toward a better understanding of CL by: establishing a duality between CL and NSCL, analyzing the minimizers of the NSCL loss, and linking the downstream error of CL-trained models to directional CDNV.

Nevertheless, our work has several important limitations that remain open for future investigation. While we show that the DCL and NSCL losses are close in value, this does not directly imply that their respective minimizers are close in parameter space or yield similar representations. It would be interesting to explore this connection both theoretically and empirically—for example, by monitoring the distance between two models trained using DCL and NSCL on the same batches. Another limitation is that the bound in Prop.~\ref{prop:error} (and Cor.~\ref{cor:error_bound}) involves large constants and scales linearly with $C'$, which may limit its practical utility. Tightening this bound by relaxing these dependencies would enhance its predictive power. Finally, extending these ideas to multimodal models such as CLIP~\citep{pmlr-v139-radford21a}, or to LLMs trained with analogous loss functions, presents an exciting direction for future research.

\newpage

\bibliographystyle{unsrt}
\bibliography{refs}

\newpage
\section*{NeurIPS Paper Checklist}

\begin{enumerate}

\item {\bf Claims}
    \item[] Question: Do the main claims made in the abstract and introduction accurately reflect the paper's contributions and scope?
    \item[] Answer: \answerYes{} 
    \item[] Justification: The abstract outlines both the theoretical and empirical contributions, as well as the main assumptions made in the paper. 
    \item[] Guidelines:
    \begin{itemize}
        \item The answer NA means that the abstract and introduction do not include the claims made in the paper.
        \item The abstract and/or introduction should clearly state the claims made, including the contributions made in the paper and important assumptions and limitations. A No or NA answer to this question will not be perceived well by the reviewers. 
        \item The claims made should match theoretical and experimental results, and reflect how much the results can be expected to generalize to other settings. 
        \item It is fine to include aspirational goals as motivation as long as it is clear that these goals are not attained by the paper. 
    \end{itemize}

\item {\bf Limitations}
    \item[] Question: Does the paper discuss the limitations of the work performed by the authors?
    \item[] Answer: \answerYes{} 
    \item[] Justification: We provide a limitations section in the main text.
    \item[] Guidelines:
    \begin{itemize}
        \item The answer NA means that the paper has no limitation while the answer No means that the paper has limitations, but those are not discussed in the paper. 
        \item The authors are encouraged to create a separate "Limitations" section in their paper.
        \item The paper should point out any strong assumptions and how robust the results are to violations of these assumptions (e.g., independence assumptions, noiseless settings, model well-specification, asymptotic approximations only holding locally). The authors should reflect on how these assumptions might be violated in practice and what the implications would be.
        \item The authors should reflect on the scope of the claims made, e.g., if the approach was only tested on a few datasets or with a few runs. In general, empirical results often depend on implicit assumptions, which should be articulated.
        \item The authors should reflect on the factors that influence the performance of the approach. For example, a facial recognition algorithm may perform poorly when image resolution is low or images are taken in low lighting. Or a speech-to-text system might not be used reliably to provide closed captions for online lectures because it fails to handle technical jargon.
        \item The authors should discuss the computational efficiency of the proposed algorithms and how they scale with dataset size.
        \item If applicable, the authors should discuss possible limitations of their approach to address problems of privacy and fairness.
        \item While the authors might fear that complete honesty about limitations might be used by reviewers as grounds for rejection, a worse outcome might be that reviewers discover limitations that aren't acknowledged in the paper. The authors should use their best judgment and recognize that individual actions in favor of transparency play an important role in developing norms that preserve the integrity of the community. Reviewers will be specifically instructed to not penalize honesty concerning limitations.
    \end{itemize}

\item {\bf Theory assumptions and proofs}
    \item[] Question: For each theoretical result, does the paper provide the full set of assumptions and a complete (and correct) proof?
    \item[] Answer: \answerYes{} 
    \item[] Justification: The paper provides the full set of assumptions for each theoretical result, along with complete and correct proofs in the appendix. All key logical steps in the derivations are clearly presented.
    \item[] Guidelines:
    \begin{itemize}
        \item The answer NA means that the paper does not include theoretical results. 
        \item All the theorems, formulas, and proofs in the paper should be numbered and cross-referenced.
        \item All assumptions should be clearly stated or referenced in the statement of any theorems.
        \item The proofs can either appear in the main paper or the supplemental material, but if they appear in the supplemental material, the authors are encouraged to provide a short proof sketch to provide intuition. 
        \item Inversely, any informal proof provided in the core of the paper should be complemented by formal proofs provided in appendix or supplemental material.
        \item Theorems and Lemmas that the proof relies upon should be properly referenced. 
    \end{itemize}

    \item {\bf Experimental result reproducibility}
    \item[] Question: Does the paper fully disclose all the information needed to reproduce the main experimental results of the paper to the extent that it affects the main claims and/or conclusions of the paper (regardless of whether the code and data are provided or not)?
    \item[] Answer: \answerYes{} 
    \item[] Justification: We describe the experiments and their setup in detail in Sec.~\ref{sec:experiments}. 
    \item[] Guidelines:
    \begin{itemize}
        \item The answer NA means that the paper does not include experiments.
        \item If the paper includes experiments, a No answer to this question will not be perceived well by the reviewers: Making the paper reproducible is important, regardless of whether the code and data are provided or not.
        \item If the contribution is a dataset and/or model, the authors should describe the steps taken to make their results reproducible or verifiable. 
        \item Depending on the contribution, reproducibility can be accomplished in various ways. For example, if the contribution is a novel architecture, describing the architecture fully might suffice, or if the contribution is a specific model and empirical evaluation, it may be necessary to either make it possible for others to replicate the model with the same dataset, or provide access to the model. In general. releasing code and data is often one good way to accomplish this, but reproducibility can also be provided via detailed instructions for how to replicate the results, access to a hosted model (e.g., in the case of a large language model), releasing of a model checkpoint, or other means that are appropriate to the research performed.
        \item While NeurIPS does not require releasing code, the conference does require all submissions to provide some reasonable avenue for reproducibility, which may depend on the nature of the contribution. For example
        \begin{enumerate}
            \item If the contribution is primarily a new algorithm, the paper should make it clear how to reproduce that algorithm.
            \item If the contribution is primarily a new model architecture, the paper should describe the architecture clearly and fully.
            \item If the contribution is a new model (e.g., a large language model), then there should either be a way to access this model for reproducing the results or a way to reproduce the model (e.g., with an open-source dataset or instructions for how to construct the dataset).
            \item We recognize that reproducibility may be tricky in some cases, in which case authors are welcome to describe the particular way they provide for reproducibility. In the case of closed-source models, it may be that access to the model is limited in some way (e.g., to registered users), but it should be possible for other researchers to have some path to reproducing or verifying the results.
        \end{enumerate}
    \end{itemize}

\item {\bf Open access to data and code}
    \item[] Question: Does the paper provide open access to the data and code, with sufficient instructions to faithfully reproduce the main experimental results, as described in supplemental material?
    \item[] Answer: \answerYes{} 
    \item[] Justification: We provide well-documented code in the supplemental material to reproduce our experimental results. We use publicly available datasets in our work.
    \item[] Guidelines:
    \begin{itemize}
        \item The answer NA means that paper does not include experiments requiring code.
        \item Please see the NeurIPS code and data submission guidelines (\url{https://nips.cc/public/guides/CodeSubmissionPolicy}) for more details.
        \item While we encourage the release of code and data, we understand that this might not be possible, so “No” is an acceptable answer. Papers cannot be rejected simply for not including code, unless this is central to the contribution (e.g., for a new open-source benchmark).
        \item The instructions should contain the exact command and environment needed to run to reproduce the results. See the NeurIPS code and data submission guidelines (\url{https://nips.cc/public/guides/CodeSubmissionPolicy}) for more details.
        \item The authors should provide instructions on data access and preparation, including how to access the raw data, preprocessed data, intermediate data, and generated data, etc.
        \item The authors should provide scripts to reproduce all experimental results for the new proposed method and baselines. If only a subset of experiments are reproducible, they should state which ones are omitted from the script and why.
        \item At submission time, to preserve anonymity, the authors should release anonymized versions (if applicable).
        \item Providing as much information as possible in supplemental material (appended to the paper) is recommended, but including URLs to data and code is permitted.
    \end{itemize}

\item {\bf Experimental setting/details}
    \item[] Question: Does the paper specify all the training and test details (e.g., data splits, hyperparameters, how they were chosen, type of optimizer, etc.) necessary to understand the results?
    \item[] Answer: \answerYes{} 
    \item[] Justification: Yes, we provide full details required to reproduce our experiments. 
    \item[] Guidelines:
    \begin{itemize}
        \item The answer NA means that the paper does not include experiments.
        \item The experimental setting should be presented in the core of the paper to a level of detail that is necessary to appreciate the results and make sense of them.
        \item The full details can be provided either with the code, in appendix, or as supplemental material.
    \end{itemize}

\item {\bf Experiment statistical significance}
    \item[] Question: Does the paper report error bars suitably and correctly defined or other appropriate information about the statistical significance of the experiments?
    \item[] Answer: \answerYes{} 
    \item[] Justification: We include error bars when appropriate and feasible, given our computational budget. For instance, in Tab.~\ref{tab:comp} and Fig.~\ref{fig:loss_with_C}, where the experiments were averaged over different sample selections or class subsets, we report the results using means and standard deviations or error bars. Due to limited computational resources, we could not afford to repeat the more computationally intensive experiments. Instead, we prioritized running them on multiple datasets.
    \item[] Guidelines:
    \begin{itemize}
        \item The answer NA means that the paper does not include experiments.
        \item The authors should answer "Yes" if the results are accompanied by error bars, confidence intervals, or statistical significance tests, at least for the experiments that support the main claims of the paper.
        \item The factors of variability that the error bars are capturing should be clearly stated (for example, train/test split, initialization, random drawing of some parameter, or overall run with given experimental conditions).
        \item The method for calculating the error bars should be explained (closed form formula, call to a library function, bootstrap, etc.)
        \item The assumptions made should be given (e.g., Normally distributed errors).
        \item It should be clear whether the error bar is the standard deviation or the standard error of the mean.
        \item It is OK to report 1-sigma error bars, but one should state it. The authors should preferably report a 2-sigma error bar than state that they have a 96\% CI, if the hypothesis of Normality of errors is not verified.
        \item For asymmetric distributions, the authors should be careful not to show in tables or figures symmetric error bars that would yield results that are out of range (e.g. negative error rates).
        \item If error bars are reported in tables or plots, The authors should explain in the text how they were calculated and reference the corresponding figures or tables in the text.
    \end{itemize}

\item {\bf Experiments compute resources}
    \item[] Question: For each experiment, does the paper provide sufficient information on the computer resources (type of compute workers, memory, time of execution) needed to reproduce the experiments?
    \item[] Answer: \answerYes{}{} 
    \item[] Justification: Yes, we provide information about our compute resources, and memory.
    \item[] Guidelines:
    \begin{itemize}
        \item The answer NA means that the paper does not include experiments.
        \item The paper should indicate the type of compute workers CPU or GPU, internal cluster, or cloud provider, including relevant memory and storage.
        \item The paper should provide the amount of compute required for each of the individual experimental runs as well as estimate the total compute. 
        \item The paper should disclose whether the full research project required more compute than the experiments reported in the paper (e.g., preliminary or failed experiments that didn't make it into the paper). 
    \end{itemize}
    
\item {\bf Code of ethics}
    \item[] Question: Does the research conducted in the paper conform, in every respect, with the NeurIPS Code of Ethics \url{https://neurips.cc/public/EthicsGuidelines}?
    \item[] Answer: \answerYes{} 
    \item[] Justification: Our work conforms with the NeurIPS Code of Ethics.
    \item[] Guidelines:
    \begin{itemize}
        \item The answer NA means that the authors have not reviewed the NeurIPS Code of Ethics.
        \item If the authors answer No, they should explain the special circumstances that require a deviation from the Code of Ethics.
        \item The authors should make sure to preserve anonymity (e.g., if there is a special consideration due to laws or regulations in their jurisdiction).
    \end{itemize}

\item {\bf Broader impacts}
    \item[] Question: Does the paper discuss both potential positive societal impacts and negative societal impacts of the work performed?
    \item[] Answer: \answerNA{} 
    \item[] Justification: This is a theoretical paper on self-supervised learning. While it analyzes algorithms that significantly impact widely used applications, the contributions of this work have no direct societal impact. 
    \item[] Guidelines:
    \begin{itemize}
        \item The answer NA means that there is no societal impact of the work performed.
        \item If the authors answer NA or No, they should explain why their work has no societal impact or why the paper does not address societal impact.
        \item Examples of negative societal impacts include potential malicious or unintended uses (e.g., disinformation, generating fake profiles, surveillance), fairness considerations (e.g., deployment of technologies that could make decisions that unfairly impact specific groups), privacy considerations, and security considerations.
        \item The conference expects that many papers will be foundational research and not tied to particular applications, let alone deployments. However, if there is a direct path to any negative applications, the authors should point it out. For example, it is legitimate to point out that an improvement in the quality of generative models could be used to generate deepfakes for disinformation. On the other hand, it is not needed to point out that a generic algorithm for optimizing neural networks could enable people to train models that generate Deepfakes faster.
        \item The authors should consider possible harms that could arise when the technology is being used as intended and functioning correctly, harms that could arise when the technology is being used as intended but gives incorrect results, and harms following from (intentional or unintentional) misuse of the technology.
        \item If there are negative societal impacts, the authors could also discuss possible mitigation strategies (e.g., gated release of models, providing defenses in addition to attacks, mechanisms for monitoring misuse, mechanisms to monitor how a system learns from feedback over time, improving the efficiency and accessibility of ML).
    \end{itemize}
    
\item {\bf Safeguards}
    \item[] Question: Does the paper describe safeguards that have been put in place for responsible release of data or models that have a high risk for misuse (e.g., pretrained language models, image generators, or scraped datasets)?
    \item[] Answer: \answerNA{} 
    \item[] Justification: As a theoretical paper, it does not pose risks of this kind.
    \item[] Guidelines:
    \begin{itemize}
        \item The answer NA means that the paper poses no such risks.
        \item Released models that have a high risk for misuse or dual-use should be released with necessary safeguards to allow for controlled use of the model, for example by requiring that users adhere to usage guidelines or restrictions to access the model or implementing safety filters. 
        \item Datasets that have been scraped from the Internet could pose safety risks. The authors should describe how they avoided releasing unsafe images.
        \item We recognize that providing effective safeguards is challenging, and many papers do not require this, but we encourage authors to take this into account and make a best faith effort.
    \end{itemize}

\item {\bf Licenses for existing assets}
    \item[] Question: Are the creators or original owners of assets (e.g., code, data, models), used in the paper, properly credited and are the license and terms of use explicitly mentioned and properly respected?
    \item[] Answer: \answerYes{} 
    \item[] Justification: We use publicly available datasets and code with appropriate attribution and in accordance with their respective licenses. Specifically:
    \begin{itemize}
        \item {\bf Datasets:} CIFAR-10 and CIFAR100~\citep{krizhevsky2009learning}, mini-ImageNet~\citep{NIPS2016_90e13578} and SVHN~\citep{37648} that are explained in Appendix~\ref{app:experiments}.
        \item {\bf Frameworks:} Our models are implemented in PyTorch~\citep{paszke2017automatic}, which is released under the BSD 3 License.
        \item {\bf Codebases:} We build on official or open-source implementations of SimCLR and MoCo, distributed under Apache 2.0 and CC-BY-NC 4.0 Licenses, respectively.
    \end{itemize}
    \item[] Guidelines:
    \begin{itemize}
        \item The answer NA means that the paper does not use existing assets.
        \item The authors should cite the original paper that produced the code package or dataset.
        \item The authors should state which version of the asset is used and, if possible, include a URL.
        \item The name of the license (e.g., CC-BY 4.0) should be included for each asset.
        \item For scraped data from a particular source (e.g., website), the copyright and terms of service of that source should be provided.
        \item If assets are released, the license, copyright information, and terms of use in the package should be provided. For popular datasets, \url{paperswithcode.com/datasets} has curated licenses for some datasets. Their licensing guide can help determine the license of a dataset.
        \item For existing datasets that are re-packaged, both the original license and the license of the derived asset (if it has changed) should be provided.
        \item If this information is not available online, the authors are encouraged to reach out to the asset's creators.
    \end{itemize}

\item {\bf New assets}
    \item[] Question: Are new assets introduced in the paper well documented and is the documentation provided alongside the assets?
    \item[] Answer: \answerNA{} 
    \item[] Justification: We do not create any new assets.
    \item[] Guidelines:
    \begin{itemize}
        \item The answer NA means that the paper does not release new assets.
        \item Researchers should communicate the details of the dataset/code/model as part of their submissions via structured templates. This includes details about training, license, limitations, etc. 
        \item The paper should discuss whether and how consent was obtained from people whose asset is used.
        \item At submission time, remember to anonymize your assets (if applicable). You can either create an anonymized URL or include an anonymized zip file.
    \end{itemize}

\item {\bf Crowdsourcing and research with human subjects}
    \item[] Question: For crowdsourcing experiments and research with human subjects, does the paper include the full text of instructions given to participants and screenshots, if applicable, as well as details about compensation (if any)? 
    \item[] Answer: \answerNA{} 
    \item[] Justification: Our work does not involve crowdsourcing nor research with human subjects.
    \item[] Guidelines:
    \begin{itemize}
        \item The answer NA means that the paper does not involve crowdsourcing nor research with human subjects.
        \item Including this information in the supplemental material is fine, but if the main contribution of the paper involves human subjects, then as much detail as possible should be included in the main paper. 
        \item According to the NeurIPS Code of Ethics, workers involved in data collection, curation, or other labor should be paid at least the minimum wage in the country of the data collector. 
    \end{itemize}

\item {\bf Institutional review board (IRB) approvals or equivalent for research with human subjects}
    \item[] Question: Does the paper describe potential risks incurred by study participants, whether such risks were disclosed to the subjects, and whether Institutional Review Board (IRB) approvals (or an equivalent approval/review based on the requirements of your country or institution) were obtained?
    \item[] Answer: \answerNA{} 
    \item[] Justification: Our work does not involve crowdsourcing nor research with human subjects.
    \item[] Guidelines:
    \begin{itemize}
        \item The answer NA means that the paper does not involve crowdsourcing nor research with human subjects.
        \item Depending on the country in which research is conducted, IRB approval (or equivalent) may be required for any human subjects research. If you obtained IRB approval, you should clearly state this in the paper. 
        \item We recognize that the procedures for this may vary significantly between institutions and locations, and we expect authors to adhere to the NeurIPS Code of Ethics and the guidelines for their institution. 
        \item For initial submissions, do not include any information that would break anonymity (if applicable), such as the institution conducting the review.
    \end{itemize}

\item {\bf Declaration of LLM usage}
    \item[] Question: Does the paper describe the usage of LLMs if it is an important, original, or non-standard component of the core methods in this research? Note that if the LLM is used only for writing, editing, or formatting purposes and does not impact the core methodology, scientific rigorousness, or originality of the research, declaration is not required.
    \item[] Answer: \answerNA{} 
    \item[] Justification: We use LLMs for editing (e.g., grammar, spelling, word choice).
    \item[] Guidelines:
    \begin{itemize}
        \item The answer NA means that the core method development in this research does not involve LLMs as any important, original, or non-standard components.
        \item Please refer to our LLM policy (\url{https://neurips.cc/Conferences/2025/LLM}) for what should or should not be described.
    \end{itemize}

\end{enumerate}

\newpage
\appendix

\section{Additional Experiments}\label{app:experiments}

{\bf Datasets.\enspace} We experiment with the following standard vision classification datasets - CIFAR10 and CIFAR100~\citep{krizhevsky2009learning}, mini-ImageNet~\citep{NIPS2016_90e13578}, Tiny-ImageNet~\citep{deep-learning-thu-2020}, SVHN~\citep{37648} and ImageNet-1K~\citep{5206848}. CIFAR10 and CIFAR100 both consist of 50000 training images and 10000 validation images with 10 classes and 100 classes, respectively, uniformly distributed across the dataset, i.e., CIFAR10 has 5000 samples per class and CIFAR100 has 500 samples per class. mini-ImageNet also has 5000 test images on top of 50000 train and 10000 validation images, with 100 of 1000 classes from ImageNet-1K~\citep{5206848} (at the original resolution). Tiny-ImageNet contains 100000 images downsampled to $64\times64$, with total 200 classes from IM-1K. Each class has 500 training, 50 validation, and 50 test images. SVHN consists of digit classification data with 10 classes from real-world images, organized into "train" (73,257 samples), "test" (26,032 samples), and "extra" (531,131 samples) splits. For scalability verification, we combine the "train" and "extra" splits during training for validating Thm.~\ref{thm:ssl_scl} (shown in Fig.~\ref{fig:losses_during_training_simclr}). ImageNet-1K spans 1000 object classes and contains 1,281,167 training images, 50,000 validation images and 100,000 test images. We use "train" split for training and "validation" split for reporting test results.

\begin{figure}[h]
    \centering
\begin{tabular}{c@{\hspace{6pt}}c@{}c@{}c}
    \rotatebox{90}{\hspace{40pt}\text{CIFAR10}} &
    \includegraphics[width=0.3\linewidth]{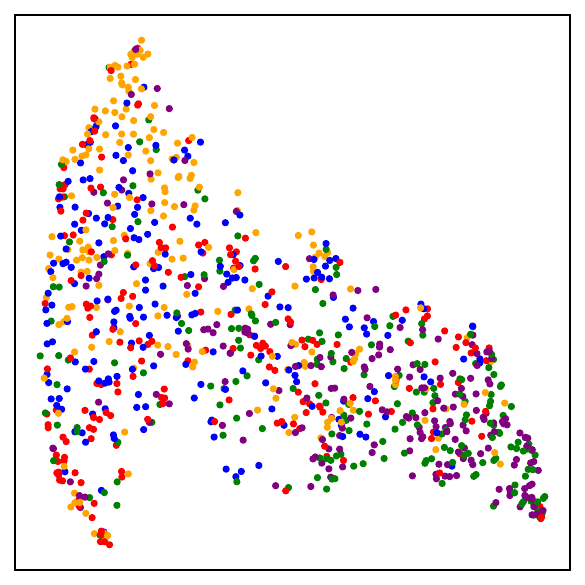} &
    \includegraphics[width=0.3\linewidth]{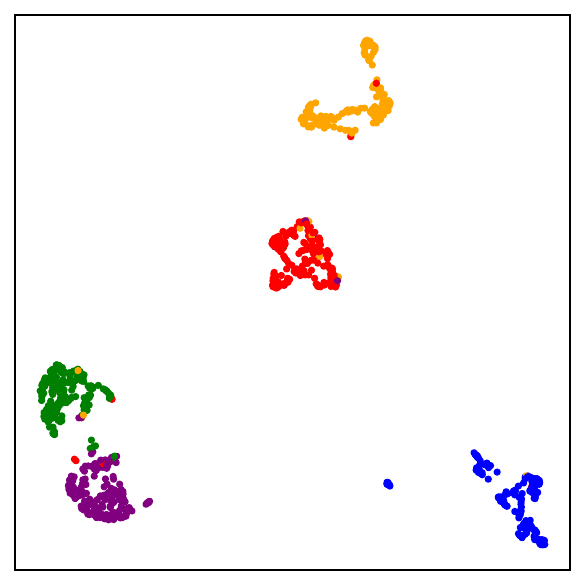} &
    \includegraphics[width=0.3\linewidth]{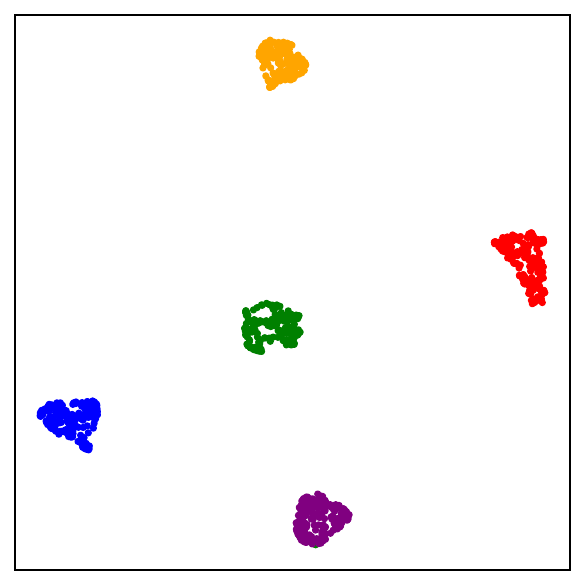} \\
    \rotatebox{90}{\hspace{41pt}\text{CIFAR100}} &
    \includegraphics[width=0.3\linewidth]{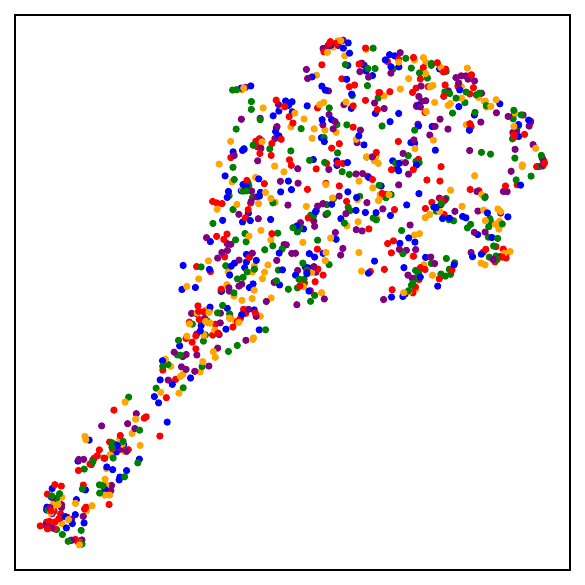} &
    \includegraphics[width=0.3\linewidth]{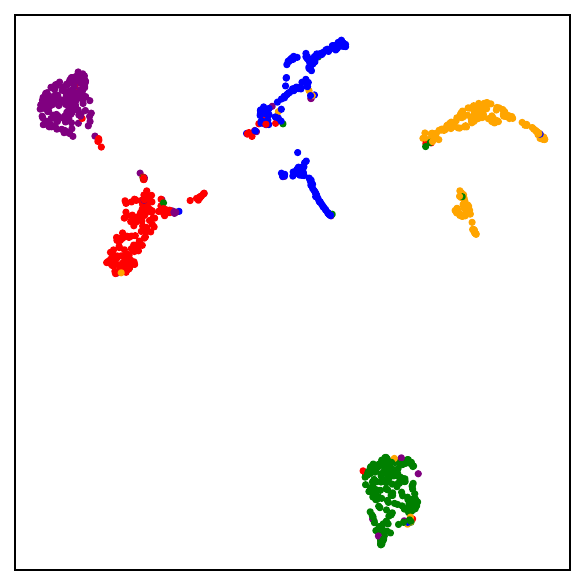} &
    \includegraphics[width=0.3\linewidth]{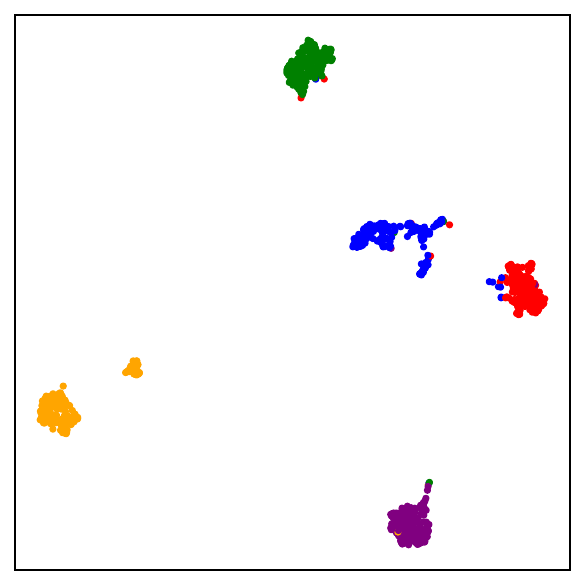} \\
    \rotatebox{90}{\hspace{32pt}\text{mini-ImageNet}} &
    \includegraphics[width=0.3\linewidth]{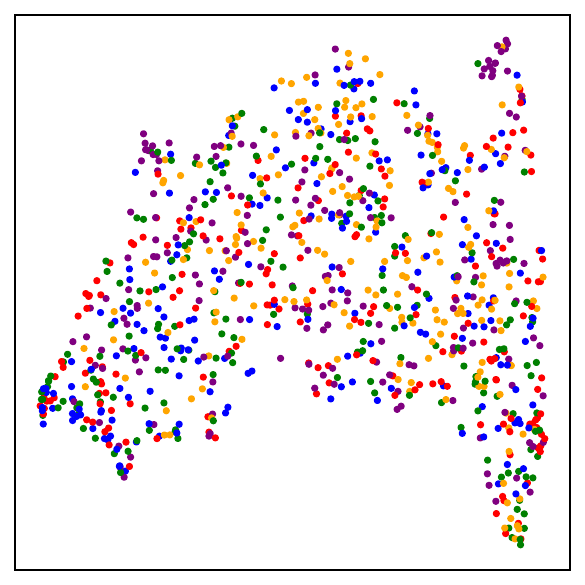} &
    \includegraphics[width=0.3\linewidth]{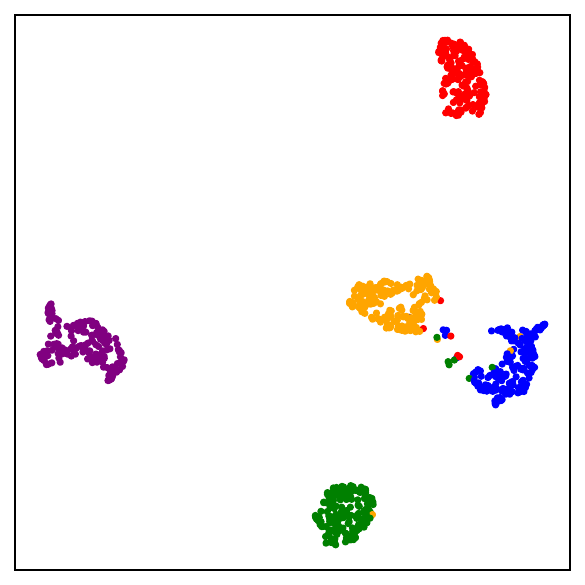} &
    \includegraphics[width=0.3\linewidth]{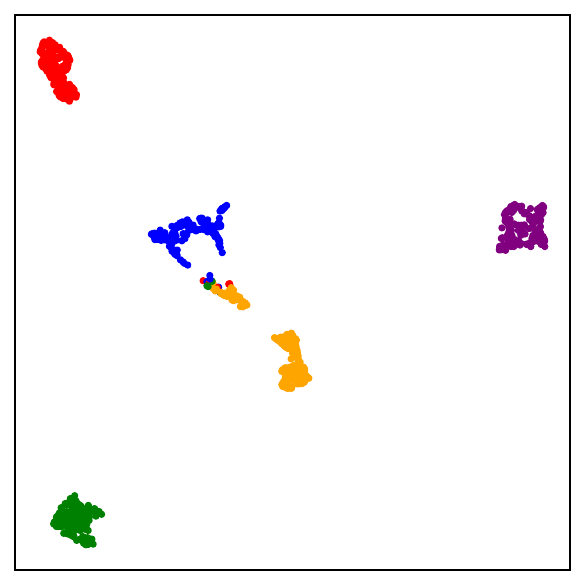} \\
    & {\bf (a)} Initialization & {\bf (b)} DCL & {\bf (c)} NSCL 
\end{tabular}
    \caption{UMAP visualizations of representations from models trained with different objectives (Random init, DCL, NSCL) on CIFAR10, CIFAR100, and mini-ImageNet. Each point corresponds to an image embedding colored by class. Better clustering and separation are evident as we move from Random to NSCL.}
    \label{fig:umap-full-simclr}
\end{figure}

\begin{figure}[h]
\centering
\begin{tabular}{c@{\hspace{6pt}}c@{\hspace{6pt}}c@{\hspace{6pt}}c}
\includegraphics[width=.235\linewidth]{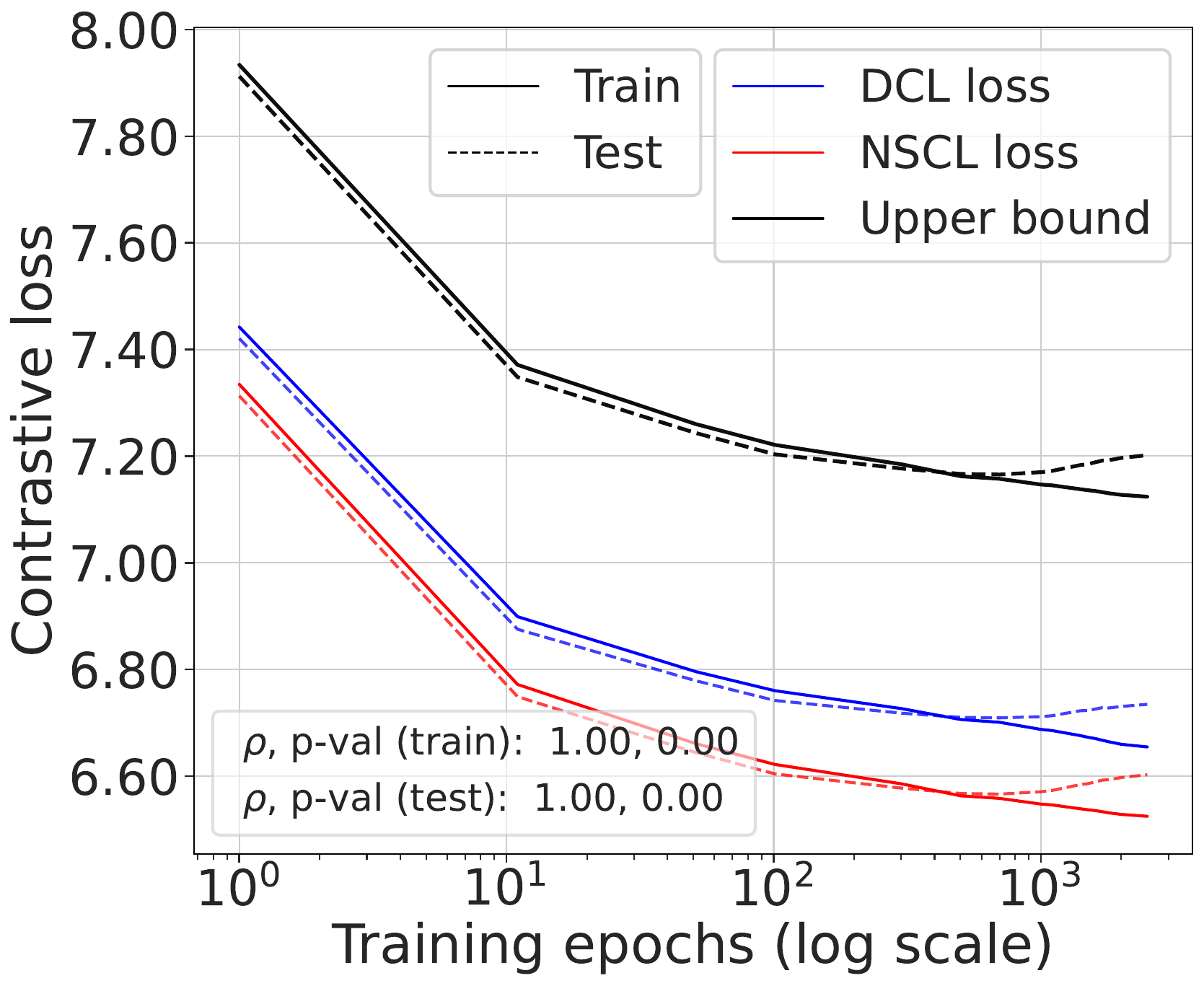} &
\includegraphics[width=.235\linewidth]{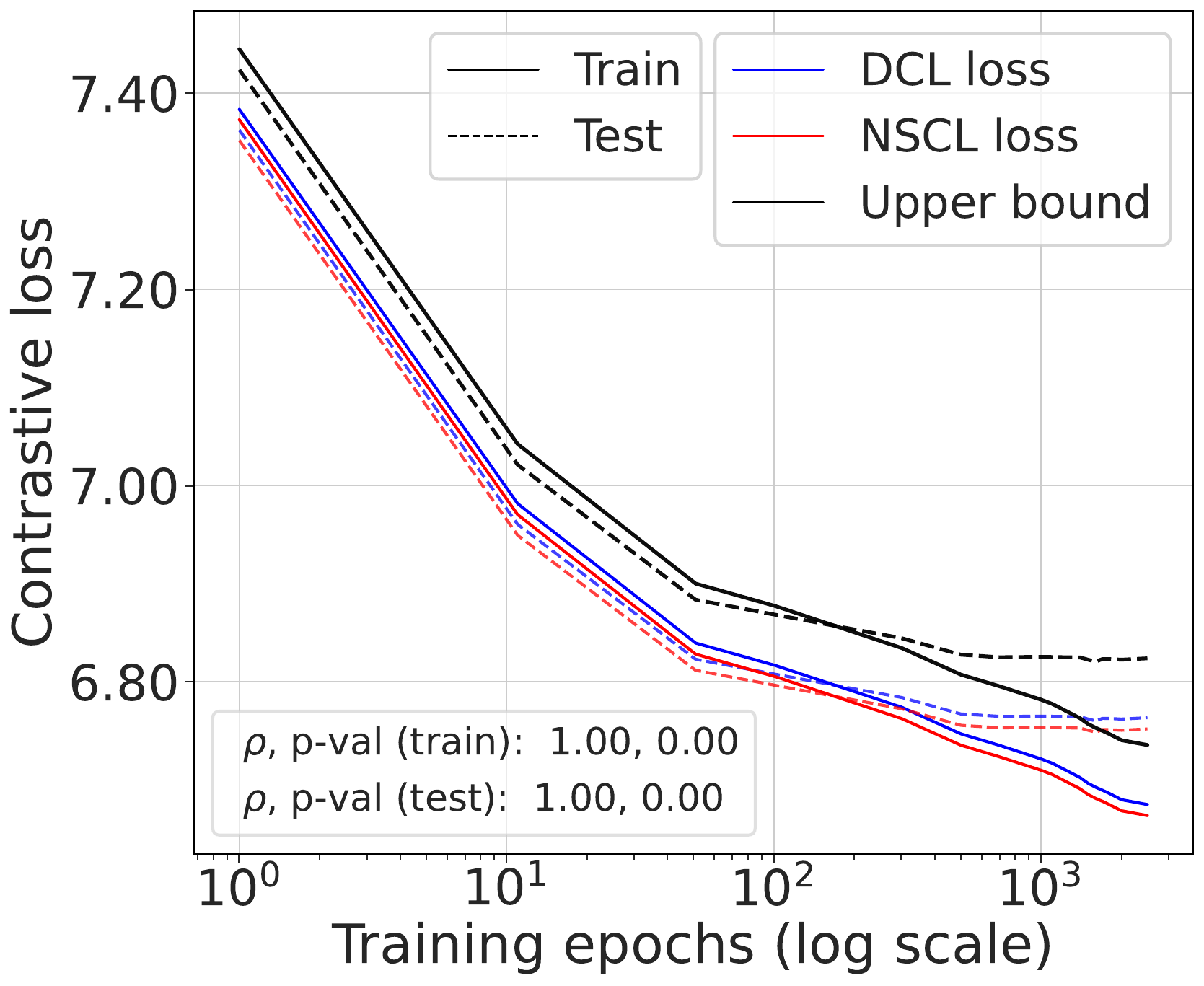} &
\includegraphics[width=.235\linewidth]{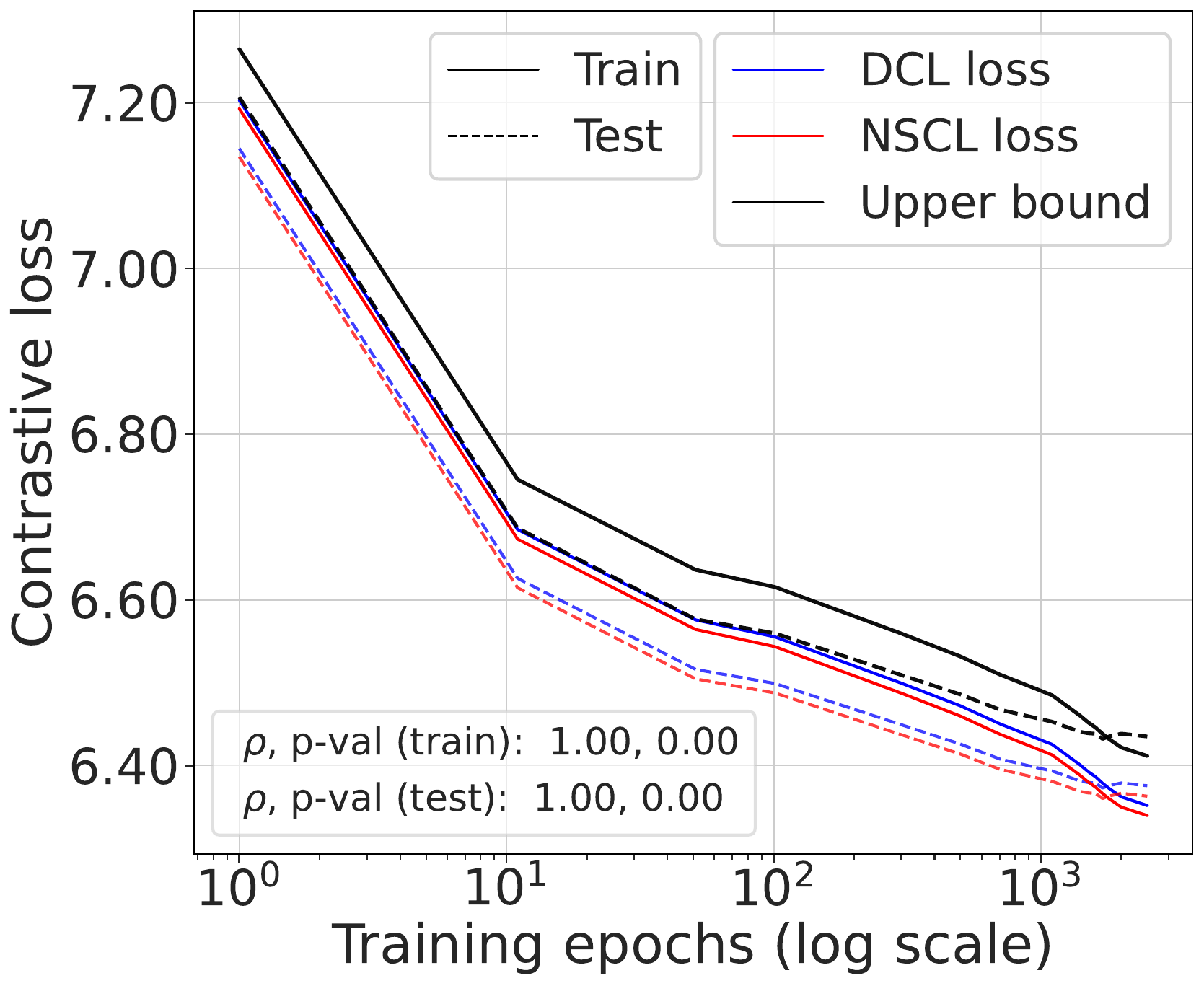} &
\includegraphics[width=.235\linewidth]{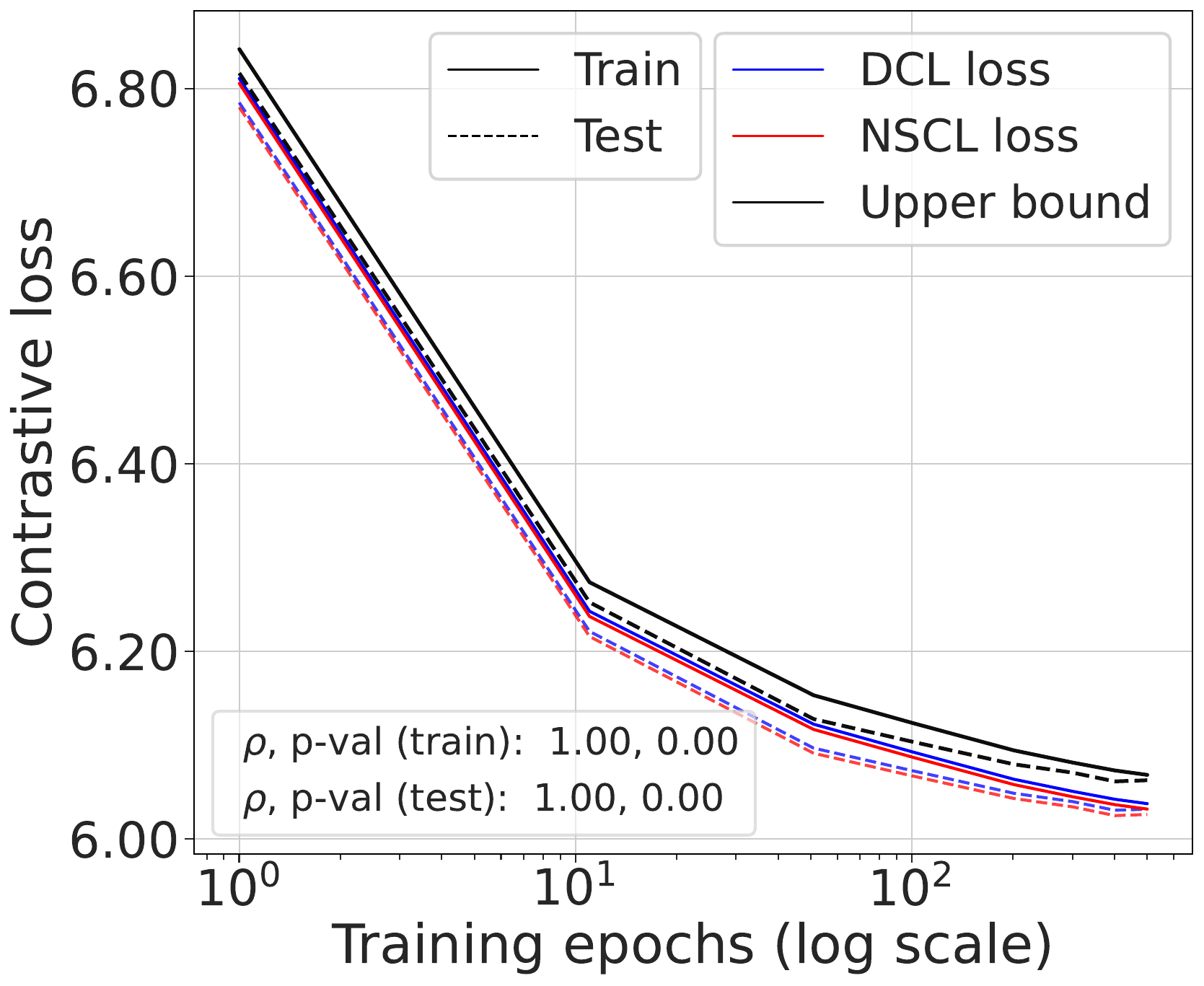}  
\\
{\small {\bf (a)} CIFAR10} & {\small {\bf (b)} CIFAR100} & {\small {\bf (c)} mini-ImageNet} & {\small {\bf (d)} Tiny-ImageNet}
\end{tabular}
\caption{We reproduced {\bf Fig.~\ref{fig:losses_during_training_simclr} (top)} for SimCLR with ViT-Base architecture. The loss-curves remain tightly bound as per Thm~\ref{thm:ssl_scl}, independent of the encoder network architecture.}
\label{fig:vit_losses_during_training_simclr}
\end{figure}

\begin{figure}[t]
\centering
\begin{tabular}{c@{\hspace{6pt}}c@{\hspace{6pt}}c}
    \includegraphics[width=.488\linewidth]{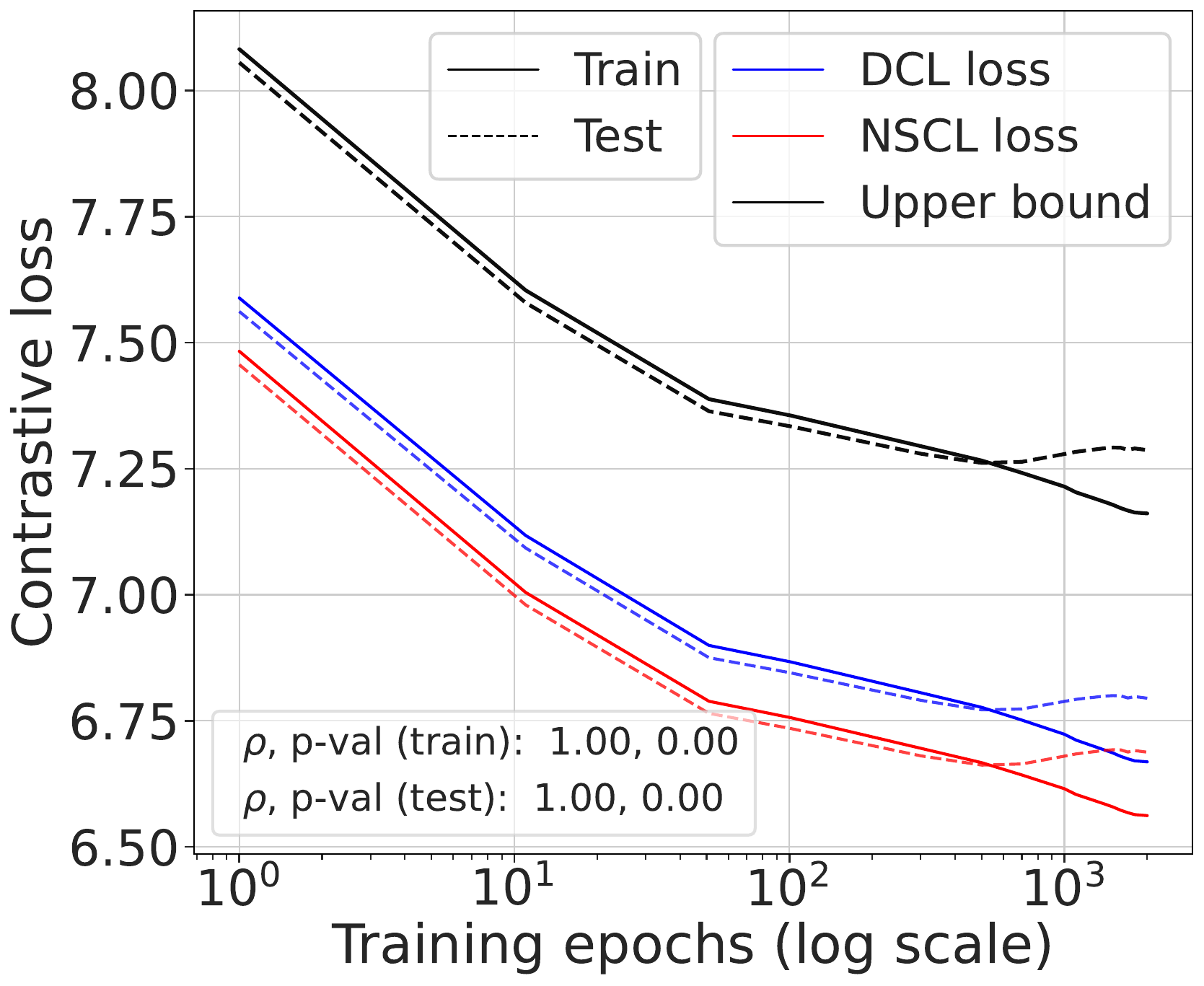} & \includegraphics[width=.488\linewidth]{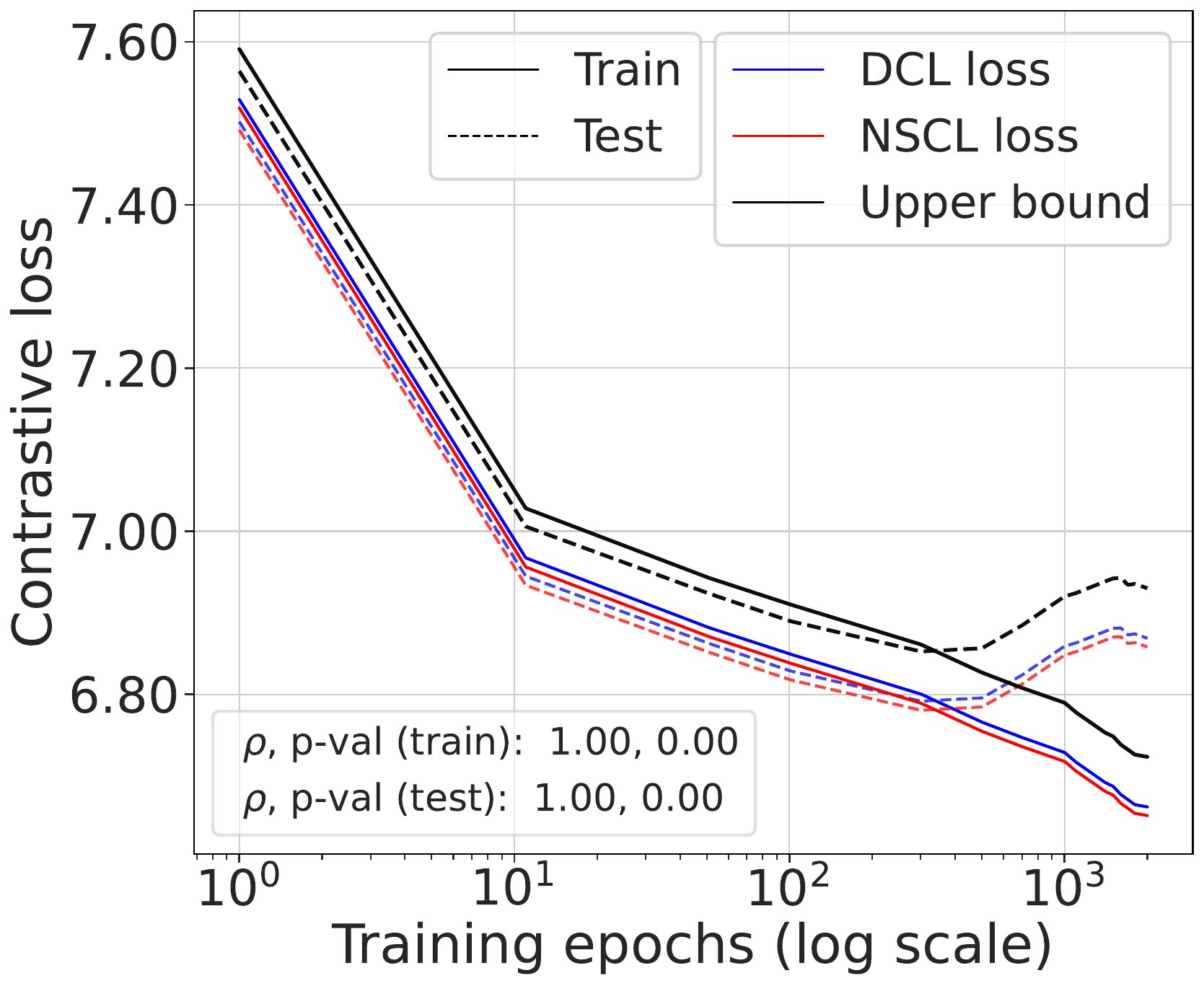} 
    \\

    \includegraphics[width=.488\linewidth]{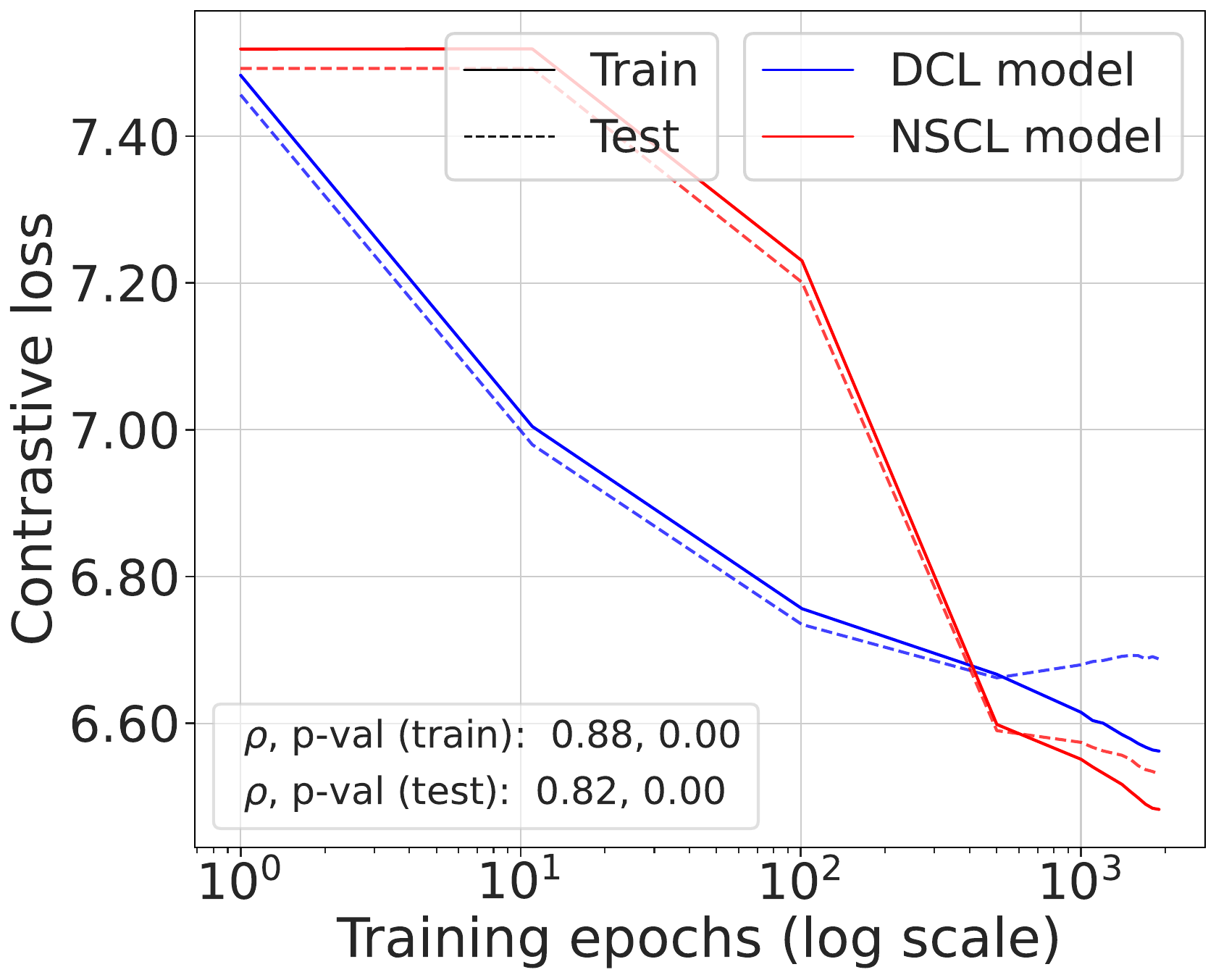} & \includegraphics[width=.488\linewidth]{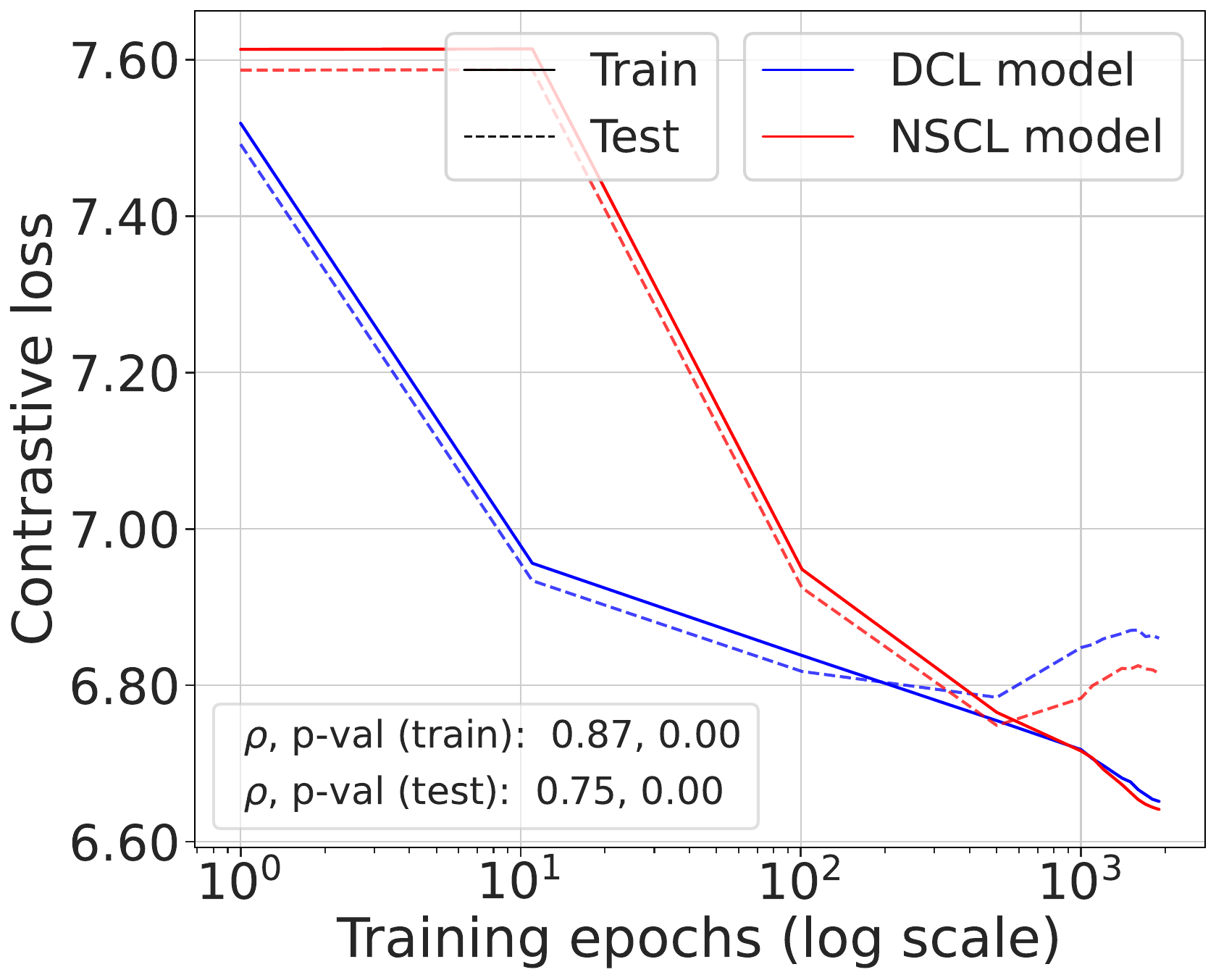} 
    \\
    
    {\bf (a)} CIFAR10 & {\bf (b)} CIFAR100 
\end{tabular}
\caption{{\bf The DCL and NSCL losses, along with our bound, for MoCo.} Same as Fig.~\ref{fig:losses_during_training_simclr} with MoCo training in place of SimCLR.}
\label{fig:losses_during_training_moco}
\end{figure}

{\bf Data augmentations.\enspace} We use the same augmentations as in SimCLR~\citep{pmlr-v119-chen20j}. For experiments on mini-ImageNet, and IM-1K, we use the following pipeline: random resized cropping to \(224 \times 224\), random horizontal flipping, color jittering (brightness, contrast, saturation: $0.8$; hue: $0.2$), random grayscale conversion (\(p=0.2\)), and Gaussian blur (applied with probability $0.1$ using a \(3 \times 3\) kernel and \(\sigma = 1.5\)). For CIFAR datasets and SVHN, we adopt a similar pipeline with appropriately scaled parameters. The crop size is adjusted to \(32 \times 32\), and the color jitter parameters are scaled to saturation $0.4$, and hue $0.1$. For Tiny-ImageNet, we use the same saturation and hue values as CIFAR datasets with the crop size is adjusted to \(64 \times 64\).


{\bf UMAP visualizations of clustering behaviors.\enspace} In Fig.~\ref{fig:umap-simclr-epochs}, we visualize the clustering behavior of DCL and NSCL-trained representations learned on mini-ImageNet using 2D UMAP~\citep{2018arXivUMAP} projections. We randomly selected 5 classes from mini-ImageNet and sampled 200 images per class, projecting their corresponding embeddings into 2D space using UMAP. As shown in Fig.~\ref{fig:umap-simclr-epochs}, training with the NSCL loss results in tight and clearly separable clusters, as predicted in Thm.~\ref{thm:inv}. Although DCL is label-agnostic, as shown in Thm.~\ref{thm:ssl_scl} and Fig.~\ref{fig:losses_during_training_simclr}, DCL training implicitly minimizes the NSCL loss, which explains the visible clustering behavior seen in Fig.~\ref{fig:umap-simclr-epochs}. In Fig.~\ref{fig:umap-full-simclr}, we demonstrate that the behavior generalizes across different datasets, $i.e.$, CIFAR10, CIFAR100, and mini-ImageNet. 

\subsection{Extension to SimCLR (w/ ViT) and MoCo}\label{app:moco_exp}

To verify the generality of our empirical findings in the main text, we repeat some primary experiments with Vision Transformer (ViT)~\citep{dosovitskiy2021imageworth16x16words} in SimCLR, and also using the Momentum Contrast method, specifically MoCo v2~\citep{chen2020improvedbaselinesmomentumcontrastive}. This section summarizes empirical results of SimCLR-ViT and MoCo, mirroring our analyses for SimCLR-ResNet-50 which we presented earlier. 

{\bf SimCLR-ViT setup.\enspace} We adopt the ViT-Base (ViT-B/16) architecture, which consists of $12$ transformer layers, each with $12$ attention heads and a hidden dimension of $768$. For input images of size $224 \times224$, we use a patch size of $16 \times16$, resulting in $196$ tokens (plus a [CLS] token). The MLP hidden dimension is $3072$, and layer normalization is applied before each attention and MLP block. For CIFAR10 and CIFAR100, we follow the scaling procedure described in \citep{yoshioka2024visiontransformers}, adapting patch size and positional embeddings accordingly.

{\bf MoCo setup.\enspace} We use the same architecture as with SimCLR. To train our model, we use SGD as an optimizer. We set momentum to $0.9$ and the weight decay to $\expnum{1}{4}$. All experiments are carried out with a batch size of $B = 256$. The base learning rate is set to $0.03$~\citep{He_2020_CVPR}. Similar to our SimCLR training strategy, we employ a warm-up phase~\citep{goyal2017accurate} for the first 10 epochs, followed by a cosine learning rate schedule~\citep{loshchilov2016sgdr}.

\subsubsection{Results}
\begin{figure}[t]
\centering
\begin{tabular}{c@{\hspace{6pt}}c@{\hspace{6pt}}c}
\includegraphics[width=.488\linewidth]{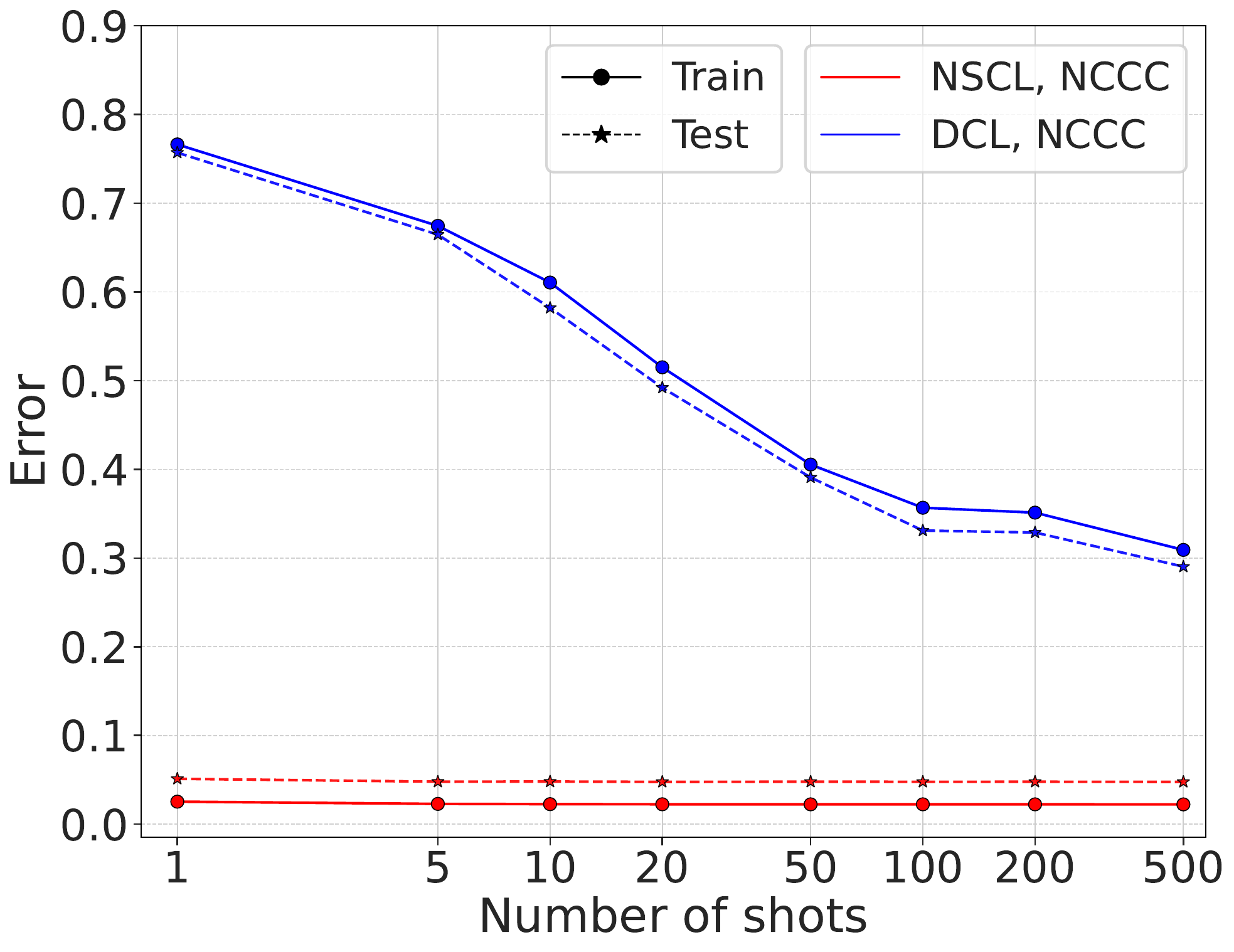} & \includegraphics[width=.488\linewidth]{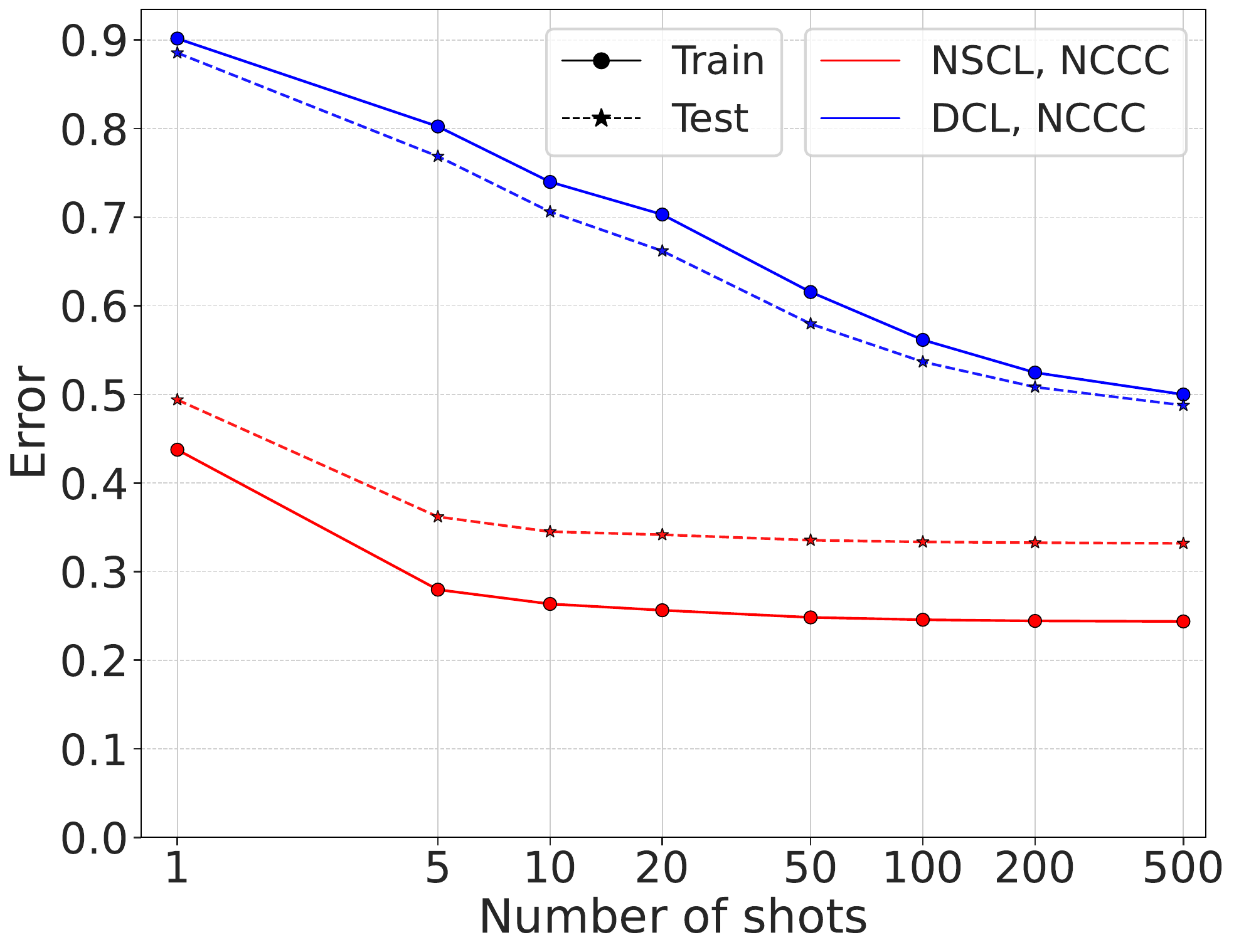} 
\\
{{\bf (a)} CIFAR10} & {{\bf (b)} CIFAR100} 

\end{tabular}
\caption{$m$-shot error (Nearest Class-Centered Classifier (NCCC)) for $C$-way (all-class) classification.} 
\label{fig:few_shot_moco}
\end{figure}

\begin{figure}[t]
\centering
\begin{tabular}{c@{\hspace{6pt}}c@{\hspace{6pt}}c}
\includegraphics[width=0.318\linewidth]{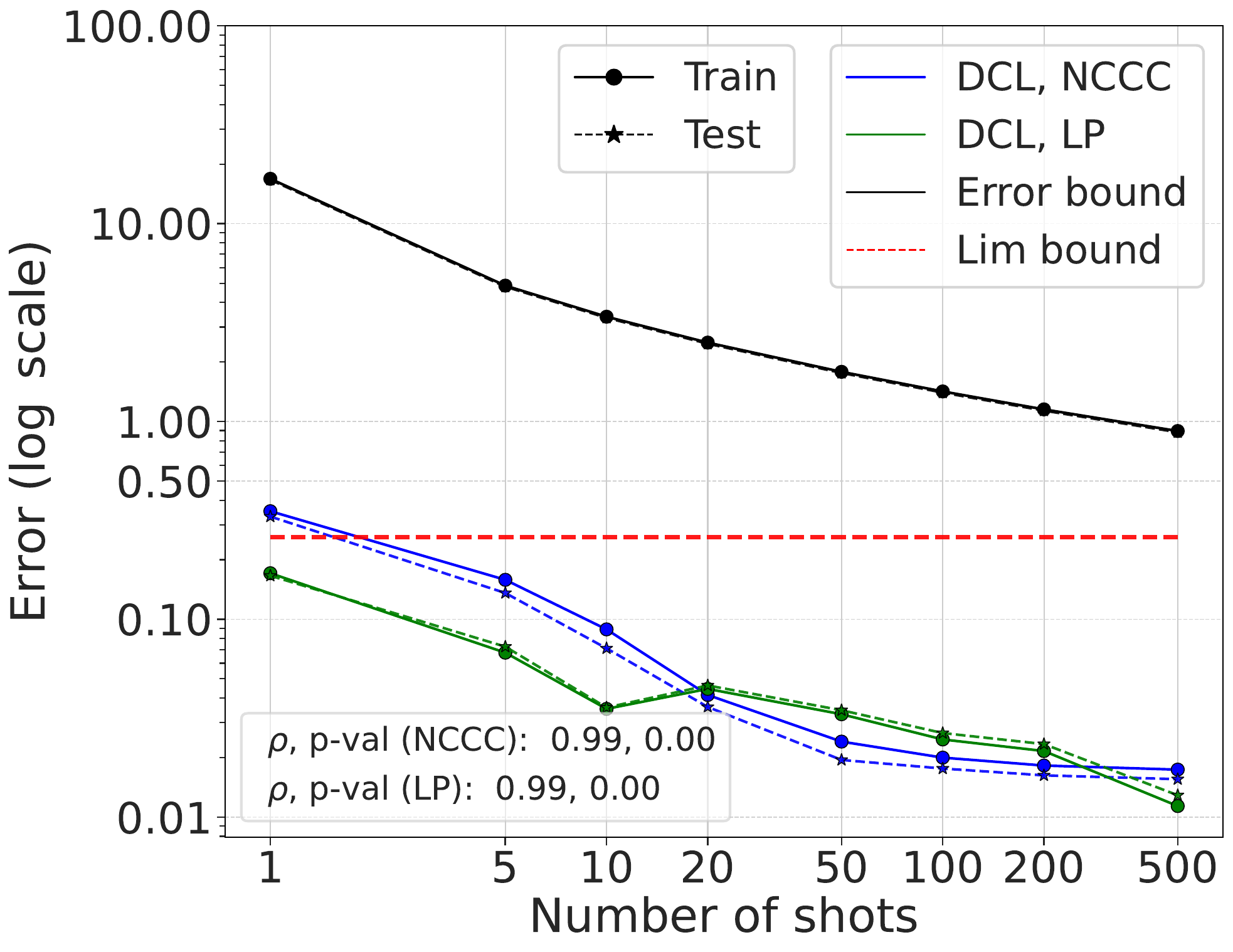} & \includegraphics[width=.318\linewidth]{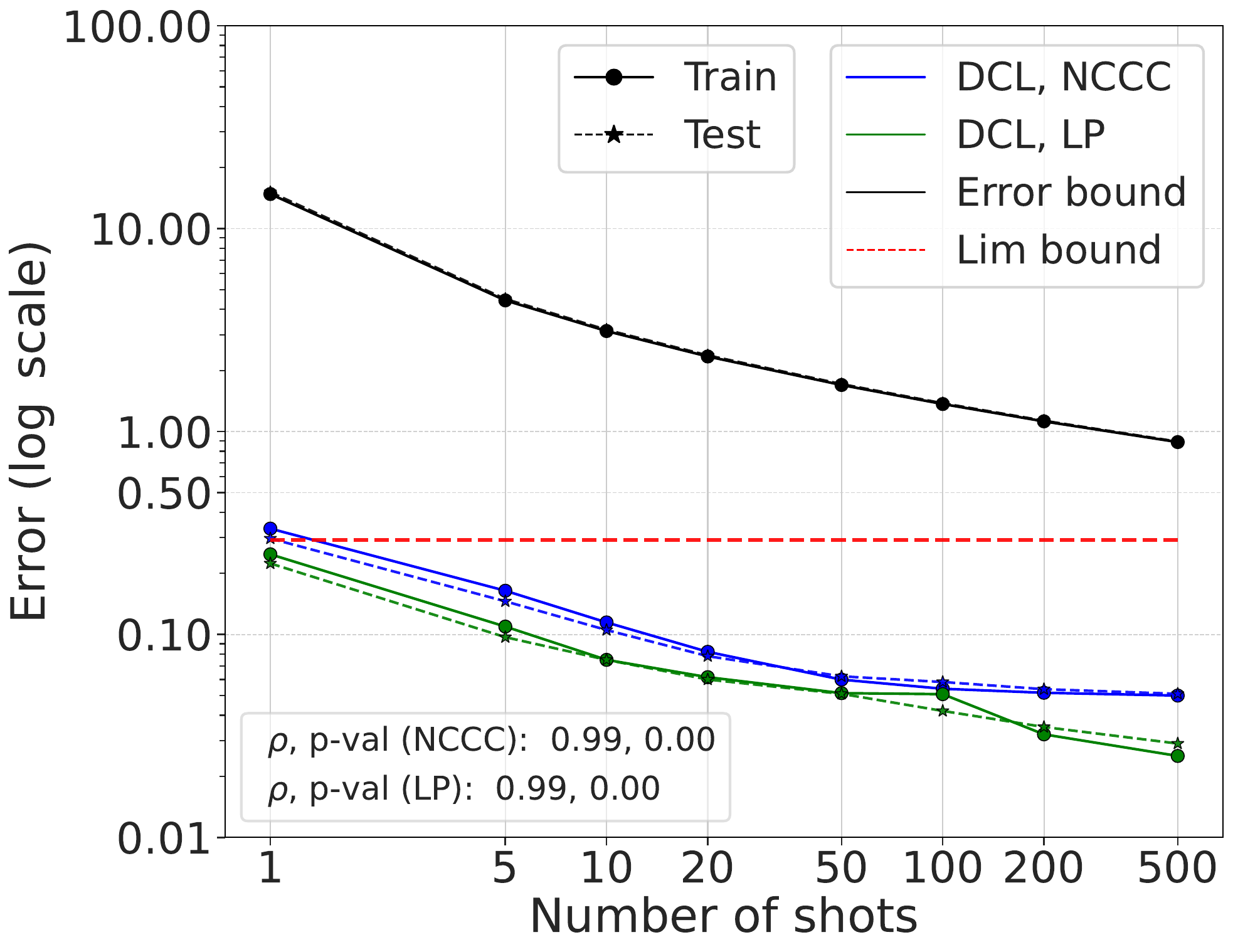} &
\includegraphics[width=0.318\linewidth]{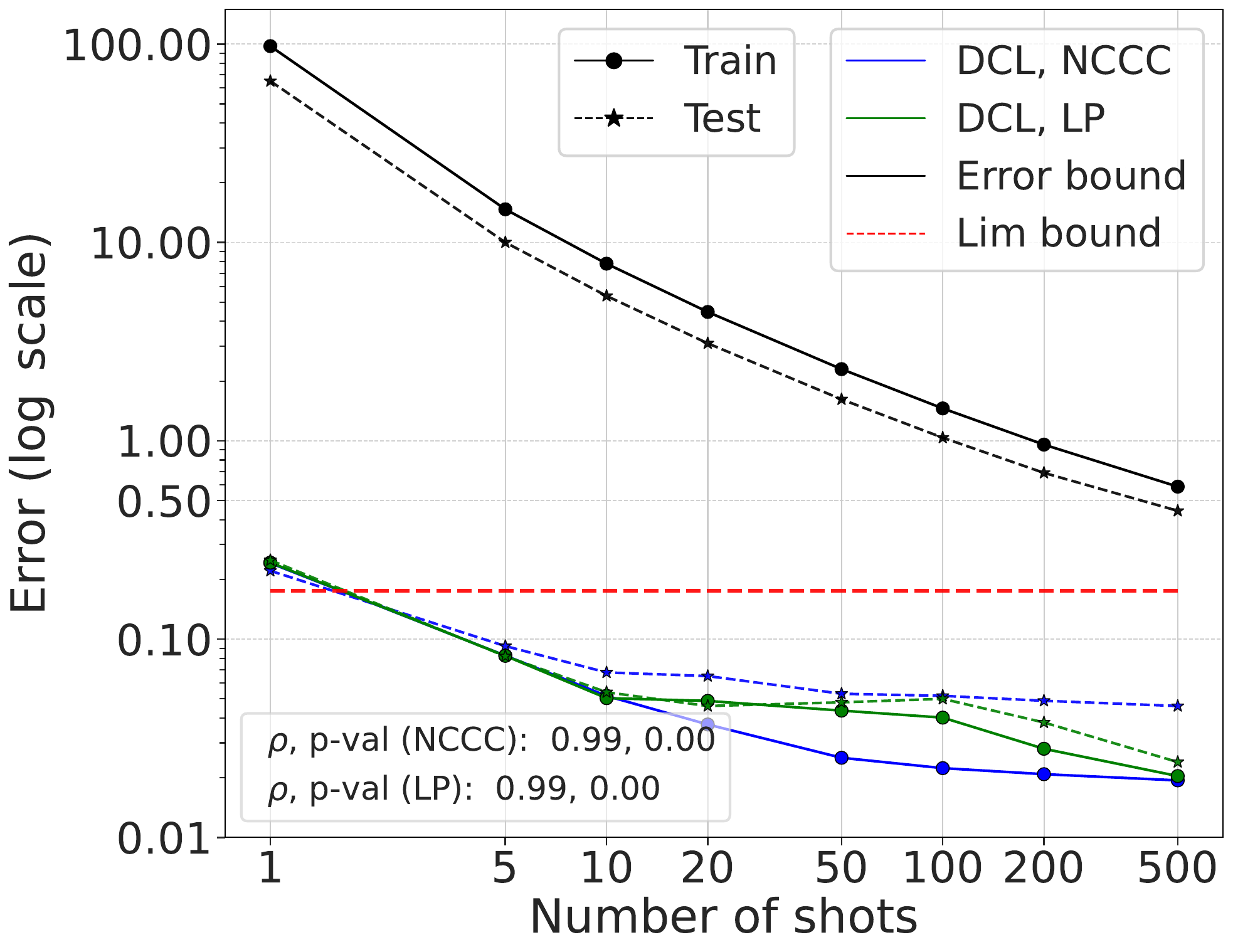} 
\\
{\small {\bf (a)} CIFAR10} & {\small {\bf (b)} CIFAR100} & {\small {\bf (c)} mini-ImageNet}
\end{tabular}
\caption{Two-way $m$-shot NCCC and LP errors with {\bf ViT-Base} to verify the bound in Cor.~\ref{cor:error_bound}, as shown in {\bf Fig.~\ref{fig:few_shot} (bottom)}.}
\label{fig:few_shot_vit}
\end{figure}

\begin{figure}[t]
\centering

\begin{tabular}{c@{\hspace{6pt}}c}
    \includegraphics[width=.32\linewidth]{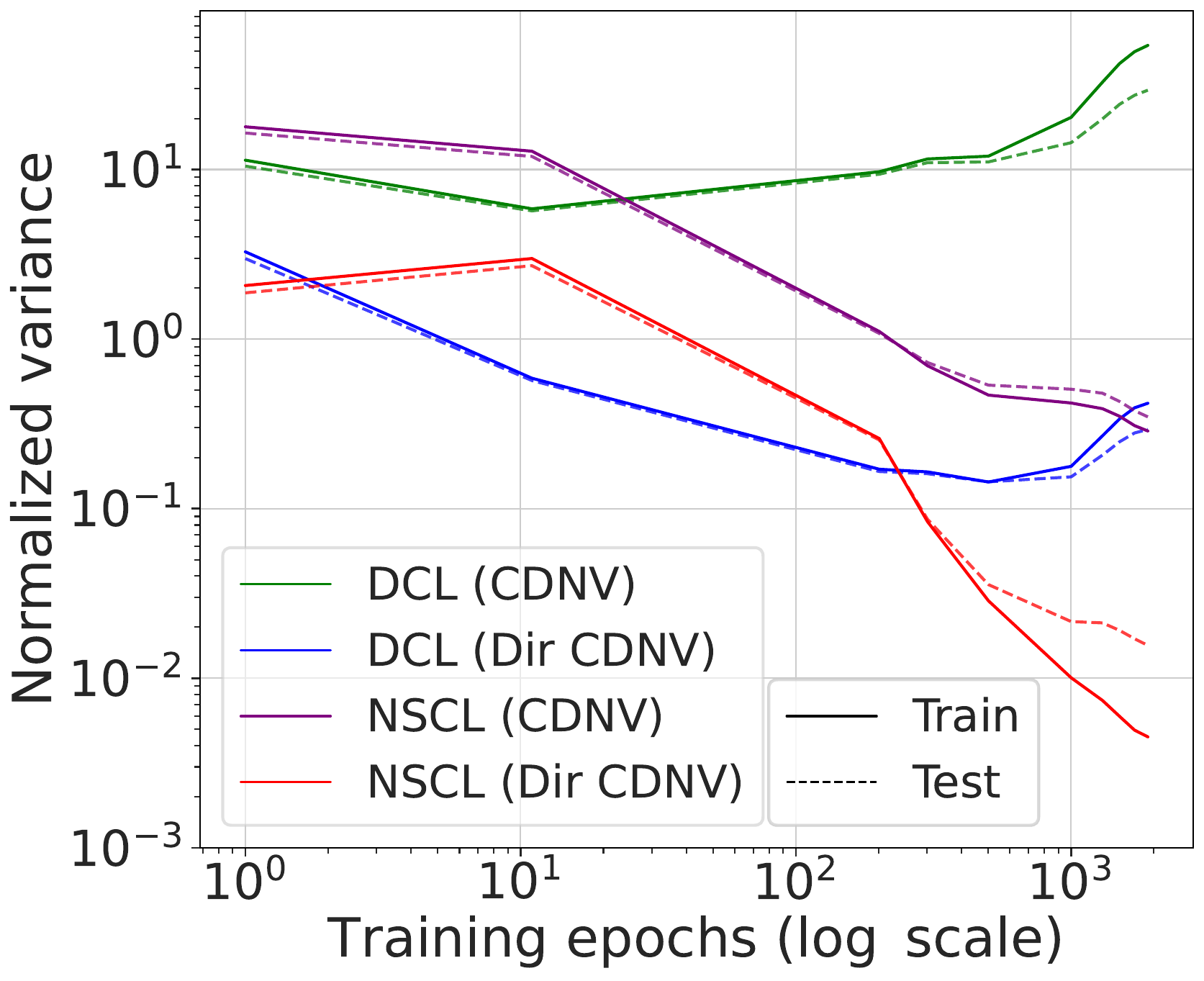} & 
    \includegraphics[width=.32\linewidth]{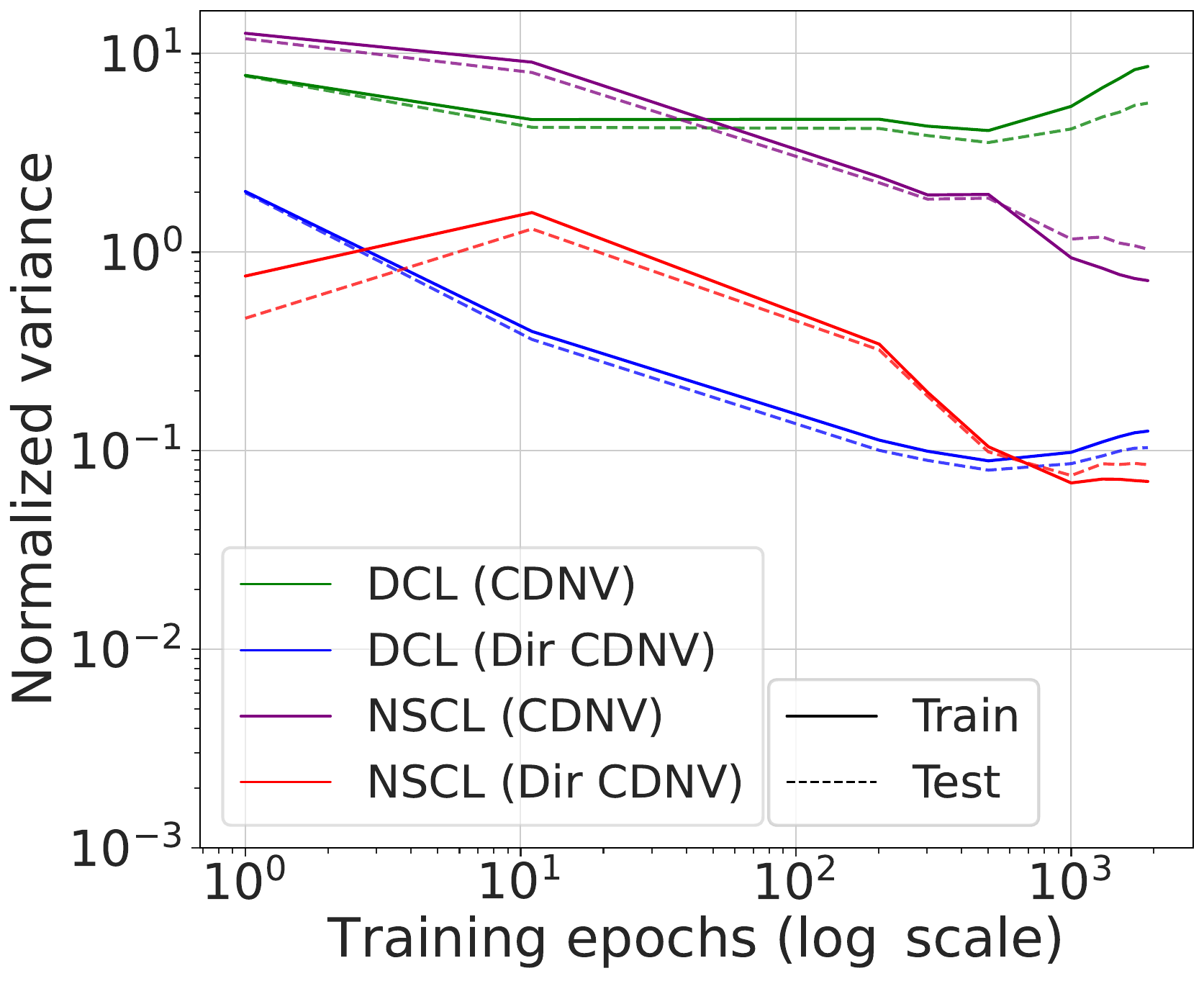} \\
    {{\bf (a)} CIFAR10} & {{\bf (b)} CIFAR100}
\end{tabular}

\vspace{1em} 

\begin{tabular}{c@{\hspace{6pt}}c@{\hspace{6pt}}c}
    \includegraphics[width=.32\linewidth]{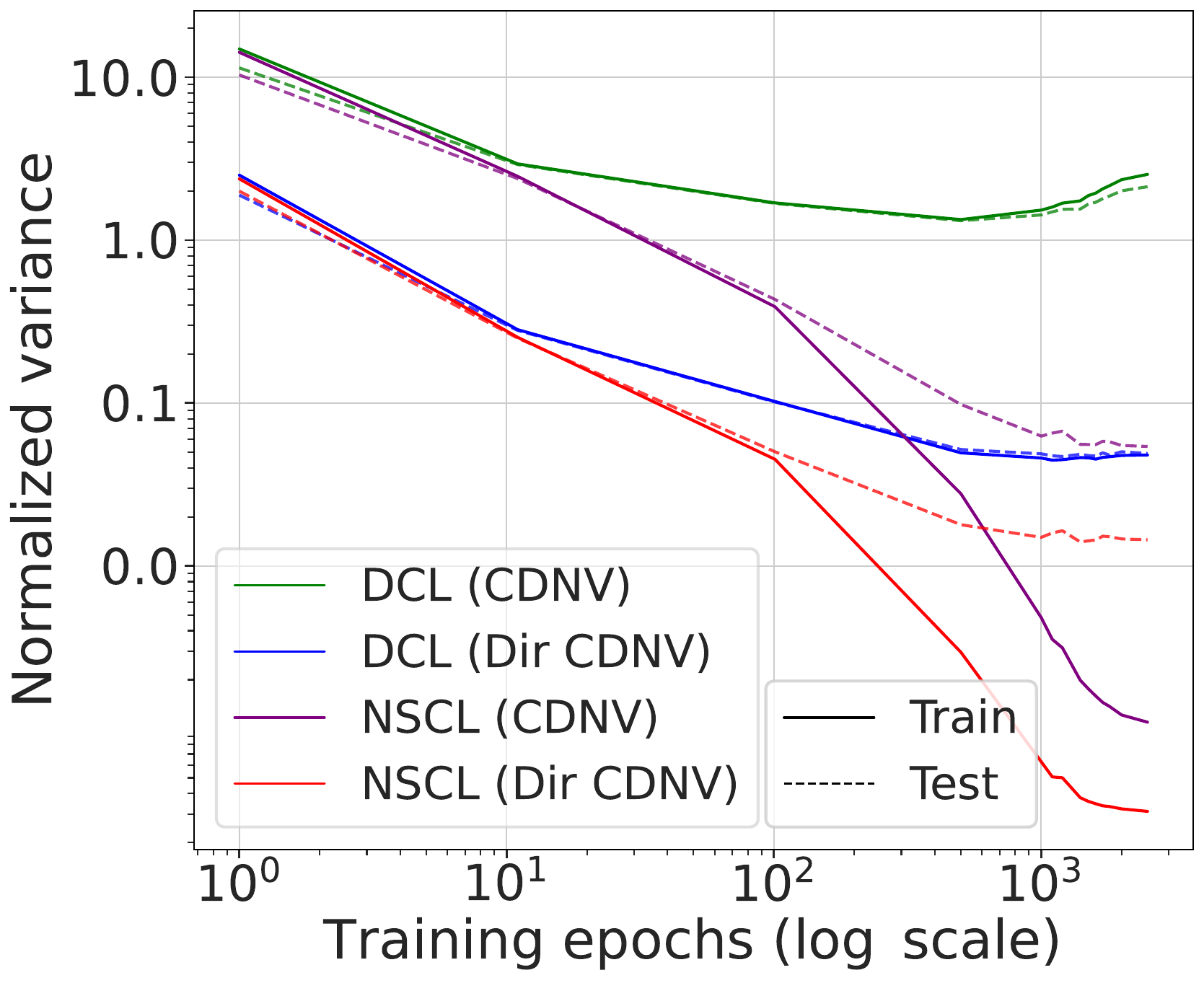} & 
    \includegraphics[width=.32\linewidth]{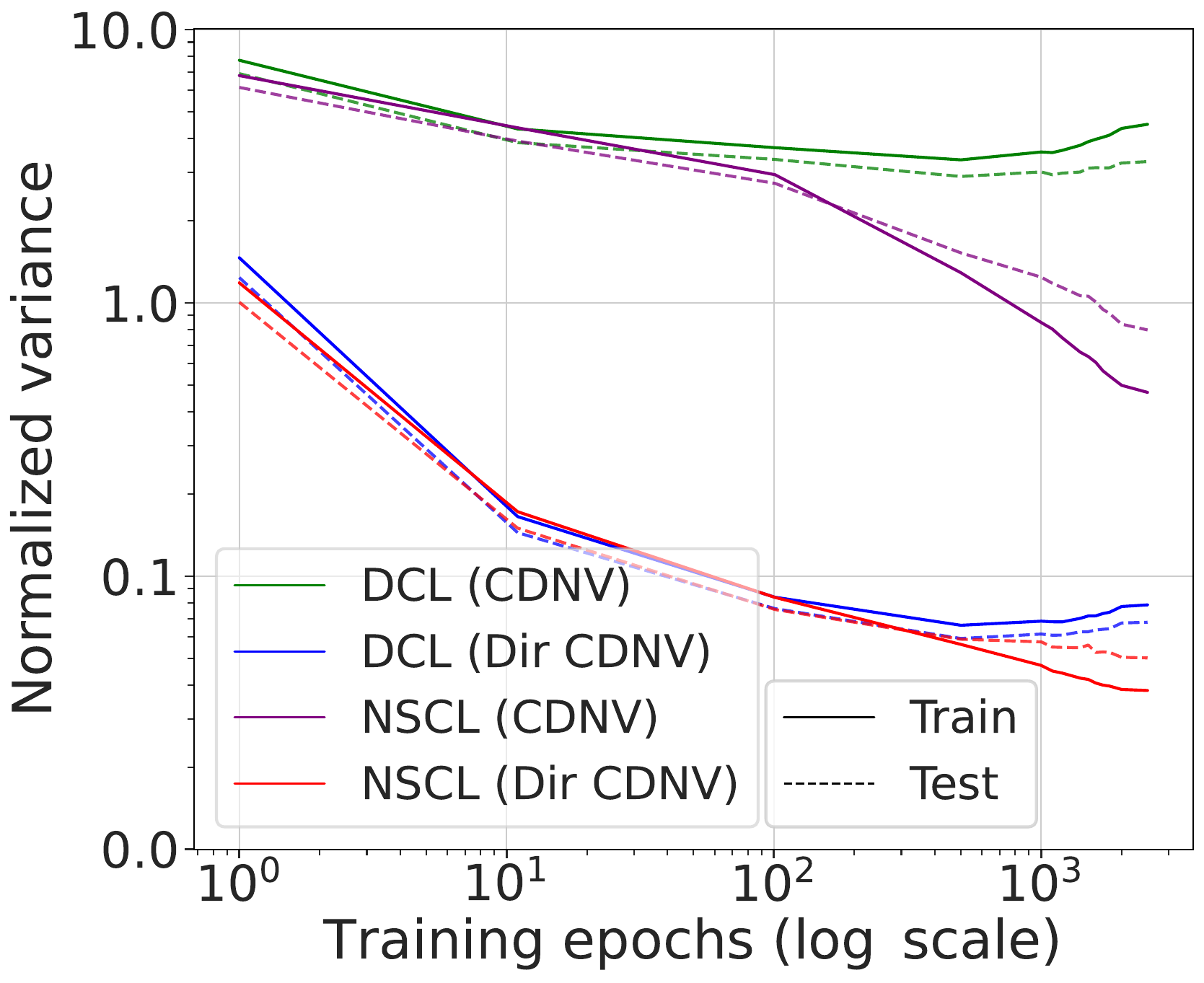} &
    \includegraphics[width=.32\linewidth]{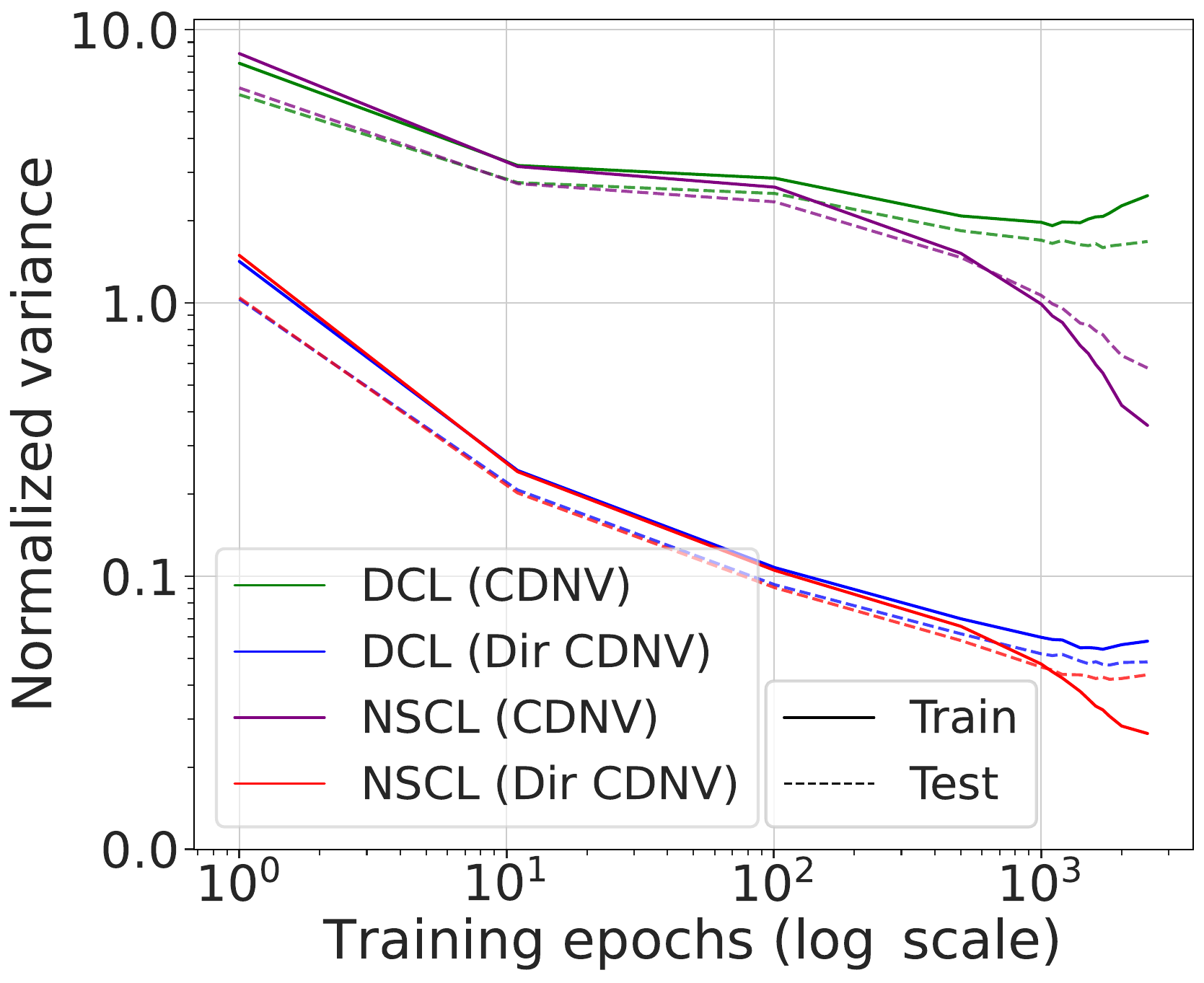} \\
    {\small {\bf (c)} CIFAR10} & {\small {\bf (d)} CIFAR100} & {\small {\bf (e)} mini-ImageNet}
\end{tabular}

\caption{ {\em CDNV and directional CDNV analysis}. {\bf Top Row}: Results for a MoCo-trained ResNet-50 on CIFAR10 and CIFAR100. {\bf Bottom Row}: Results for a SimCLR-trained ViT-Base on CIFAR10, CIFAR100, and mini-ImageNet.}
\label{fig:cdnv_combined} 
\end{figure}

{\bf Validating Thm.~\ref{thm:ssl_scl} during training.\enspace} We train SimCLR-ViT and MoCo using the DCL loss for 2k epochs, and evaluate both the DCL and NSCL losses on training and test datasets, along with the theoretical upper bound \(\mathcal{L}^{\mathrm{NSCL}}(f) + \log(1 + \tfrac{n_{max} \mathrm{e}^2}{N - n_{max}}) \). We repeat the experiments illustrated in Fig.~\ref{fig:losses_during_training_simclr}(top). In Fig.~\ref{fig:vit_losses_during_training_simclr} and Fig.~\ref{fig:losses_during_training_moco}(top), we notice similar trends as we have shown for SimCLR. The DCL loss consistently upper bounds the NSCL loss, with a narrowing gap with increase in number of classes $C$. The empirical \(\mathcal{L}^{\mathrm{CL}}\) and \(\mathcal{L}^{\mathrm{NSCL}}\) values remain tightly bounded by our proposed bound throughout training. For MoCo, we further validate that minimizing the DCL loss implicitly leads to low NSCL loss by repeating the experiments in Fig.~\ref{fig:losses_during_training_simclr}(bottom). As shown in Fig.~\ref{fig:losses_during_training_moco}(bottom), \(\mathcal{L}^{\mathrm{NSCL}}\) for both DCL and NSCL-trained models remain close. This confirms our earlier conclusion that optimizing DCL implicitly reduces the NSCL loss.

{\bf Comparing downstream performance.\enspace} We evaluate the quality of the learned representations from MoCo models via few-shot error analysis as earlier shown in Fig.~\ref{fig:few_shot}. Specifically, we report the NCCC error on both train and test datasets. In Fig.~\ref{fig:few_shot_moco}(top) for MoCo, we evaluate NCCC error on the complete dataset as a function of number of shots per class ($m$) and show a comparison between DCL and NSCL-based models. In Fig.~\ref{fig:few_shot_vit} for SimCLR-ViT, we perform 2-way classification (averaged over 10 tasks) and compare the NCCC error with the bound from Cor.~\ref{cor:error_bound}. We notice that the theoretical bound remains tight for increasing $m$.

Similarly to our previous analysis, we also evaluate CDNV and directional CDNV in Fig.~\ref{fig:cdnv_combined} for both DCL and NSCL-based MoCo and SimCLR-ViT models. We observe that while CDNV initially decreases for both models, the NSCL-trained models achieve substantially lower CDNV values as training progresses, whereas DCL-trained models maintain higher CDNV values. For directional CDNV, both the models achieve substantial reductions (an order of magnitude). For CIFAR10, directional CDNV for NSCL drops an additional order of magnitude compared to DCL. 



\section{Proof of Thm.~\ref{thm:ssl_scl}}

\approx*

\begin{proof}
First, we show that $\mathcal{L}^{\mathrm{NSCL}}(f) \leq \mathcal{L}^{\mathrm{DCL}}(f) $. For each anchor \((i,l_1)\) (with \(i\in\{1,\dots,N\}\) and \(l_1\in\{1,\dots,K\}\)), define
\[
\begin{aligned}
Z^{i,l_1}_{\mathrm{neg}} ~&=~ \sum_{l_3=1}^K \sum_{\substack{j=1 \\ y_j\neq y_i}}^N \exp(\operatorname{sim}(z_i^{l_1}, z_j^{l_3})),~~Z^{i,l_1}_{\mathrm{pos}} ~=~ \sum_{l_3=1}^K \sum_{\substack{j=1 \\ y_j= y_i}}^N \exp(\operatorname{sim}(z_i^{l_1}, z_j^{l_3})) \\
~\mathrm{ and }~ Z^{i,l_1}_{\mathrm{pos\setminus self}} ~&=~ \sum_{l_3=1}^K \sum_{\substack{j=1 \\ y_j= y_i, j \neq i}}^N \exp(\operatorname{sim}(z_i^{l_1}, z_j^{l_3})).
\end{aligned}
\]
By the definitions of the decoupled contrastive loss \(\mathcal{L}^{\mathrm{DCL}}(f)\) and the negatives-only supervised contrastive loss \(\mathcal{L}^{\mathrm{NSCL}}(f)\), we obtain
\begin{equation}\label{eq:dcl_nscl}
\begin{aligned}
&\mathcal{L}^{\mathrm{DCL}}(f)-\mathcal{L}^{\mathrm{NSCL}}(f) \\
~&=~\frac{1}{K^2N}\sum_{i=1}^N\sum_{l_1=1}^K\sum_{l_2=1}^K \Biggl[
-\log\left(\frac{\exp(\operatorname{sim}(z_i^{l_1},z_i^{l_2}))}{Z^{i,l_1}_{\mathrm{neg}}+Z^{i,l_1}_{\mathrm{pos\setminus self}}}\right) + \log\left(\frac{\exp(\operatorname{sim}(z_i^{l_1},z_i^{l_2}))}{Z^{i,l_1}_{\mathrm{neg}}}\right)
\Biggr] \\[1mm]
~&=~\frac{1}{K^2N}\sum_{i=1}^N\sum_{l_1=1}^K\sum_{l_2=1}^K \log\left(\frac{Z^{i,l_1}_{\mathrm{neg}}+Z^{i,l_1}_{\mathrm{pos\setminus self}}}{Z^{i,l_1}_{\mathrm{neg}}}\right) \\[1mm]
~&=~\frac{1}{N}\sum_{i=1}^N\log\left(1+\frac{Z^{i,l_1}_{\mathrm{pos\setminus self}}}{Z^{i,l_1}_{\mathrm{neg}}}\right).
\end{aligned}
\end{equation}
Since \(\log(1+x)\ge 0\) for all \(x \ge 0\), it follows that \(\mathcal{L}^{\mathrm{NSCL}}(f) \le \mathcal{L}^{\mathrm{DCL}}(f)\). A similar argument shows that \(\mathcal{L}^{\mathrm{DCL}}(f) \le \mathcal{L}^{\mathrm{CL}}(f)\). 

Next, by using the same reasoning as in \eqref{eq:dcl_nscl} but replacing \(Z^{i,l_1}_{\mathrm{pos\setminus self}}\) with \(Z^{i,l_1}_{\mathrm{pos}}\), we have
\[
\begin{aligned}
\mathcal{L}^{\mathrm{CL}}(f)-\mathcal{L}^{\mathrm{NSCL}}(f)
~&=~\frac{1}{N}\sum_{i=1}^N\log\left(1+\frac{Z^{i,l_1}_{\mathrm{pos}}}{Z^{i,l_1}_{\mathrm{neg}}}\right).
\end{aligned}
\]
Next, we bound the ratio \(\frac{Z^{i,l_1}_{\mathrm{pos}}}{Z^{i,l_1}_{\mathrm{neg}}}\). For a fixed anchor \((i,l_1)\), note that there are at most \(Kn_{\max}\) positive terms (since each class contains at most \(n_{\max}\) samples) and at least \(K(N-n_{\max})\) negative terms. Moreover, because \(\operatorname{sim}(z,z')\in[-1,1]\), every term satisfies
\[
\exp(-1) ~\le~ \exp(\operatorname{sim}(z_i^{l_1},z_j^{l_3}))~\le~\exp(1).
\]
Thus, we obtain
\[
Z^{i,l_1}_{\mathrm{pos}}~\le~ Kn_{\max} \exp(1) ~~\text{ and }~~ Z^{i,l_1}_{\mathrm{neg}}~\ge~ K(N-n_{\max})\exp(-1).
\]
Hence,
\[
\frac{Z^{i,l_1}_{\mathrm{pos}}}{Z^{i,l_1}_{\mathrm{neg}}} ~\le~ \frac{Kn_{\max}\exp(1)}{K(N-n_{\max}) \exp(-1)}
~=~\frac{n_{\max}\mathrm{e}^2}{N-n_{\max}}.
\]
It follows that for each anchor,
\[
\log\left(1+\frac{Z^{i,l_1}_{\mathrm{pos}}}{Z^{i,l_1}_{\mathrm{neg}}}\right)
~\le~ \log\left(1+\frac{n_{\max}\mathrm{e}^2}{N-n_{\max}}\right).
\]
Since this bound is uniform in \(i\), \(l_1\), and \(l_2\), we conclude that
\[
\mathcal{L}^{\mathrm{CL}}(f)-\mathcal{L}^{\mathrm{NSCL}}(f)
~\le~ \log\left(1+\frac{n_{\max}\mathrm{e}^2}{N-n_{\max}}\right) ~\leq~ \frac{n_{\max}\mathrm{e}^2}{N-n_{\max}},
\]
where the last inequality follows from $\log(1+x) \leq x$ for all $x \geq 0$.
\end{proof}

\subsection{Batch-Based Contrastive Losses}\label{app:batches}

In practice, contrastive learning objectives are implemented using batches of samples rather than full-dataset sums~\citep{pmlr-v119-chen20j}. To formalize this, let \( B \in \mathbb{N} \) denote a chosen batch size. We assume access to a dataset \( S = \{(x_i, y_i)\}_{i=1}^N \subset \mathcal{X} \times [C] \), where \( x_i \in \mathcal{X} \) are inputs (e.g., images) and \( y_i \in [C] \) are their class labels. For each sample \( x_i \), let \( x'_i \sim \alpha(x_i) \) be an independently generated augmentation from a distribution of augmentations $\alpha(x_i)$. Given a batch \( \mathcal{B} = \{(x_{j_t}, x'_{j_t}, y_{j_t})\}_{t=1}^{B} \) sampled with replacement from \( S \), we define a per-sample contrastive loss for an anchor \( (x_i, y_i) \) as:
\begin{equation}\label{eq:simclr_loss_one_batch}
\ell_{i,\mathcal{B}}(f) ~=~ -\log\left(\frac{\exp(\mathrm{sim}(f(x_i), f(x'_i)))}{\sum_{j\in \mathcal{B}} \left[ \exp(\mathrm{sim}(f(x_i), f(x_j))) + \exp(\mathrm{sim}(f(x_i), f(x'_j))) \right]}\right).
\end{equation}
The self-supervised contrastive loss is then defined as the expectation over randomly chosen anchors, their augmentations and batches:
\begin{equation}\label{eq:simclr_loss}
\mathcal{L}^{\mathrm{CL}}_{B}(f) ~=~ \mathbb{E}_{x_i,x'_i}\mathbb{E}_{\mathcal{B}} \left[\ell_{i,\mathcal{B}}(f)\right].
\end{equation}
In contrast, the negatives-only supervised contrastive loss restricts the negatives in each batch to only include examples from different classes. Specifically, for each anchor \( (x_i, y_i) \), define \( \mathcal{B}'_i = \{(x_{j_t}, x'_{j_t}, y_{j_t})\}_{t=1}^{B} \) as a batch of size \( B\), drawn (with replacement) from \( S \setminus \{(x_j, y_j): y_j = y_i\} \). Then, the loss is:
\[
\mathcal{L}^{\mathrm{NSCL}}_{B}(f) ~=~ \mathbb{E}_{x_i,x'_i}\mathbb{E}_{\mathcal{B}_i} \left[\ell_{i,\mathcal{B}'_i}(f)\right].
\]
With these notations we are ready to describe a bound on the gap between these two loss functions.

\begin{theorem}
Let \(S = \{(x_{i}, y_i)\}_{i=1}^{N} \subset \mathcal{X} \times [C]\) be a labeled dataset with \(C\) classes, each containing \(n\) distinct samples \(\bigl(N = Cn\bigr)\). Let $B \in \mathbb{N}$ be the batch size, $\epsilon > 0$ be a positive number and $\bar{B} = \lceil B(1-\tfrac{1}{C}-\epsilon) \rceil$. Let \(f:\mathcal{X} \to \mathbb{R}^d\) be any function. Then, the difference between the self-supervised contrastive loss \(\mathcal{L}^{\mathrm{CL}}_{B}(f)\) and the decoupled supervised contrastive loss \(\mathcal{L}^{\mathrm{NSCL}}_{\bar{B}}(f)\) satisfies
\[
-2(\log(2B)+2) \exp(-2B\epsilon^2) ~\leq~ \mathcal{L}^{\mathrm{CL}}_{B}(f)
- \mathcal{L}^{\mathrm{NSCL}}_{\bar{B}}(f) ~\leq~ \tfrac{\mathrm{e}^2(1+\epsilon C)}{C(1-\epsilon)-1} + 2(\log(2B)+2) \exp(-2B\epsilon^2).
\]
\end{theorem}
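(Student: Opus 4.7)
The plan is to adapt the argument behind Thm.~\ref{thm:ssl_scl} to the batched setting by (i) coupling $\mathcal{B}$ and $\mathcal{B}'_i$ and (ii) controlling the small probability that $\mathcal{B}$ contains many same-class samples. Since $\mathrm{sim}(\cdot,\cdot)\in[-1,1]$, both the numerator of $\ell_{i,\mathcal{B}}(f)$ and each of the $2B$ terms in its denominator lie in $[\mathrm{e}^{-1},\mathrm{e}]$, which yields the uniform bound $|\ell_{i,\mathcal{B}}(f)|\le \log(2B)+2$, with the analogous bound (with $B$ replaced by $\bar B$) for $\ell_{i,\mathcal{B}'_i}(f)$.

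Next, fix an anchor $(x_i,x'_i)$ and draw $\mathcal{B}$ iid uniformly from $S$. The number $N_i$ of same-class samples in $\mathcal{B}$ is $\mathrm{Bin}(B,1/C)$ (since each class has exactly $n=N/C$ members), so Hoeffding's inequality yields $\Pr[N_i>B(1/C+\epsilon)]\le \exp(-2B\epsilon^2)$; call this event $G$. On $G$ we have $B-N_i\ge \bar B$, and conditionally on which positions of $\mathcal{B}$ are different-class, those entries are iid uniform over $S_{\mathrm{diff}}:=S\setminus\{(x_j,y_j):y_j=y_i\}$. I define $\mathcal{B}'_i$ by taking the first $\bar B$ such entries on $G$ and drawing a fresh iid sample on $G^c$; by exchangeability, the marginal distribution of $\mathcal{B}'_i$ matches the iid-uniform draw from $S_{\mathrm{diff}}$ required by $\mathcal{L}^{\mathrm{NSCL}}_{\bar B}$.

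On $G$ the denominator of $\ell_{i,\mathcal{B}'_i}(f)$ is literally a subsum of the denominator of $\ell_{i,\mathcal{B}}(f)$; writing these as $D(\mathcal{B},i)$ and $D(\mathcal{B}'_i,i)$, we have $\ell_{i,\mathcal{B}}(f)-\ell_{i,\mathcal{B}'_i}(f)=\log(D(\mathcal{B},i)/D(\mathcal{B}'_i,i))\ge 0$. Bounding each of the $2(B-\bar B)$ excess terms above by $\mathrm{e}$ and using $D(\mathcal{B}'_i,i)\ge 2\bar B\,\mathrm{e}^{-1}$ gives
$$\ell_{i,\mathcal{B}}(f)-\ell_{i,\mathcal{B}'_i}(f)\;\le\;\log\!\Big(1+\tfrac{(B-\bar B)\mathrm{e}^{2}}{\bar B}\Big)\;\le\;\tfrac{(B-\bar B)\mathrm{e}^{2}}{\bar B}\;\le\;\tfrac{\mathrm{e}^{2}(1+\epsilon C)}{C(1-\epsilon)-1},$$
using $\bar B\ge B(1-1/C-\epsilon)$ and $B-\bar B\le B(1/C+\epsilon)$. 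On $G^c$, the triangle inequality and the uniform bound yield $|\ell_{i,\mathcal{B}}(f)-\ell_{i,\mathcal{B}'_i}(f)|\le 2(\log(2B)+2)$.

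Averaging over the coupling and then over $(x_i,x'_i)$, the $G$-contribution to $\mathcal{L}^{\mathrm{CL}}_B(f)-\mathcal{L}^{\mathrm{NSCL}}_{\bar B}(f)$ is at most $\tfrac{\mathrm{e}^{2}(1+\epsilon C)}{C(1-\epsilon)-1}$ and the $G^c$-contribution is at most $2(\log(2B)+2)\exp(-2B\epsilon^2)$ in absolute value; their sum produces the upper bound, and since the pointwise difference is nonnegative on $G$, only the $G^c$ term contributes to the lower bound. The delicate step is the coupling: one must verify that selecting the first $\bar B$ different-class entries of an iid batch still yields an iid-uniform sample from $S_{\mathrm{diff}}$—this is precisely the exchangeability of iid draws conditional on which positions realize a given class label, after which the remainder of the argument is a direct extension of the calculation in the proof of Thm.~\ref{thm:ssl_scl}.
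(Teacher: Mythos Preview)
Your proposal is correct and follows essentially the same approach as the paper: both split according to the event that the batch contains at least $\bar B$ negatives, couple the NSCL batch to a size-$\bar B$ subset of those negatives, bound the log-ratio of denominators via $\mathrm{sim}\in[-1,1]$ on the good event, and control the bad event by Hoeffding together with the uniform bound $|\ell|\le\log(2B)+2$. The only cosmetic differences are that the paper selects a \emph{random} subset of $\bar B$ negatives whereas you take the first $\bar B$ and invoke exchangeability (both yield the correct marginal), and your phrase ``call this event $G$'' should refer to the complement $\{N_i\le B(1/C+\epsilon)\}$, as your subsequent usage makes clear.
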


\begin{proof}
Fix an arbitrary anchor sample \((x_i,y_i)\in S\) and let \(x'_i\) be an augmentation of \(x_i\). Denote
\[
z_i ~=~ f(x_i) \quad\mathrm{and}\quad z'_i ~=~ f(x'_i)
\]
the corresponding embeddings. Next, let $\mathcal{B} = \{(x_{j_t}, x'_{j_t}, y_{j_t})\}_{t=1}^{B}$ be a random batch of \(B\) samples (with replacement) from \(S\) along with their augmentations. For the anchor \((x_i,y_i)\), define the per-sample contrastive loss by
\[
\ell_{i,\mathcal{B}}(f) ~=~ -\log\left(\frac{\exp\left(\operatorname{sim}(z_i,z'_i)\right)}{\sum_{j\in \mathcal{B}} \left[\exp\left(\operatorname{sim}(z_i,z_j)\right)+\exp\left(\operatorname{sim}(z_i,z'_j)\right)\right]}\right).
\]
Thus, the overall self-supervised contrastive loss is
\[
\mathcal{L}^{\mathrm{CL}}_{B}(f) ~=~ \mathbb{E}_i\,\mathbb{E}_{\mathcal{B}}\left[\ell_{i,\mathcal{B}}(f)\right].
\]

For each anchor \(i\), consider instead a batch
\[
\mathcal{B}'_i ~=~ \{(x_{j_t}, x'_{j_t}, y_{j_t})\}_{t=1}^{\bar{B}}
\]
of \(\bar{B}=\lceil B(1-\tfrac{1}{C}-\epsilon) \rceil\) samples drawn (with replacement) from $S\setminus\{(x_j,y_j):y_j=y_i\}$, i.e., only from the negatives relative to \(i\). The decoupled supervised contrastive loss is defined as
\[
\mathcal{L}^{\mathrm{NSCL}}_{\bar{B}}(f) ~=~ \mathbb{E}_i\,\mathbb{E}_{\mathcal{B}'_i}\left[\ell_{i,\mathcal{B}'_i}(f)\right].
\]
For each anchor \(i\), define the event
\[
A_i ~=~ \left\{\mathcal{B} : \#\{j\in \mathcal{B} : y_j\neq y_i\}\ge \bar{B}\right\}.
\]
When \(A_i\) holds, the distribution of a random subset \(\mathcal{B}''_i\) of \(\bar{B}\) negatives from \(\mathcal{B}\) coincides with that of \(\mathcal{B}'_i\). By the law of total expectation,
\[
\begin{aligned}
\mathcal{L}^{\mathrm{CL}}_{B}(f)-\mathcal{L}^{\mathrm{NSCL}}_{\bar{B}}(f)
~=~ &\mathbb{E}_i\left[\mathbb{P}[A_i]\,\mathbb{E}_{\mathcal{B},\mathcal{B}''_i\mid A_i}\left[\ell_{i,\mathcal{B}}(f)-\ell_{i,\mathcal{B}''_i}(f)\right]\right] \\
&\quad + \mathbb{E}_i\left[\mathbb{P}[\bar{A}_i]\,\left(\mathbb{E}_{\mathcal{B}\mid \bar{A}_i}\left[\ell_{i,\mathcal{B}}(f)\right]-\mathbb{E}_{\mathcal{B}'_i}\left[\ell_{i,\mathcal{B}'_i}(f)\right]\right)\right].
\end{aligned}
\]

To bound the difference on the event \(A_i\), define
\[
Z_{i,\mathcal{B}} ~=~ \sum_{\substack{j\in\mathcal{B}}} \left[\exp\left(\operatorname{sim}(z_i,z_j)\right)+\exp\left(\operatorname{sim}(z_i,z'_j)\right)\right].
\]
Then, one may write
\[
\ell_{i,\mathcal{B}}(f)-\ell_{i,\mathcal{B}''_i}(f)
~=~ \log\left(\frac{Z_{i,\mathcal{B}}}{Z_{i,\mathcal{B}''_i}}\right) ~=~ \log\left(1+\frac{Z_{i,\mathcal{B}\setminus \mathcal{B}''_i}}{Z_{i,\mathcal{B}''_i}}\right).
\]
Since \(\log(1+u)\le u\) for all \(u\ge0\), it follows that
\[
\ell_{i,\mathcal{B}}(f)-\ell_{i,\mathcal{B}''_i}(f)
~\le~ \frac{Z_{i,\mathcal{B}\setminus \mathcal{B}''_i}}{Z_{i,\mathcal{B}''_i}}.
\]
Observe that \(Z_{i,\mathcal{B}\setminus \mathcal{B}''_i}\) is a sum of \(B-\bar{B}\) terms and, since \(\operatorname{sim}(\cdot,\cdot)\in[-1,1]\), each term is bounded above by \(\mathrm{e}\). Similarly, \(Z_{i,\mathcal{B}''_i}\) is a sum of \(\bar{B}\) terms, each bounded below by \(\mathrm{e}^{-1}\). Therefore,
\[
\frac{Z_{i,\mathcal{B}\setminus \mathcal{B}''_i}}{Z_{i,\mathcal{B}''_i}} ~\le~ \frac{\mathrm{e}(B-\bar{B})}{\mathrm{e}^{-1}\bar{B}} ~=~ \frac{\mathrm{e}^2(B-\bar{B})}{\bar{B}} ~\leq~ \frac{\mathrm{e}^2(B-B(1-\frac{1}{C}-\epsilon))}{B\left(1-\frac{1}{C}-\epsilon\right)}
~=~ \frac{\mathrm{e}^2(\frac{1}{C}+\epsilon)}{1-\frac{1}{C}-\epsilon}
~=~ \frac{\mathrm{e}^2(1+\epsilon C)}{C(1-\epsilon)-1}.
\]
Thus, we deduce
\[
\mathbb{E}_{\mathcal{B},\mathcal{B}''_i\mid A_i}\left[\ell_{i,\mathcal{B}}(f)-\ell_{i,\mathcal{B}''_i}(f)\right]
~\le~ \frac{\mathrm{e}^2(1+\epsilon C)}{C(1-\epsilon)-1}.
\]

On the complement \(\bar{A}_i\) the batch \(\mathcal{B}\) contains fewer than \(\bar{B}\) negatives. In this case one may bound the losses uniformly. In fact, using
\[
\log(\sum_{j=1}^{B}\exp(\alpha_j)) ~\le~ \max\{\alpha_1,\dots,\alpha_B\}+\log(B)
\]
and the fact that \(\operatorname{sim}(\cdot,\cdot)\in[-1,1]\), we obtain
\[
\left|\ell_{i,\mathcal{B}}(f)\right| ~\le~ \left|\operatorname{sim}(z_i,z'_i)\right|+\log\left(\sum_{j\in\mathcal{B}}\exp(\operatorname{sim}(z_i,z_j))+\exp(\operatorname{sim}(z_i,z'_j))\right)
~\le~ \log(2B)+2,
\]
and similarly,
\[
\left|\ell_{i,\mathcal{B}'_i}(f)\right| ~\le~ \log(2\bar{B})+2 ~\le~ \log(2B)+2.
\]
Define the indicator variable $Y_j = \mathbf{1}[y_j\neq y_i]$ so that \(\mathbb{E}[Y_j] = 1-\frac{1}{C}\). By Hoeffding's inequality,
\[
\mathbb{P}\left[\sum_{j=1}^{B}Y_j \le \lceil B(1-\tfrac{1}{C})-B\epsilon \rceil \right] ~=~ \mathbb{P}\left[\sum_{j=1}^{B}Y_j \le B(1-\tfrac{1}{C})-B\epsilon\right] ~\le~ \exp(-2B\epsilon^2).
\]
Hence, \(\mathbb{P}[\bar{A}_i] \le \exp\left(-B\epsilon^2\right)\) and the contribution of the event \(\bar{A}_i\) is bounded by
\begin{equation*}
\begin{aligned}
&\mathbb{E}_i\left[\mathbb{P}[\bar{A}_i]\cdot\left(\mathbb{E}_{\mathcal{B}\mid \bar{A}_i}\big[\ell_{i,\mathcal{B}}(f)\big]-\mathbb{E}_{\mathcal{B}''_i}\big[\ell_{i,\mathcal{B}''_i}(f)\big]\right)\right] \\
~&\le~ \mathbb{E}_i\left[\mathbb{P}[\bar{A}_i]\cdot\left(\mathbb{E}_{\mathcal{B}\mid \bar{A}_i}\big[|\ell_{i,\mathcal{B}}(f)|\big]+\mathbb{E}_{\mathcal{B}''_i}\big[|\ell_{i,\mathcal{B}''_i}(f)|\big]\right)\right] \\
~&\le~ 2\left(\log(2B)+2\right)\exp\left(-B\epsilon^2\right). 
\end{aligned}
\end{equation*}
Combining the bounds on \(A_i\) and \(\bar{A}_i\), we conclude that
\[
\mathcal{L}^{\mathrm{CL}}_{B}(f)-\mathcal{L}^{\mathrm{NSCL}}_{\bar{B}}(f)
~\le~ \frac{\mathrm{e}^2(1+\epsilon C)}{C(1-\epsilon)-1} + 2\left(\log(2B)+2\right)\exp\left(-2B\epsilon^2\right).
\]
Finally, we note that under $A_i$, $\ell_{i,\mathcal{B}}(f) \geq \ell_{i,\mathcal{B}''_i}(f)$. In addition, similar to the above:
\[
\mathbb{E}_i\left[\mathbb{P}[\bar{A}_i]\cdot\left(\mathbb{E}_{\mathcal{B}\mid \bar{A}_i}\left[\ell_{i,\mathcal{B}}(f)\right]-\mathbb{E}_{\mathcal{B}''_i}\left[\ell_{i,\mathcal{B}''_i}(f)\right]\right)\right]
~\ge~ -2\left(\log(2B)+2\right)\exp\left(-B\epsilon^2\right).
\]

yields the corresponding lower bound:
\[
\mathcal{L}^{\mathrm{CL}}_{B}(f)-\mathcal{L}^{\mathrm{NSCL}}_{\bar{B}}(f)
~\ge~ -2\left(\log(2B)+2\right)\exp\left(-2B\epsilon^2\right).
\]
This completes the proof.
\end{proof}

\section{Proof of Prop.~\ref{prop:error}}\label{app:proof_prop}

\error*

\begin{proof}
We adapt Prop.~7 of~\cite{galanti2023generalizationboundsfewshottransfer}.
Assume $d_{ij}:=\|\mu_j-\mu_i\|_2>0$ for all $i\neq j$ so that $u_{ij}:=(\mu_j-\mu_i)/d_{ij}$ is well defined.
Fix $i\neq j$. Draw independent samples
$\widehat S_i=\{x_{i,1},\dots,x_{i,m}\}\sim D_i^{m}$ and
$\widehat S_j=\{x_{j,1},\dots,x_{j,m}\}\sim D_j^{m}$, independent also of a fresh $x_i\sim D_i$.
Let $\widehat\mu_c=\tfrac1m\sum_{s=1}^m f(x_{c,s})$ and $\mu_c=\mathbb{E}[f(x)\mid x\sim D_c]$ for $c\in\{i,j\}$, and write $z_i=f(x_i)$.
Set $\delta_c:=\widehat\mu_c-\mu_c$. For $c\in\{i,j\}$ define $z_c:=f(x_c)$ with $x_c\sim D_c$, let $\Sigma_c:=\mathrm{Cov}(z_c)$ and $v_c:=\tr(\Sigma_c)$, and put
\[
\tilde V_f(D_i,D_j):=\frac{u_{ij}^\top \Sigma_i u_{ij}}{d_{ij}^2},
\qquad
V_f(D_i,D_j):=\frac{v_i+v_j}{d_{ij}^2}.
\]
Let $D:=\mu_j-\mu_i$, $d:=\|D\|_2$, and $A:=z_i-\tfrac{\mu_i+\mu_j}{2}$.
A direct expansion gives
\[
\|z_i-\widehat\mu_j\|^2-\|z_i-\widehat\mu_i\|^2
= d^2-2dX \;-\;2A^\top(\delta_j-\delta_i)\;+\;D^\top(\delta_j+\delta_i)\;+\;\big(\|\delta_j\|^2-\|\delta_i\|^2\big),
\]
where $X:=(z_i-\mu_i)^\top u_{ij}$. Hence NCC predicts $j$ iff
\begin{equation}\label{eq:decomp-new}
2X \;\ge\; d \;-\;\frac{2}{d}\,T \;+\;\frac{1}{d}\,S \;+\;\frac{1}{d}\,Q,
\quad
T:=A^\top(\delta_j-\delta_i),\;
S:=D^\top(\delta_j+\delta_i),\;
Q:=\|\delta_j\|^2-\|\delta_i\|^2 .
\end{equation}

Fix $a\ge 5$ and $m\ge 6$, and set $\tau(a,m):=\tfrac12-\tfrac{2}{a}-\tfrac{2^{3/2}}{a m}\in(0,1/2)$, so $1-2\tau=\tfrac{4}{a}+\tfrac{2^{5/2}}{am}$.
If
\begin{equation}\label{eq:budget}
\frac{2|T|}{d^2}+\frac{|S|}{d^2}+\frac{|Q|}{d^2}\ \le\ 1-2\tau,
\end{equation}
then \eqref{eq:decomp-new} implies $X\ge \tau d$, hence
\[
\{\mathrm{NCC\ predicts\ }j\}\ \subseteq\ \{X\ge \tau d\}\ \cup\ \Big\{\tfrac{2|T|}{d^2}+\tfrac{|S|}{d^2}+\tfrac{|Q|}{d^2}>1-2\tau\Big\}.
\]

Since $\mathbb{E}[X]=0$ and $\Var(X)=u_{ij}^\top\Sigma_i u_{ij}$,
\begin{equation}\label{eq:X}
\Pr[X\ge \tau d] ~\le~ \frac{\Var(X)}{\tau^2 d^2} ~=~ \tau(a,m)^{-2}\,\tilde V_f(D_i,D_j).
\end{equation}

By Markov and Cauchy--Schwarz,
\begin{equation}\label{eq:markov}
\Pr \Big[\tfrac{2|T|}{d^2}+\tfrac{|S|}{d^2}+\tfrac{|Q|}{d^2}>1-2\tau\Big]
\ \le\ \frac{2\,\mathbb{E}|T|+\,\mathbb{E}|S|+\,\mathbb{E}|Q|}{(1-2\tau)d^2}.
\end{equation}
Using independence of $(z_i,\delta_i,\delta_j)$ and $\mathbb{E}[\delta_c]=0$,
\[
\mathbb{E}[T^2]
=\frac{1}{m} \left(\tr \big(\Sigma_i(\Sigma_i+\Sigma_j)\big)+\tfrac{d^2}{4}\,u_{ij}^\top(\Sigma_i+\Sigma_j)u_{ij}\right),
\qquad
\mathbb{E}[S^2]=\frac{d^2}{m}\,u_{ij}^\top(\Sigma_i+\Sigma_j)u_{ij}.
\]
Since $\tr(AB)\le \tr(A)\tr(B)$ for PSD $A,B$ and $u^\top\Sigma u\le \tr(\Sigma)$,
\[
\tr \big(\Sigma_i(\Sigma_i+\Sigma_j)\big)\ \le\ v_i\,(v_i+v_j),
\qquad
u_{ij}^\top(\Sigma_i+\Sigma_j)u_{ij}\ \le\ v_i+v_j.
\]
Hence, by Cauchy--Schwarz and $\sqrt{ab}\le (a+b)/2$,
\begin{align*}
\frac{2\,\mathbb{E}|T|}{d^2}
&\le \frac{2}{\sqrt{m}\,d^2}\sqrt{\,v_i(v_i+v_j)+\frac{d^2}{4}(v_i+v_j)}\\
~&\le~ \frac{1}{\sqrt{m}}\left(\sqrt{\frac{v_i+v_j}{d^2}}+2\,\frac{v_i+v_j}{d^2}\right)
= \frac{1}{\sqrt{m}}\big(\sqrt{V_{ij}}+2V_{ij}\big),\\
\end{align*}
and
\begin{align*}
\frac{\mathbb{E}|S|}{d^2}
~\le~ \frac{1}{\sqrt{m}}\sqrt{\frac{v_i+v_j}{d^2}}
~=~ \frac{1}{\sqrt{m}}\sqrt{V_{ij}},\qquad
\mathbb{E}|Q|\ \le\ \frac{v_i+v_j}{m}~=~ \frac{V_{ij}\,d^2}{m}.
\end{align*}
Therefore
\begin{equation}\label{eq:num}
\frac{2\,\mathbb{E}|T|+\,\mathbb{E}|S|+\,\mathbb{E}|Q|}{d^2} ~\le~ \frac{1}{\sqrt{m}}\big(2\sqrt{V_{ij}}+2V_{ij}\big) ~+~ \frac{1}{m}V_{ij}.
\end{equation}

Combining the union bound, \eqref{eq:X} and \eqref{eq:num},
\begin{equation*}
\Pr(\mathrm{NCC\ predicts\ }j)
~\le~
\tau(a,m)^{-2}\,\tilde V_{ij}
\;+\;
\frac{1}{1-2\tau(a,m)} \left(
\frac{2}{\sqrt m}\sqrt{V_{ij}}
+\frac{2}{\sqrt m}V_{ij}
+\frac{1}{m}V_{ij}
\right).
\end{equation*}
Averaging over $i$ and summing over $j\neq i$ yields
\[
\mathrm{err}^{\mathrm{NCC}}_{m,D}(f)
~\le~
(C'-1)\left[
\tau(a,m)^{-2}\,\tilde V_f
\;+\;
\frac{1}{1-2\tau(a,m)} \left(
\frac{2}{\sqrt m}\,V^s_f
+\frac{2}{\sqrt m}V_f
+\frac{1}{m}V_f
\right)
\right],
\]
Since $(1-2\tau)^{-1}\le a/4$, we obtain the simplified bound
\begin{equation}\label{eq:error_bound_coefficients}
\mathrm{err}^{\mathrm{NCC}}_{m,D}(f)
~\le~
(C'-1)\inf_{a\ge 5}\left[
\left(\tfrac{1}{2}-\tfrac{2}{a}-\tfrac{2^{3/2}}{am}\right)^{-2}\tilde V_f
+\frac{a}{4} \left(
\frac{2}{\sqrt m}V^s_f
+\frac{2}{\sqrt m}V_f
+\frac{1}{m}V_f
\right)\right].
\end{equation}
By picking $a=16$ and using the assumption that $m\geq 10$ we get the desired bound. Finally, $\mathrm{err}^{\mathrm{LP}}_{m,D}\le \mathrm{err}^{\mathrm{NCC}}_{m,D}$ since the Euclidean nearest-centroid rule is representable by a multiclass linear classifier.
\end{proof}

\begin{corollary}\label{cor:error_bound}
Let $C'\ge 2$ and $m\ge 10$. Fix a feature map $f:\mathcal X\to\mathbb R^{d}$ and class-conditional distributions $D_{1},\dots,D_{C'}$ over $\mathcal X$.
Write $\tilde{V}_f=\textnormal{Avg}_{i\neq j}\big[\tilde{V}_f(D_i,D_j)\big]$, $V_f=\textnormal{Avg}_{i\neq j}\big[V_f(D_i,D_j)\big]$, and $V_f^{s}=\textnormal{Avg}_{i\neq j}\big[V_f^{s}(D_i,D_j)\big]$.
Set
\[
A \;:=\; 2+\frac{2^{3/2}}{m},\qquad
B \;:=\; \frac14 \left(\frac{2}{\sqrt m}V_f^{s}+\frac{2}{\sqrt m}V_f+\frac{1}{m}V_f\right),\qquad
F \;:=\; \frac{2\,\tilde V_f\,A}{B},
\]
with the understanding that if $B=0$ then the bound below reduces to $(C'-1)\cdot 4\tilde V_f$ (achieved in the limit $a\to\infty$). Define $y^{*}$ to be the unique positive real root of $y^{3}-8Fy-16FA= 0$, namely
\[
\begin{aligned}
A^{2}\ge\frac{8F}{27}
&\Longrightarrow
y^*
=\sqrt[3]{\,8F \left(A+\sqrt{A^{2}-\tfrac{8F}{27}}\right)}
+\sqrt[3]{\,8F \left(A-\sqrt{A^{2}-\tfrac{8F}{27}}\right)},
\\[2mm]
A^{2}<\frac{8F}{27}
&\Longrightarrow
y^{*}
=4\sqrt{\frac{2F}{3}}\,
\cos \left(\frac{1}{3}\arccos\Bigl(3A\sqrt{\frac{3}{8F}}\Bigr)\right).
\end{aligned}
\]
Let $a^{\star}:=2A+y^{*}$ and define the constrained optimizer $a_{\mathrm{opt}}:=\max\{5,\,a^{\star}\}$.
Then
\[
\mathrm{err}^{\mathrm{NCC}}_{m,D}(f)\;\le\;
(C'-1)\,E(a_{\mathrm{opt}}),
\qquad
E(a)\;=\;\Bigl(\tfrac{1}{2}-\tfrac{2}{a}-\tfrac{2^{3/2}}{am}\Bigr)^{-2}\tilde V_f \;+\; B\,a .
\]
\end{corollary}

\begin{proof}
Starting from
\[
\mathrm{err}^{\mathrm{NCC}}_{m,D}(f)
~\le~
(C'-1)\inf_{a\ge 5} \left[
\Bigl(\tfrac{1}{2}-\tfrac{2}{a}-\tfrac{2^{3/2}}{am}\Bigr)^{-2}\tilde V_f
+\frac{a}{4} \left(\tfrac{2}{\sqrt m}V_f^{s}+\tfrac{2}{\sqrt m}V_f+\tfrac{1}{m}V_f\right)
\right],
\]
write $E(a)=c(a)^{-2}\tilde V_f+Ba$ with $c(a):=\tfrac12-\tfrac{A}{a}=\tfrac{a-2A}{2a}$, $A=2+\tfrac{2^{3/2}}{m}$, and $B$ as above.
On $(2A,\infty)$ one has
\[
E'(a)=-\frac{2\tilde V_fA}{a^{2}c(a)^{3}}+B,\qquad
E''(a)=\frac{32A\tilde V_f(A+a)}{(2A-a)^{4}}>0,
\]
so $E$ is strictly convex with a unique stationary point $a^\star>2A$ solving
\[
a^{2}c(a)^{3}=\frac{2\tilde V_fA}{B}=:F.
\]
Let $y=a-2A$. Since $a^{2}c(a)^{3}=\dfrac{(a-2A)^{3}}{8a}=\dfrac{y^{3}}{8(y+2A)}$, the stationarity condition is
\(y^{3}-8Fy-16FA=0\).
Cardano’s formula gives the stated $y^{*}>0$ and $a^\star=2A+y^*$.
At $a^\star$,
\[
E(a^\star)=\frac{B}{4A}\,a^\star\bigl(a^\star+2A\bigr)
\]
by substituting the stationarity equation into $E$.
Because $m\ge 10$ we have $2A<5$, so the constrained minimizer on $[5,\infty)$ is
$a_{\mathrm{opt}}=\max\{5,a^\star\}$ and $\inf_{a\ge5}E(a)=E(a_{\mathrm{opt}})$,
which yields the claim and its two cases.
If $B=0$, then $E(a)=c(a)^{-2}\tilde V_f$ decreases to $4\tilde V_f$ as $a\to\infty$, giving the stated simplification.
\end{proof}



\section{Proof of Thm.~\ref{thm:inv}}

\inv*

\begin{proof}
We divide the proof into three parts. In the first part, we restate and relax the problem. In the second part, we show that the embedding vectors of augmentations of the same sample collapse to the same vector. In the third part, we demonstrate that the embeddings of samples from the same class collapse to the same vectors, and the class vectors form a Simplex ETF. 

\noindent\textbf{Restating the problem.\enspace} Since we focus on the unconstrained features model~\citep{mixon2020neuralcollapseunconstrainedfeatures,doi:10.1073/pnas.2103091118}, we can rewrite $\mathcal{L}^{\mathrm{NSCL}}(f)$ in the following way:
\[
\mathcal{L}^{\mathrm{NSCL}}(Z)
~=~
-\frac{1}{K^2N}
\sum_{l_1,l_2=1}^{K}
\sum_{i=1}^N
\log \biggl(
\frac{\exp\bigl(\mathrm{sim}(z^{l_1}_i, z^{l_2}_i)\bigr)}
{\sum_{l_3=1}^K
\,\sum_{\substack{j=1 \\ y_j \neq y_i}}^N
\exp \bigl(\mathrm{sim}(z^{l_1}_i, z^{l_3}_j)\bigr)}
\biggr),
\]
where \( Z = (z^l_i)_{i \in [N], l \in [K]} \) denotes a collection of learnable representations, where each \( z_i^l \) is the embedding of sample \( x_i \) under augmentation \( l \).

Since one may either normalize the vectors externally or inside the cosine similarity and because unit norm constraint is stronger than a norm upper bound constraint, we have:
\begin{small}
\begin{equation*}
\begin{aligned}
&\min_{Z} \left\{-\frac{1}{K^2N}\sum^{K}_{l_1,l_2=1}\sum_{i=1}^N
\log \Biggl(
\frac{\exp\bigl(\mathrm{sim}(z^{l_1}_i, z^{l_2}_i)\bigr)}
     {\sum^{K}_{l_3=1}\sum^{N}_{j: y_{j} \neq y_{i}} \exp \bigl(\mathrm{sim}(z^{l_1}_i, z^{l_3}_j)\bigr)}
\Biggr) \right\} \\
~=&~
\min_{Z} \left\{-\frac{1}{K^2N}\sum^{K}_{l_1,l_2=1}\sum_{i=1}^N
\log \Biggl(
\frac{\exp\bigl(\langle z^{l_1}_i, z^{l_2}_i\rangle \bigr)}
     {\sum^{K}_{l_3=1}\sum^{N}_{j: y_{j} \neq y_{i}} \exp \bigl(\langle z^{l_1}_i, z^{l_3}_j\rangle \bigr)}
\Biggr) ~\textnormal{s.t.}~ \forall i\in [N],l \in [K]:\, \|z^l_i\|_2 = 1 \right\} \\
~\geq&~
\min_{Z} \left\{\underbrace{-\frac{1}{K^2N}\sum^{K}_{l_1,l_2=1}\sum_{i=1}^N
\log \Biggl(
\frac{\exp\bigl(\langle z^{l_1}_i, z^{l_2}_i\rangle \bigr)}
     {\sum^{K}_{l_3=1}\sum^{N}_{j: y_{j} \neq y_{i}} \exp \bigl(\langle z^{l_1}_i, z^{l_3}_j\rangle \bigr)}
\Biggr)}_{=: \mathcal{Q}(Z)} ~\textnormal{s.t.}~ \forall i\in [N],l \in [K]:\,\|z^l_i\|_2 \leq 1 \right\}. \\
\end{aligned}
\end{equation*}
\end{small}
We will show that the global minimum of the last optimization problem is obtained when we have augmentation collapse, within-class collapse, and a simplex ETF behavior. In particular, this will give us the result that the solution to the latter problems is achieved in the same way.

\noindent\textbf{Augmentation Collapse.\enspace} We begin by proving that for every $i_0\in [N]$, the vectors $z^1_{i_0},\dots,z^K_{i_0}$ are identical at any global minimum. Fix some index $i_0 \in [N]$ and suppose for contradiction that there exist two distinct augmentations $l^*_1 \neq l^*_2$ with $z^{l^*_1}_{i_0} \neq z^{l^*_2}_{i_0}$. Define the averaged vector $\tilde{z}_{i_0} = \frac{1}{K}\sum_{l} z^{l}_{i_0}$ and let $\tilde{Z}$ be the collection obtained by replacing $z^l_{i_0}$ by $\tilde{z}_{i_0}$ for all $l \in [K]$. We will show that $\mathcal{Q}(Z) > \mathcal{Q}(\tilde{Z})$. 

Consider the loss function:
\[
\mathcal{Q}(Z) = -\frac{1}{K^2N}\sum_{i=1}^N \sum_{l_1,l_2=1}^K \log \left( \frac{\exp\bigl(\langle z_i^{l_1}, z_i^{l_2}\rangle\bigr)}{\sum_{l_3=1}^K\sum_{j:\,y_j\neq y_i} \exp\bigl(\langle z_i^{l_1}, z_j^{l_3}\rangle\bigr)} \right).
\]
Now, fix the index \(i_0\) and split the sum over \(i\) into the contribution from \(i_0\) and the contributions from all other indices \(i\neq i_0\). In addition, let $\delta^{l_1,l_2}_{i,j} = \exp\bigl(\langle z_i^{l_1}, z_j^{l_2}\rangle\bigr)$. With this separation, we obtain
\begin{small}
\[
\begin{aligned}
\mathcal{Q}(Z) 
~&=~ -\frac{1}{K^2N}\sum_{l_1,l_2=1}^K\sum_{i=1}^N
\log \Biggl(
\frac{\exp\bigl(\langle z^{l_1}_i, z^{l_2}_i\rangle \bigr)}
     {\sum_{l_3=1}^K\sum_{j: y_{j} \neq y_{i}} \exp \bigl(\langle z^{l_1}_i, z^{l_3}_j\rangle \bigr)}
\Biggr) \\
~&=~ \frac{1}{N}\Biggl[
-\frac{1}{K^2}\sum_{l_1,l_2=1}^K \langle z^{l_1}_{i_0}, z^{l_2}_{i_0}\rangle 
- \frac{1}{K^2}\sum_{l_1,l_2=1}^K \sum_{i\neq i_0}
-\langle z^{l_1}_i, z^{l_2}_i\rangle 
\Biggr] \\
&\quad\quad+ \frac{1}{K^2N}\sum_{l_1,l_2=1}^K\sum_{i\neq i_0}^N 
\log \Biggl(\sum_{l_3=1}^K\sum_{j: y_{j} \neq y_{i}} 
\exp \bigl(\langle z^{l_1}_i, z^{l_3}_j\rangle \bigr) 
\Biggr) \\
&\quad\quad+ \frac{1}{KN}\sum_{l_1=1}^K 
\log \Biggl(\sum_{l_3=1}^K\sum_{j: y_{j} \neq y_{i_0}} 
\exp \bigl(\langle z^{l_1}_{i_0}, z^{l_3}_j\rangle \bigr) 
\Biggr).
\end{aligned}
\]
\end{small}
Similarly,
\begin{small}
\[
\begin{aligned}
\mathcal{Q}(\tilde{Z}) 
~=~& \frac{1}{N}\Biggl[
-\langle \tilde{z}_{i_0}, \tilde{z}_{i_0}\rangle 
- \frac{1}{K^2}\sum_{l_1,l_2=1}^K \sum_{i\neq i_0}
-\langle z^{l_1}_i, z^{l_2}_i\rangle 
\Biggr] \\
&+ \frac{1}{K^2N}\sum_{l_1,l_2=1}^K\sum_{i\neq i_0}^N 
\log \Biggl(\sum_{l_3=1}^K\sum_{\substack{j \in [N] \\ y_{j} \neq y_{i}\\ j \neq i_0}} 
\exp \bigl(\langle z^{l_1}_i, z^{l_3}_j\rangle \bigr) 
+ K \exp \bigl(\langle z^{l_1}_i, \tilde{z}_{i_0}\rangle \bigr) 
\Biggr) \\
&+ \frac{1}{N} 
\log \Biggl(\sum_{l_3=1}^K\sum_{j: y_{j} \neq y_{i_0}} 
\exp \bigl(\langle\tilde{z}_{i_0}, z^{l_3}_j\rangle \bigr) 
\Biggr).
\end{aligned}
\] 
\end{small}
By definition, since $\tilde{z}_{i_0} = \tfrac{1}{K} \sum_l z^l_{i_0}$, its squared norm is given by
\[
\langle \tilde{z}_{i_0},\tilde{z}_{i_0}\rangle ~=~ \|\tilde{z}_{i_0}\|^2_2 ~=~
\frac{1}{K^2}\sum_{l_1,l_2=1}^K \langle z^{l_1}_{i_0}, z^{l_2}_{i_0}\rangle.
\] 
Since \(z_i^{l_1}\neq 0\) for every \(i\in[N]\) and \(l_1\in[K]\) (otherwise the cosine similarity would be undefined), the function
\[
h_i^{l_1}(x) ~=~ \exp\bigl(\langle z_i^{l_1}, x\rangle\bigr)
\]
is convex in \(x\). Consequently, by Jensen’s inequality, for any subset \(S\) and any collection of vectors \(\{z_j^{l_3} : j\in S\}\) we have
\[
\frac{1}{|S|}\sum_{j\in S}\exp\bigl(\langle z_i^{l_1}, z_j^{l_3}\rangle\bigr)
~\ge~ \exp\left(\Bigl\langle z_i^{l_1},\frac{1}{|S|}\sum_{j\in S} z_j^{l_3}\Bigr\rangle\right),
\]
with equality if and only if all vectors \(z_j^{l_3}\) in the subset are identical.

Applying this inequality to the relevant subsets yields
\[
\log\Biggl(\sum_{l_3=1}^K \sum_{j:\,y_j \neq y_i}
\exp\bigl(\langle z_i^{l_1}, z_j^{l_3}\rangle\bigr)\Biggr)
>
\log\Biggl(\sum_{l_3=1}^K \sum_{\substack{j:\,y_j \neq y_i \\ j\neq i_0}} 
\exp\bigl(\langle z_i^{l_1}, z_j^{l_3}\rangle\bigr)
+ K\,\exp\bigl(\langle z_i^{l_1}, \tilde{z}_{i_0}\rangle\bigr)\Biggr),
\]
where the strict inequality follows from the assumption that for some indices \(l_1^*\) and \(l_2^*\) (with \(l_1^*\neq l_2^*\)), we have \(z_{i_0}^{l_1^*}\neq z_{i_0}^{l_2^*}\).

Next, define the function
\[
G(x) ~=~ \log\Biggl(\sum_{l_3=1}^K \sum_{j:\,y_j \neq y_{i_0}} \exp\bigl(\langle x, z_j^{l_3}\rangle\bigr)\Biggr).
\]
It is well known that \(G(x)\) is convex in \(x\). Therefore, by Jensen’s inequality we obtain
\[
\frac{1}{K}\sum_{l_1=1}^K G\bigl(z_{i_0}^{l_1}\bigr)
~\ge~
G\Bigl(\frac{1}{K}\sum_{l_1=1}^K z_{i_0}^{l_1}\Bigr),
\]
which can be written equivalently as
\[
\frac{1}{K}\sum_{l_1=1}^K \log\Biggl(\sum_{l_3=1}^K \sum_{j:\,y_j \neq y_{i_0}} 
\exp\bigl(\langle z_{i_0}^{l_1}, z_j^{l_3}\rangle\bigr)\Biggr)
~\ge~
\log\Biggl(\sum_{l_3=1}^K \sum_{j:\,y_j \neq y_{i_0}} 
\exp\bigl(\langle \tilde{z}_{i_0}, z_j^{l_3}\rangle\bigr)\Biggr).
\]
Combining this inequality with the previous expansion, we deduce that $\mathcal{Q}(\tilde{Z}) < \mathcal{Q}(Z)$, which contradicts the assumption that \(Z\) is a global minimizer of the loss. Consequently, for each \(i_0\in [N]\), it must be that all vectors \(z_{i_0}^1, z_{i_0}^2, \dots, z_{i_0}^K\) are equal at a global minimum.

\medskip
\noindent\textbf{Within-class collapse and Simplex ETF.\enspace} Since the augmentations collapse, with no loss of generality we assume that $K=1$. We now analyze the embeddings $z_i$ by applying the arithmetic–geometric mean (AGM) inequality to the $n(C-1)$ positive numbers
\[
\left\{\exp\bigl(\langle z_i, z_j\rangle\bigr) \mid j: y_j\neq y_i\right\},
\]
for each fixed $i \in [N]$. The AGM inequality yields
\begin{equation}\label{eq:AGM}
\frac{1}{n(C-1)}\sum_{j:y_j\neq y_i}\exp\bigl(\langle z_i, z_j\rangle\bigr)
~\ge~ \left(\prod_{j:y_j\neq y_i}\exp\bigl(\langle z_i, z_j\rangle\bigr)\right)^{\frac{1}{n(C-1)}},
\end{equation}
Taking natural logarithms on both sides gives
\[
\log\Biggl(\sum_{j:y_j\neq y_i}\exp\bigl(\langle z_i, z_j\rangle\bigr)\Biggr)
\ge \log(n(C-1)) + \frac{1}{n(C-1)}\sum_{j:y_j\neq y_i}\langle z_i, z_j\rangle.
\]
Averaging this inequality over $i \in [N]$ yields
\begin{equation}\label{eq:logsum_bound}
\frac{1}{N}\sum_{i=1}^{N}\log \Biggl(\sum_{j:y_j\neq y_i}\exp\bigl(\langle z_i, z_j\rangle\bigr)\Biggr)
~\ge~ \log(n(C-1)) + \frac{1}{Nn(C-1)}\sum_{i=1}^{N}\sum_{j:y_j\neq y_i}\langle z_i, z_j\rangle.
\end{equation}
Notice that the double sum of inner products can be rewritten as
\[
\begin{aligned}
\sum_{i=1}^{N}\sum_{j:y_j\neq y_i}\langle z_i, z_j\rangle
~&=~\sum_{i,j=1}^{N}\langle z_i, z_j\rangle - \sum^{N}_{i=1}\sum_{j:y_j=y_i}\langle z_i,z_j\rangle\\
~&=~\Bigl\|\sum_{i=1}^{N}z_i\Bigr\|^2_2 - \sum^{C}_{c=1} \Bigl\|\sum_{i:y_i=c} z_i\Bigr\|^2_2 \\ 
~&=~\Bigl\|n\sum_{c=1}^{C}\mu_c\Bigr\|^2_2 - \sum^{C}_{c=1} \|n \mu_c\|^2_2
\end{aligned}
\]
where we define the class-mean embeddings as $\mu_c = \tfrac{1}{n} \sum_{i:y_i=c} z_i$.
Substituting this identity into \eqref{eq:logsum_bound} yields
\begin{small}
\begin{equation}\label{eq:nu}
\begin{aligned}
\frac{1}{N}\sum_{i=1}^{N}\log\Biggl(\sum_{j:y_j\neq y_i}\exp\bigl(\langle z_i, z_j\rangle\bigr)\Biggr)
~&\geq~ \log(n(C-1)) + \frac{1}{C(C-1)}\Bigl(\Bigl\|\sum_{c=1}^{C}\mu_c\Bigr\|^2_2 - \sum^{C}_{c=1} \|\mu_c\|^2_2\Bigr).
\end{aligned}
\end{equation}
\end{small}
By Jensen's inequality,
\begin{equation}\label{eq:norm_one_u}
\|\mu_c\|^2_2 ~=~ \Bigl\|\sum_{i:y_i=c}z_i\Bigr\|^2_2 ~\leq~ \tfrac{1}{n} \sum_{i:y_i=c}\|z_i\|^2_2 ~\leq~ 1,
\end{equation}
it follows that
\begin{equation}
\sum_{i=1}^{N}\sum_{j:y_j\neq y_i}\langle z_i, z_j\rangle ~\geq~ n^2\Bigl\|\sum_{c=1}^{C}\mu_c\Bigr\|^2_2 - n^2C
\end{equation}
Substituting this estimate into \eqref{eq:logsum_bound} gives
\begin{small}
\[
\frac{1}{N}\sum_{i=1}^{N}\log\Biggl(\sum_{j:y_j\neq y_i}\exp\bigl(\langle z_i, z_j\rangle\bigr)\Biggr) ~\ge~ \log(n(C-1)) + \frac{1}{C(C-1)}\Bigl(\Bigl\|\sum_{c=1}^{C}\mu_c\Bigr\|^2_2 - C\Bigr).
\]
\end{small}
Substituting the bound \eqref{eq:nu} into the expression for $\mathcal{Q}(Z)$ (the loss) results in
\begin{equation}\label{eq:J_lower_bound}
\mathcal{Q}(Z) ~\geq~ -\frac{1}{N}\sum_{i=1}^{N}\|z_i\|_2^2 + \log(n(C-1)) + \frac{1}{C(C-1)}\Bigl(\Bigl\|\sum_{c=1}^{C}\mu_c\Bigr\|^2_2 - C\Bigr).
\end{equation}
Since 
\begin{equation}\label{eq:norm_one} 
-\frac{1}{N}\sum_{i=1}^{N}\|z_i\|_2^2 ~\ge~ -1 \quad \mathrm{and} \quad \Bigl\|\sum_{c=1}^{C}\mu_c\Bigr\|^2_2 ~\ge~ 0, 
\end{equation} 
we obtain the lower bound 
\begin{equation}\label{eq:Q_lower_bound} 
\mathcal{Q}(Z) ~\ge~ \log\bigl(n(C-1)\bigr) - 1 - \frac{1}{C-1}. 
\end{equation}

Equality in \eqref{eq:Q_lower_bound} is achieved only if all the preceding inequalities hold as equalities. In particular, equality in \eqref{eq:norm_one} requires $\sum_{c=1}^{C}\mu_c=0$ and $\forall i \in [N]:\|z_i\|_2=1$. Similarly, equality in \eqref{eq:norm_one_u} (used in the derivation) forces $\Bigl\|\sum_{i:y_i=c}z_i\Bigr\|^2_2 = \tfrac{1}{n} \sum_{i:y_i=c}\|z_i\|^2_2$ which, via Jensen's inequality, implies that $z_i=\mu_{y_i}$ for all $i \in [N]$. Furthermore, equality in \eqref{eq:AGM} is attained if and only if
\[
\forall i: \exp\bigl(\langle z_i, z_j\rangle\bigr) \text{ is constant w.r.t. $j$ such that } y_j\neq y_i.
\]
In particular, by applying natural logarithm in both sides, and considering that fact that $z_i = \mu_{y_i}$
\[
\forall c \in [C]: \langle \mu_c, \mu_{c'}\rangle \text{ is constant for all $c' \neq c$}.
\]
In particular, $\langle \mu_i, \mu_j\rangle=\langle \mu_r, \mu_j\rangle = \langle \mu_r, \mu_t\rangle$ and we have the more general equation:
\[
\langle \mu_c, \mu_{c'}\rangle \text{ is constant for all $c \neq c' \in [C]$}.
\]
Together, these conditions exactly characterize a simplex equiangular tight frame (ETF); in particular, they imply that
\[
\|\mu_c\|_2 ~=~ 1,\quad \sum_{c=1}^{C}\mu_c ~=~ 0,\quad \mathrm{and} \quad \langle \mu_c, \mu_{c'}\rangle ~=~ -\frac{1}{C-1} ~\mathrm{ for all }~ c\neq c' \in [C].
\]
A straightforward calculation shows that
\begin{equation*}
\mathcal{Q}(Z) ~=~ -1 + \frac{1}{C} \sum_{i=1}^{C} \log\Bigl((C-1)\exp\Bigl(-\tfrac{1}{C-1}\Bigr)\Bigr) ~=~ \log(C-1) - 1 - \frac{1}{C-1},
\end{equation*}
so that the lower bound in \eqref{eq:J_lower_bound} is attained. Consequently, a set of vectors \( Z \) is a global minimizer of \( \mathcal{Q}(Z) \) if and only if it forms a simplex equiangular tight frame.
\end{proof}

\end{document}